%% file: arxiv.tex
\documentclass[11pt]{article}
\usepackage[margin=1.0in]{geometry}
\usepackage[utf8]{inputenc}            %
\usepackage[T1]{fontenc}               %
\usepackage{microtype} %
\usepackage{graphicx}
\usepackage{subcaption}
\usepackage{booktabs} %
\usepackage{wrapfig} %
\usepackage[dvipsnames]{xcolor}
\usepackage{natbib}
\setcitestyle{authoryear,round,citesep={;},aysep={,},yysep={;}}
\let\cite\citep

\usepackage[
    colorlinks=true,
    linkcolor=RoyalPurple,   %
    citecolor=Blue,     %
    urlcolor=Violet,     %
    linktoc=page
]{hyperref}

\usepackage{amsmath}
\usepackage{amssymb}
\usepackage{mathtools}
\usepackage{amsthm}
\allowdisplaybreaks
\usepackage[capitalize,noabbrev]{cleveref}

\usepackage{tikz}
\usetikzlibrary{matrix, decorations.pathreplacing, calc, positioning}
\theoremstyle{plain}
\newtheorem{theorem}{Theorem}[section]
\newtheorem{proposition}[theorem]{Proposition}
\newtheorem{lemma}[theorem]{Lemma}
\newtheorem{corollary}[theorem]{Corollary}
\theoremstyle{definition}
\newtheorem{definition}[theorem]{Definition}
\newtheorem{assumption}[theorem]{Assumption}
\newtheorem{example}[theorem]{Example}
\theoremstyle{remark}
\newtheorem{remark}[theorem]{Remark}

\usepackage{bm, bbm, colonequals, makecell}
\usepackage{enumitem}
\definecolor{purplered}{RGB}{220,38,127} %
\definecolor{lightblue}{RGB}{100,143,255} %
\usepackage[many]{tcolorbox}

\NewTColorBox{MyBox}{ s O{!htbp} }{%
  floatplacement={#2},
  IfBooleanTF={#1}{float*,width=\textwidth}{float},
  colframe=blue!50!black,colback=blue!4!white
  }

\usepackage{float}
\usepackage{algorithm}
\usepackage{algorithmic}

\newcounter{algstep}
\makeatletter
\newcommand{\linelabel}[1]{\refstepcounter{algstep}\edef\@currentlabel{\thealgstep}\label{#1}}
\makeatother

\newcommand{\myfbox}[1]{\fbox{\small\strut #1}}

\newcommand{\nout}{n_{\mathrm{out}}}
\newcommand{\nin}{n_{\mathrm{in}}}
\newcommand{\kout}{k_{\mathrm{out}}}
\newcommand{\kin}{k_{\mathrm{in}}}
\newcommand{\dout}{d_{\mathrm{out}}}
\newcommand{\din}{d_{\mathrm{in}}}
\newcommand{\Nout}{N_{\mathrm{out}}}
\newcommand{\Nin}{N_{\mathrm{in}}}
\newcommand{\NN}{\mathbb{N}}

\newcommand{\RR}{\mathbb{R}}	

\newcommand{\dnorm}[1]{\mathcal{N}\left(#1\right)} %
\newcommand{\one}{\mathbf{1}}
\newcommand{\Ex}{\mathop{\mathbb{E}}}
\newcommand{\cov}[1]{\text{Cov}{\left(#1\right)}}
\newcommand{\asto}{\stackrel{a.s.}{\to}}
\DeclareMathOperator{\Tr}{Tr}

\newcommand{\by}{\bm{y}}
\newcommand{\nesort}{\textnormal{\textsc{Ne$\otimes$or$\top$}}\,}
\newcommand{\ket}[1]{\left\lvert {#1}\right\rangle}
\newcommand{\hatket}[1]{%
  \left\lvert #1 \right.\!
  \vbox{\offinterlineskip %
    \ialign{\hfil##\hfil\cr %
      $\hfil\hat{}\hfil$\cr %
      \noalign{\kern-0.8ex} %
      $\left. \vphantom{#1} \right\rangle$\cr %
    }%
  }%
}

\newcommand{\dotket}[1]{%
  \left\lvert #1 \right.\!
  \vbox{\offinterlineskip
    \ialign{\hfil##\hfil\cr
      $\hfil\dot{}\hfil$\cr
      \noalign{\kern-0.9ex}
      $\left. \vphantom{#1} \right\rangle$\cr
    }%
  }%
}
\newcommand{\braket}[1]{\left\langle{#1}\right\rangle}
\newcommand{\Wl}{\ensuremath{W^{(\ell)}}}

\newcommand{\Wlup}{\ensuremath{{W^\uparrow}^{(\ell)}}}
\newcommand{\Wlsub}[1]{\ensuremath{{W}^{(\ell)}_{#1}}}
\newcommand{\Wlupsub}[1]{\ensuremath{{W^\uparrow_{#1}}^{(\ell)}}}

\newcommand{\hl}{\ensuremath{h^{(\ell)}}}
\newcommand{\hlup}{\ensuremath{{h^{\uparrow}}^{(\ell)}}}
\newcommand{\xl}{\ensuremath{x^{(\ell)}}}
\newcommand{\xlup}{\ensuremath{{x^{\uparrow}}^{(\ell)}}}
\newcommand{\dxl}{\ensuremath{dx^{(\ell)}}}

\newcommand{\dxlup}{\ensuremath{{dx^{\uparrow}}^{(\ell)}}}

\newcommand{\dhl}{\ensuremath{dh^{(\ell)}}}

\newcommand{\dhlup}{\ensuremath{{dh^{\uparrow}}^{(\ell)}}}

\newcommand{\dWl}{\ensuremath{dW^{(\ell)}}}
\newcommand{\dWlsub}[1]{\ensuremath{{dW}_{#1}^{(\ell)}}}
\newcommand{\dWlup}{\ensuremath{{dW^{\uparrow}}^{(\ell)}}}
\newcommand{\dWlupsub}[1]{\ensuremath{{dW^{\uparrow}_{#1}}^{(\ell)}}}

\newcommand{\gammal}{\gamma^{(\ell)}}
\newcommand{\gammaupl}{{\gamma^\uparrow}^{(\ell)}}
\newcommand{\epsilonl}{\varepsilon^{(\ell)}}
\newcommand{\epsilonupl}{{\varepsilon^\uparrow}^{(\ell)}}
\newcommand{\lambdal}{\lambda^{(\ell)}}
\newcommand{\lambdaupl}{{\lambda^\uparrow}^{(\ell)}}

\newcommand{\sigmadeltalbar}{\overline{\sigma}_\Delta}

\def\isArxiv{1}
\usepackage{circledsteps}
\newcommand{\clabel}[1]{\text{\Circled{#1}}}

\usepackage{authblk}

\renewcommand{\algorithmicrequire}{\textbf{Input:}}
\renewcommand{\algorithmicensure}{\textbf{Output:}}

\newcommand{\bigmu}{%
  \tikz[baseline=(char.base)]{
    \node[yscale=1.43, xscale=1.15, inner sep=0pt] (char) {$\mu$};
  }%
}
\title{\bigmu pscaling Small Models: \\Principled Warm Starts and Hyperparameter Transfer}
\author{%
Yuxin Ma \qquad Nan Chen \qquad Mateo D\'iaz \qquad
Soufiane Hayou\\  Dmitriy Kunisky \qquad Soledad Villar
}
\affil{Department of Applied Mathematics and Statistics \\ Johns Hopkins University}

\date{}
\begin{document}
\maketitle
\begin{abstract}
    \input{abstract}
\end{abstract}
\newpage
\tableofcontents
\newpage

\input{arxiv-body}
\section*{Acknowledgments}
\addcontentsline{toc}{section}{Acknowledgments}
YM was funded by NSF BSF 2430292 and Amazon AI fellowship.
MD was partially supported by NSF awards CCF 2442615 and DMS 2502377.
SV and NC were partially funded by the NSF–Simons
Research Collaboration on the Mathematical and Scientific Foundations of Deep Learning (MoDL) (NSF
DMS 2031985). SV was also funded by NSF CAREER 2339682, NSF CCF 2212457, and NSF BSF 2430292.
\addcontentsline{toc}{section}{References}
\bibliographystyle{plainnat}
\bibliography{bibliography}

\appendix
\input{appendix}

\end{document}

%% file: abstract.tex
Modern large-scale neural networks are often trained and released in multiple sizes to accommodate diverse inference budgets. 
To improve efficiency, recent work has explored \emph{model upscaling}: initializing larger models from trained smaller ones to accelerate convergence. 
However, this method can be sensitive to hyperparameters that need to be tuned at the target upscaled model size, which is prohibitively costly to do directly.
It remains unclear whether tuning hyperparameters on smaller models and extrapolating via scaling laws is sound in this setting.
We address this with principled approaches to width-based upscaling and efficient hyperparameter tuning in this setting.
Motivated by $\mu$P and any-dimensional architectures, we introduce a general upscaling method that, like Net2Net, copies and perturbs weights, but uses theoretically grounded, width-dependent scalings for the perturbation noise and optimizer hyperparameters.
First, we prove that under zero perturbation, the upscaled model is functionally equivalent to the base model throughout training.
Second, we extend the $\mu$P theory to enable infinite-width limit analysis and establish hyperparameter transfer for upscaled models, greatly reducing the tuning cost.
We empirically demonstrate that this method is effective on realistic datasets and architectures.

%% file: arxiv-body.tex
\section{Introduction}
Modern neural network workflows typically involve training families of models at multiple scales.
In the early, exploratory stage, smaller models enable rapid prototyping and scaling laws analysis, helping guide architecture choices and forecast the performance of larger variants.
Later, for a final release, researchers devote substantially more compute to train much larger models of varying sizes, often shipped as suites to meet diverse downstream hardware, latency, and cost constraints (as, e.g., for LLaMA or GPT).
As models of varying scale are typically trained from scratch, this workflow involves a wasteful cycle of disposal and re-training.
A natural question is whether smaller trained models can be \emph{upscaled} to warm-start the training of larger ones, transferring information across scales to reduce the total compute required.

Several strategies have been proposed (e.g.~\citet{chen2015net2net, gong2019efficient, chen2022bert2bert, kim2024solar}), but they are largely empirical or heuristic in nature.
The main theoretical ingredient is \emph{function-preserving weight transformations}: expanding a pre-trained smaller model into a larger one that produces identical outputs for any input, thereby immediately inheriting the smaller model’s performance before further optimization.
However, the training dynamics of upscaled models could be quite different from those of large models trained from scratch.
As a result, to use upscaling effectively requires finding good choices of several hyperparameters that are highly dependent on the model architecture and optimizer.
In practice, hyperparameter tuning at the scale of modern large models is infeasible, so hyperparameters for training large models are typically chosen by extrapolating from smaller models using scaling laws.
Unfortunately, upscaling appears \emph{a priori} to be incompatible with this approach, since training a single model with upscaling involves training \emph{both} at small and then at large scales.
Previous work has relied only on informal heuristics for hyperparameter tuning in the presence of upscaling to address this issue. 
\begin{MyBox}
In this work, we develop a theoretical framework for model upscaling, applicable across architectures and optimizers, that enables efficient tuning via \emph{hyperparameter transfer} across scales.
\end{MyBox}

As our \textbf{first contribution}, presented in Section~\ref{sec:equivalence}, we introduce the theory of \emph{dynamic equivalence}.
We show that, by combining a function-preserving weight transformation (duplicating and scaling weights) with a coordinated rescaling of the learning rate and other optimizer hyperparameters, the wider model computes the same function not only at initialization but also throughout the training process.
This extends the existing theoretical underpinning of model upscaling to encompass the training trajectory of the wider model.
We further show that the required hyperparameter rescalings are closely connected to the Maximal Update Parametrization ($\mu$P) of~\citet{yang2020feature,yang2023tensor}; see Section~\ref{sec:general-architectures}.

As our \textbf{second contribution}, presented in Section~\ref{sec:upscaling}, we propose a principled upscaling algorithm grounded in the above theory.
Intuitively, dynamic equivalence implies that the training dynamics of the narrower model evolve within a lower-dimensional subspace of the wider model's ambient parameter space.
Accordingly, after constructing a dynamically equivalent wider model, we inject a small symmetry-breaking noise into the parameters so that the upscaled dynamics can escape this lower-dimensional subspace and exploit the additional model capacity; see Figure~\ref{fig:adding_noise}.
While the resulting ``copy-and-perturb'' procedure is mechanically similar to Net2WiderNet~\cite{chen2015net2net}, our theory-guided choices of noise and hyperparameter scaling go substantially beyond it (see Appendix~\ref{appen:related-work} for a detailed comparison) and yield the following guarantees:
\begin{enumerate}[leftmargin=*,itemsep=0pt,topsep=0pt,parsep=0pt,label=(\arabic*)]
    \item With zero injected noise, the wider model exactly behaves as the base model would, as if upscaling had not been applied (a direct consequence of the dynamic equivalence theory in Section~\ref{sec:equivalence}).
    \item In the infinite-width limit, the injected noise is provably ``safe'' and ``useful'': the base-model signal is preserved, and the injected noise contributes meaningfully without vanishing or exploding.
    \item The procedure enables $\mu$P-grounded zero-shot hyperparameter transfer: one can tune hyperparameters on narrow upscaled models, and the optimal values transfer directly to large-scale upscaling (see Figure~\ref{fig:upscaling_hp_transfer}).
    To our knowledge, this is the first rigorous framework for hyperparameter transfer in the upscaling setting.
\end{enumerate}

As our \textbf{third contribution}, presented in Section~\ref{sec:choose-hyperparam-upscaling} (with details in Appendix~\ref{appen:inf-width-limit}), we use the Tensor Programs machinery~\cite{yang2023tensor} to rigorously characterize the infinite-width limit of upscaled training for common architectures.
This characterization provides the formal basis for guarantees (2) and (3) above and enables further theoretical analysis of training dynamics involving upscaling.

To facilitate adoption, we extend the $\mu$P software package~\cite{yang2022tensor} so that upscaling can be easily applied to any standard architecture, including multi-layer perceptrons (MLPs), ResNets, and transformers.
In Section~\ref{sec:experiments}, we present experiments demonstrating the performance of our upscaling algorithm across various architectures and datasets, and illustrate hyperparameter transfer numerically.
Our implementation and experiments are publicly available at \url{https://github.com/yuxinma98/mupscaling}.
A detailed discussion of related work is deferred to Appendix~\ref{appen:related-work}.

While we focus here on width-wise upscaling, extending to depth is a natural next step that we leave to future work (see, e.g.,~\citealt{yang2023tensorvi,bordelon2023depthwise} for hyperparameter transfer across depths).

\begin{figure*}[t]
    \centering
    \begin{subfigure}[t]{0.485\textwidth}
        \vspace{0pt}
        \centering
        \includegraphics[width=\linewidth]{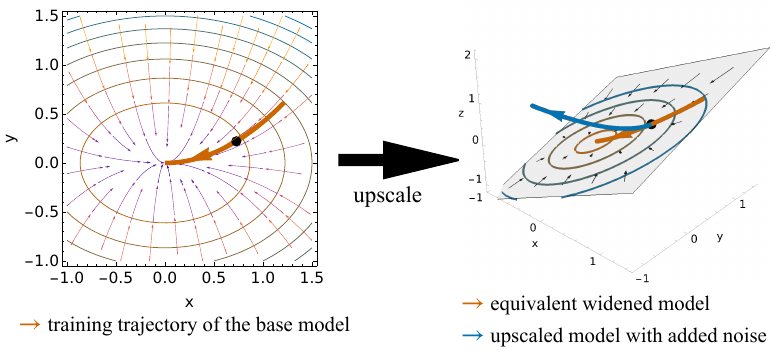}
        \caption{Schematic of the upscaling method: without noise, the equivalent widened model remains confined to the low-dimensional weight subspace and follows the same trajectory; adding a small noise allows it to escape and exploit the additional capacity of the higher-dimensional weight space.}
        \label{fig:adding_noise}
    \end{subfigure}
    \hfill
    \begin{subfigure}[t]{0.485\textwidth}
        \vspace{0pt}
        \centering
        \includegraphics[width=\linewidth]{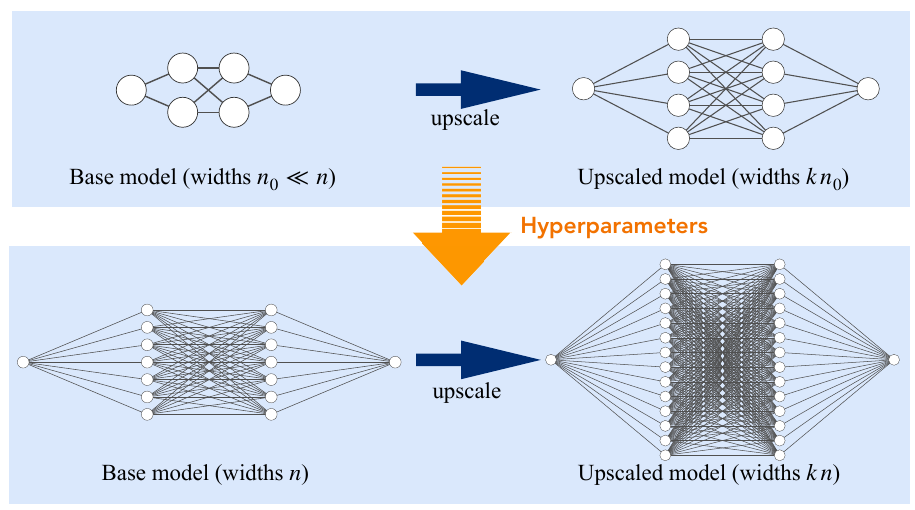}
        \caption{Schematic of the hyperparameter transfer method: tuning hyperparameters on a smaller system of upscaled models, and transferring them to a larger one.}
        \label{fig:upscaling_hp_transfer}
    \end{subfigure}\caption{Illustration of upscaling method and hyperparameter transfer method.}
    \label{fig:combined_upscaling_noise}
\end{figure*}

\paragraph{Notation.}
We use the symbol $\odot$ to denote the Hadamard (entry-wise) product and $\otimes$ to denote the Kronecker product.
In particular, if we take a matrix $A\in\RR^{m\times n}$ and the matrix of all ones %
$\one_k\one_\ell^\top \in \RR^{k\times \ell}$,
then, $A\otimes(\one_k\one_\ell^\top)$ is a matrix of size $(mk)\times(n\ell)$, that replaces each entry $A_{i,j}$ of $A$ with a $k\times \ell$ block filled with $A_{i,j}$.
In other words, $A\otimes(\one_k\one_\ell^\top)$ corresponds to \texttt{A.repeat\_interleave(k, dim=0).repeat\_interleave(l, dim=1)} in PyTorch.
This operation is depicted in the following diagram
\begin{center}
\begin{tikzpicture}[
    block/.style={draw, minimum size=1.2cm, align=center, font=\scriptsize, fill=white},
    eb_dots/.style={minimum size=1.2cm, font=\scriptsize, anchor=center, align=center}
]

    \matrix (A) [matrix of math nodes, 
                left delimiter={[}, right delimiter={]}, 
                column sep=0.3cm, row sep=0.3cm,
                nodes={font=\scriptsize}] {
        A_{1,1} & \dots & A_{1,n} \\
        \vdots & \ddots & \vdots \\
        A_{m,1} & \dots & A_{m,n} \\
    };
    
    \node[above=0.08cm of A, font=\scriptsize\bfseries] (labelA) {Matrix $A$ ($m\times n$)};

    \draw[-stealth, very thick] ($(A.east)+(0.5,0)$) -- ($(A.east)+(1.8,0)$) 
        node[midway, above=2pt, font=\scriptsize] {$\otimes \one_k\one_\ell^\top$};

    \begin{scope}[shift={(5.3cm, 1.2cm)}]
        \node[block] (b11) at (0,0) {$A_{1,1}$ \\ block};
        \node[eb_dots, right=-\pgflinewidth of b11] (hd) {$\dots$};
        \node[block, right=-\pgflinewidth of hd] (b1n) {$A_{1,n}$ \\ block};
        
        \node[eb_dots, below=-\pgflinewidth of b11] (vd) {$\vdots$};
        \node[eb_dots, below=-\pgflinewidth of hd] (dd) {$\ddots$};
        \node[eb_dots, below=-\pgflinewidth of b1n] (vd2) {$\vdots$};
        
        \node[block, below=-\pgflinewidth of vd] (bn1) {$A_{m,1}$ \\ block};
        \node[eb_dots, right=-\pgflinewidth of bn1] (hd2) {$\dots$};
        \node[block, right=-\pgflinewidth of hd2] (bnn) {$A_{m,n}$ \\ block};

        \coordinate (LTop) at ($(b11.north west)+(-0.5, 0.08)$);
        \coordinate (LBot) at ($(bn1.south west)+(-0.5, -0.08)$);
        \coordinate (RTop) at ($(b1n.north east)+(0.15, 0.08)$);
        \coordinate (RBot) at ($(bnn.south east)+(0.15, -0.08)$);

        \draw[thick] ($(LTop)+(0.18,0)$) -- (LTop) -- (LBot) -- ($(LBot)+(0.18,0)$);
        \draw[thick] ($(RTop)+(-0.18,0)$) -- (RTop) -- (RBot) -- ($(RBot)+(-0.18,0)$);

        \draw [decorate, decoration={brace, amplitude=2.5pt, raise=1.8pt}] 
            (b11.north west) -- (b11.north east) node [midway, above=4pt, font=\tiny] {$\ell$ columns};
        \draw [decorate, decoration={brace, amplitude=2.5pt, raise=1.8pt}] 
            (b11.south west) -- (b11.north west) node [midway, left=4pt, font=\tiny, rotate=90, anchor=south] {$k$ rows};
            
        \node[above=0.5cm of hd.north, font=\scriptsize\bfseries] {Expanded Matrix ($mk\times n\ell$)};
    \end{scope}
\end{tikzpicture}.
\end{center}

\section{Equivalent models of different widths}\label{sec:equivalence}
Our goal is to use pre-trained small models as a warm start for training larger models.
Prior work has approached this via function-preserving weight transformations~\cite{chen2015net2net}, which establish a \emph{static} equivalence between models of different widths: the narrow and wide models parametrize exactly the same function.
In this section, we strengthen this to a \emph{dynamic} equivalence, which concerns an architecture--optimizer pair and means that two models of different widths follow identical training trajectories in function space---that is, they parametrize the same function at every training step.

These two perspectives are unified by the framework of Tensor Programs, which we leverage to obtain general results in Section~\ref{sec:general-architectures}.
As a warm-up, in Section~\ref{sec:MLP-SGD-equivalence} we first illustrate both notions separately using a bias-free MLP trained with vanilla stochastic gradient descent (SGD).
The subsequent subsections generalize this analysis, first to other optimization methods and then to general architectures.

\subsection{Warm-up: MLP trained with SGD}\label{sec:MLP-SGD-equivalence}
Consider an $L$-layer MLP parametrizing a function from $\RR^{\din}$ to $\RR^{\dout}$ that maps $x^{(0)} \mapsto h^{(L)}$ by recursing
\begin{align}\begin{split}\label{eq:forward_pass}
    \hl &= \Wl x^{(\ell-1)} \in \RR^{n_{\ell}},\qquad \xl = \phi(\hl) \in \RR^{n_\ell}, \qquad \text{for } \ell = 1, 2, \dots, L,
    \end{split}
\end{align}
where the input and output dimensions  are $n_0 = \din$ and $n_L = \dout$, respectively,
$(\Wl \in \RR^{n_\ell \times n_{\ell - 1}})_{\ell=1}^L$ are the layer-wise weight matrices, and $\phi$ is the activation function applied elementwise at each layer.
Fixing $n_0$ and $n_L$ and increasing the hidden widths $n_1,\dots,n_{L-1}$ yields wider, more expressive MLP models.

We first show that duplicating and appropriately rescaling the weight matrices of any base MLP produces a widened MLP that parametrizes the exact same function.
This construction is a special case of the Net2WiderNet operation of \citet{chen2015net2net}, adapted to our notation.

\begin{proposition}[Static equivalence of MLPs]\label{prop:narrow-wide-equivalence}
Consider a base MLP with weight matrices $\big(\Wl \in \RR^{n_{\ell} \times n_{\ell-1}}\big)_{\ell=1}^{L}$.
Construct a widened MLP that uses the same activation function and preserves the input and output dimensions, with weights obtained by duplicating and rescaling those of the base MLP as
\begin{equation}\label{eq:widened}
\big(\Wlup \colonequals k_{\ell-1}^{-1}\,\Wl \otimes \one_{k_\ell}\one_{k_{\ell-1}}^\top \in \RR^{N_{\ell} \times N_{\ell-1}}\big)_{\ell=1}^{L},
\end{equation}
where $N_\ell = k_\ell n_\ell$ for all $\ell$ and the width multipliers $k_\ell \in \mathbb{N}$ satisfy $k_0 = k_L = 1$. 
Then, for any input $x^{(0)} \in \RR^{\din}$, both networks produce identical outputs  $h^{(L)}\in \RR^{\dout}$ and thus parametrize the same function.
\end{proposition}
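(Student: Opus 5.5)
The plan is to prove, by induction on the layer index $\ell$, that the hidden activations of the widened network are exactly the duplicated activations of the base network. Concretely, the invariant is
\[
{h^{\uparrow}}^{(\ell)} = \hl \otimes \one_{k_\ell}, \qquad {x^{\uparrow}}^{(\ell)} = \xl \otimes \one_{k_\ell}, \qquad \ell = 0,1,\dots,L,
\]
where each entry of $\hl$ is repeated $k_\ell$ times consecutively; this is the same interleaved ordering used in the definition of $\Wl \otimes \one_{k_\ell}\one_{k_{\ell-1}}^\top$. The base case $\ell=0$ is immediate, since $k_0=1$ and both networks receive the same input $x^{(0)}$.

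For the inductive step, assume ${x^{\uparrow}}^{(\ell-1)} = x^{(\ell-1)} \otimes \one_{k_{\ell-1}}$. By the mixed-product property of the Kronecker product,
\[
{h^{\uparrow}}^{(\ell)} = k_{\ell-1}^{-1}\big(\Wl \otimes \one_{k_\ell}\one_{k_{\ell-1}}^\top\big)\big(x^{(\ell-1)}\otimes \one_{k_{\ell-1}}\big) = k_{\ell-1}^{-1}\big(\Wl x^{(\ell-1)}\big)\otimes\big(\one_{k_\ell}\one_{k_{\ell-1}}^\top\one_{k_{\ell-1}}\big).
\]
Since $\one_{k_{\ell-1}}^\top\one_{k_{\ell-1}} = k_{\ell-1}$, the scalar factor cancels the prefactor $k_{\ell-1}^{-1}$, which gives $\hl \otimes \one_{k_\ell}$. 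To avoid relying on the Kronecker identity for the interleaved layout, one can check this entrywise instead. Index a wide row as $(i,a)$ with $a\in[k_\ell]$ and a wide column as $(j,b)$ with $b\in[k_{\ell-1}]$. The $(i,a)$-entry is then $\sum_j\sum_b k_{\ell-1}^{-1}W^{(\ell)}_{ij}x^{(\ell-1)}_j = h^{(\ell)}_i$.

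Next, because $\phi$ acts elementwise, it commutes with duplication: $\phi(\hl\otimes\one_{k_\ell}) = \phi(\hl)\otimes\one_{k_\ell}$. This closes the induction for ${x^{\uparrow}}^{(\ell)}$.

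At the last layer $\ell=L$ we have $k_L=1$, so ${h^{\uparrow}}^{(L)} = h^{(L)}$ for every input. The two networks therefore compute the same function.

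There is no real obstacle in this argument. The only point needing care is the bookkeeping of indices: the duplication in the weight matrix must use the same ordering as the duplication of the activations, namely the \texttt{repeat\_interleave} convention from the Notation paragraph. The entrywise index computation settles this.
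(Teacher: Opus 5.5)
Your proposal is correct and follows essentially the same route as the paper: a layer-wise induction on the invariant ${x^\uparrow}^{(\ell)} = \xl \otimes \one_{k_\ell}$ via the mixed-product identity, with the elementwise activation preserving duplication. You also add a harmless entrywise check and note explicitly that $k_L = 1$ gives ${h^\uparrow}^{(L)} = h^{(L)}$.
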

\begin{proof}
    We check by induction over layers that the activations of the widened MLP, denoted by $\xlup \in \RR^{N_\ell}$, are duplicated versions of those of the base MLP, denoted by $\xl\in\RR^{n_\ell}$. Specifically, assume ${x^\uparrow}^{(\ell-1)} = x^{(\ell-1)} \otimes \one_{k_{\ell-1}}$. Then
    \begin{align*}
        \hlup
    &= \Wlup {x^\uparrow}^{(\ell-1)} \\
    &= {k^{-1}_{\ell-1}} \big(\Wl \otimes \one_{k_\ell}\one_{k_{\ell-1}}^\top \big)\big(x^{(\ell-1)} \otimes \one_{k_{\ell-1}}\big)\\
    &= h^{(\ell)} \otimes \one_{k_\ell},
    \end{align*}
    and hence $\xlup = \xl \otimes \one_{k_\ell}$ as well.
\end{proof}

Following the forward pass, backpropagation for the MLP defined in \eqref{eq:forward_pass} proceeds as follows:
\begin{align}\label{eq:backprop}
\begin{split}
    dh^{(L)} &= \nabla_{h^{(L)}} \mathcal{L} \in \RR^{\dout},  \\
    dx^{(\ell-1)} &= (\Wl)^\top d\hl \in \RR^{n_{\ell-1}},  \\
    dh^{(\ell-1)} &= dx^{(\ell-1)} \odot \phi'(h^{(\ell-1)}) \in \RR^{n_{\ell-1}},  \\
    \dWl &= d\hl (x^{(\ell-1)})^\top \in \RR^{n_\ell \times n_{\ell-1}}, \quad \text{for } \ell = L, L-1, \dots, 1,
\end{split}
\end{align}
where $\mathcal{L}$ denotes the loss, and each gradient  $d\bullet$ equals $\frac{\partial \mathcal{L}}{\partial \bullet}$. 
Applying SGD with layer-wise learning rate $\gammal$ to $\Wl$, the weights are updated at each training step $t$ by
\[\Wlsub{t+1} \colonequals \Wlsub{t} - \gammal \dWlsub{t},\]
where $\Wlsub{t},\dWlsub{t}$ denote the weight and the gradient at $t$-th training step respectively.
We further show that if the per-layer learning rates of the widened MLP are chosen as a specific rescaling of those in the base MLP, then the two equivalent models undergo equivalent SGD updates and hence follow identical training trajectories in function space.

\begin{proposition}[Dynamic equivalence of MLPs trained with SGD]\label{prop:MLP-lr-equivalent-updates}
Suppose we have a base MLP with weights $\big(\Wl \in \RR^{n_{\ell}\times n_{\ell-1}}\big)_{\ell=1}^{L}$ trained by SGD with per-layer learning rates $\gamma^{(\ell)}$.
Construct a widened MLP with the same activation function and with weights 
$\big(\Wlup \colonequals {k^{-1}_{\ell-1}}\, \Wl \otimes \one_{k_\ell}\one_{k_{\ell-1}}^\top \in \RR^{N_{\ell}\times N_{\ell-1}}\big)_{\ell=1}^{L}$, where $N_\ell = k_\ell n_\ell$ for all $\ell$ and $k_0 = k_L = 1$, and train it by SGD using per-layer learning rates $\gammaupl \colonequals {k_\ell}{k^{-1}_{\ell-1}}\gamma^{(\ell)}$.
Then, for all training steps $t \ge 0$, the weights satisfy
\begin{equation}\label{eq:widening_relation}
\Wlupsub{t}\;=\; {k^{-1}_{\ell-1}}\, \Wlsub{t} \otimes \one_{k_\ell}\one_{k_{\ell-1}}^\top
\end{equation}
for all $\ell=1,\dots,L$, and therefore both networks parametrize the same function at every step (assuming they access the same data and randomness).

\end{proposition}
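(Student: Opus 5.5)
We argue by induction on the training step $t$, carrying the widening relation \eqref{eq:widening_relation} as the inductive hypothesis. The base case $t=0$ holds by construction. For the inductive step, assume \eqref{eq:widening_relation} holds at step $t$ for all layers. By Proposition~\ref{prop:narrow-wide-equivalence}, and more precisely by its proof, the forward pass of the widened MLP on the same minibatch satisfies
\[
{h^\uparrow}^{(\ell)}_t = h^{(\ell)}_t\otimes\one_{k_\ell}, \qquad {x^\uparrow}^{(\ell)}_t = x^{(\ell)}_t\otimes\one_{k_\ell}
\]
for every $\ell$. In particular the outputs coincide because $k_L=1$. Hence the losses agree, and so does $dh^{(L)}$.

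Next we run a backward induction over layers through \eqref{eq:backprop} to show that ${dh^\uparrow}^{(\ell)} = k_\ell^{-1}\,(dh^{(\ell)}\otimes \one_{k_\ell})$. The base case $\ell = L$ holds since $k_L=1$. For the step, we use the Kronecker identity $(A\otimes \one_a\one_b^\top)^\top(v\otimes\one_a) = a\,(A^\top v)\otimes \one_b$. This gives
\[
{dx^\uparrow}^{(\ell-1)} = k_{\ell-1}^{-1}\big(\Wl\otimes\one_{k_\ell}\one_{k_{\ell-1}}^\top\big)^\top k_\ell^{-1}\big(d\hl\otimes\one_{k_\ell}\big) = k_{\ell-1}^{-1}\,dx^{(\ell-1)}\otimes\one_{k_{\ell-1}}.
\]
Because ${h^\uparrow}^{(\ell-1)}$ is a duplicate of $h^{(\ell-1)}$, the Hadamard product with $\phi'$ preserves this structure, which completes the backward induction. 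The weight gradient then follows from the mixed-product property:
\[
\dWlup = k_\ell^{-1}\,\big(d\hl\otimes\one_{k_\ell}\big)\big(x^{(\ell-1)}\otimes\one_{k_{\ell-1}}\big)^\top = k_\ell^{-1}\,\dWl\otimes\one_{k_\ell}\one_{k_{\ell-1}}^\top.
\]

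Finally we perform the SGD update with $\gammaupl = k_\ell k_{\ell-1}^{-1}\gammal$:
\[
\Wlupsub{t+1} = k_{\ell-1}^{-1}\Wlsub{t}\otimes\one\one^\top - k_\ell k_{\ell-1}^{-1}\gammal\, k_\ell^{-1}\dWlsub{t}\otimes\one\one^\top = k_{\ell-1}^{-1}\Wlsub{t+1}\otimes\one_{k_\ell}\one_{k_{\ell-1}}^\top,
\]
which closes the induction. Functional equality at every step then follows from Proposition~\ref{prop:narrow-wide-equivalence}.

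For minibatches, the same argument applies per example, and gradients are summed or averaged linearly, so the structure is preserved.

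The only delicate point is bookkeeping the scale factors in the backward pass. The $k_\ell$ factor from summing over duplicated rows in the transpose must combine with the $k_{\ell-1}^{-1}$ in the weights and the $k_\ell^{-1}$ in the incoming gradient to yield exactly $k_{\ell-1}^{-1}$. Verifying this cancellation is what justifies the learning-rate factor $k_\ell k_{\ell-1}^{-1}$.
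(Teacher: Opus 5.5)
Your proposal is correct and follows the paper's proof essentially step for step. Both argue by induction on $t$, get forward-pass duplication from Proposition~\ref{prop:narrow-wide-equivalence}, run a backward induction to obtain $\dhlup = k_\ell^{-1}\,\dhl\otimes\one_{k_\ell}$ and $\dWlup = k_\ell^{-1}\,\dWl\otimes\one_{k_\ell}\one_{k_{\ell-1}}^\top$, and then apply the rescaled SGD step. You also spell out the transposed Kronecker identity and the $k_\ell$ cancellation that the paper leaves as ``one can check by induction.''
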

Note that if the width multipliers are equal across dimensions ($k_1=\dots=k_{L-1}$), then this procedure leaves the learning rates of all hidden weights unchanged ($\gammaupl \colonequals \gammal$ for all $\ell=2,\dots,L-1$), and only modifies the learning rates for $W^{(1)}$ and $W^{(L)}$.
\begin{proof}
    By Proposition~\ref{prop:narrow-wide-equivalence}, prior to training (at $t=0$) the two MLPs parametrize the same function.
    Consequently, the gradient at the output layer matches, ${dh^\uparrow}^{(L)} = dh^{(L)}$, when computed on the same loss and data.
    One can check by induction that for $\ell=L-1,\dots,1$ the backpropagated signals satisfy 
    \[\dxlup = {k^{-1}_{\ell}} \dxl\otimes \one_{k_\ell}, 
    \qquad \dhlup = {k^{-1}_{\ell}} \dhl \otimes \one_{k_\ell},
    \qquad
    \dWlup = {k^{-1}_{\ell}}\dWl\otimes  \one_{k_\ell} \one_{k_{\ell-1}}^\top.\]
    With these identities, one SGD step on the widened weights yields
    \begin{align*}
        \Wlup - \gammaupl\dWlup 
        &= {k^{-1}_{\ell-1}} \Wl \otimes \one_{k_\ell}
    \one_{k_{\ell-1}}^\top  -  \left({k_\ell}{k^{-1}_{\ell-1}} \gamma^{(\ell)}\right)\left({k^{-1}_{\ell}}\dWl\otimes \one_{k_\ell} \one_{k_{\ell-1}}^\top\right)\\
            &={k^{-1}_{\ell-1}}  (\Wl - \gammal \dWl)\otimes \one_{k_\ell}\one_{k_\ell-1}^\top,
    \end{align*}
    which preserves the widening relation \eqref{eq:widening_relation}.
    By induction over $t$, \eqref{eq:widening_relation} holds for all steps.
\end{proof}

\subsection{Extension to general optimizers}\label{sec:general-optimizer-equivalence}
We proceed to extend the previous observation from vanilla SGD to general entrywise optimizers considered in~\citet{yang2023tensor}, where parameter updates depend on the current and past gradients.
This framework encompasses many commonly used optimizers, including Adam~\cite{kingma2014adam} and AdamW~\cite{loshchilov2017decoupled}.

\begin{definition}[Entrywise optimizer with weight decay]\label{def:entrywise-optimizer}
For a weight matrix $W \in \RR^{n \times m}$, an \emph{entrywise optimizer} (with learning rate $\gamma$) updates $W$ at training step $t$ according to the following rules for $\alpha \in [n]$ and $\beta \in [m]$.
\begin{itemize}
    \item Under weight decay with constant $\lambda$,
    \begin{align}\label{eq:entrywise_update}
    \begin{split}
        &(W_{t+1})_{\alpha,\beta} = (W_t)_{\alpha,\beta} - \gamma Q_t\left(\left(dW_0 + \lambda W_0\right)_{\alpha,\beta}, \dots, \left(dW_t + \lambda W_t\right)_{\alpha,\beta}; \varepsilon\right). 
    \end{split}
    \end{align}
    
    \item Under decoupled weight decay with constant $\lambda$,
    \begin{align}\label{eq:entrywise_update_decoupled}
    \begin{split}
        &(W_{t+1})_{\alpha,\beta} = (1-\lambda\gamma)(W_t)_{\alpha,\beta} - \gamma Q_t\left(\left(dW_0\right)_{\alpha,\beta}, \dots, \left(dW_t\right)_{\alpha,\beta}; \varepsilon\right). 
    \end{split}
    \end{align}
\end{itemize}

Here, $\varepsilon \in \RR^s$ refers to additional hyperparameters that may also be scaled, e.g., \texttt{eps} in the PyTorch implementation of Adam~\cite{paszke2017automatic}, and $Q_t: \RR^{t+1+s} \to \RR$ is an \emph{update function}, acting as a temporal filter of the gradient history, which can encode momentum and adaptivity. %
\end{definition}
Next, we show that for any such optimizer with a \emph{homogeneous} update function, it is possible to choose the learning rate, weight decay coefficient, and additional hyperparameters so that the widened MLP is dynamically equivalent. %

\begin{proposition}[Dynamic equivalence of MLPs trained with general optimizers]\label{prop:entrywise_update}
Consider an entrywise optimizer whose update function $Q_t$ is homogeneous of degree $m$ for all $t$,
i.e, for any $t \in \NN$, with $x_0, \dots, x_t \in \RR$ and $\varepsilon \in \RR^s$, we have that for all $a \in \RR$,
\[
Q_t(ax_0, \dots, ax_t; a \varepsilon) = a^m Q_t(x_0, \dots, x_t; \varepsilon).
\]
Suppose we have a base MLP with weights $\big(\Wl \in \RR^{n_{\ell}\times n_{\ell-1}}\big)_{\ell=1}^{L}$ trained by the above optimizer with per-layer learning rate $\gammal$, weight decay coefficient $\lambdal$, and additional hyperparameter $\epsilonl$.
Construct a widened MLP with the same activation function and with weights 
$\big(W^{(\ell)\uparrow} \colonequals {k^{-1}_{\ell-1}}\, \Wl \otimes \one_{k_\ell}\one_{k_{\ell-1}}^\top \in \RR^{N_{\ell}\times N_{\ell-1}}\big)_{\ell=1}^{L}$, where $N_\ell = k_\ell n_\ell$ for all $\ell$ and $k_0 = k_L = 1$, and train it with the same optimizer using the following hyperparameters:
\[\gammaupl \colonequals {k_\ell^m}{k^{-1}_{\ell-1}}\gamma^{(\ell)},
\qquad
\epsilonupl \colonequals {k^{-1}_\ell} \epsilonl,
\qquad\lambdaupl \colonequals
\begin{cases} 
{k_{\ell-1}}{k^{-1}_\ell}\lambdal & \text{for vanilla weight decay}, \\
{k_{\ell-1}}{k_\ell^{-m}}\lambdal & \text{for decoupled weight decay}.
\end{cases}
\]
Then, for all training steps $t \ge 0$, the weights satisfy
\[
\Wlupsub{t}\;=\; {k^{-1}_{\ell-1}}\, \Wlsub{t} \otimes \one_{k_\ell}\one_{k_{\ell-1}}^\top
\]
for all $\ell=1,\dots,L$, and therefore both networks parametrize the same function at every step.
\end{proposition}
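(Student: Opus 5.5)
The argument is an induction over training steps $t$, carried out in parallel with the proof of Proposition~\ref{prop:MLP-lr-equivalent-updates}. The inductive hypothesis is the widening relation $\Wlupsub{s} = k_{\ell-1}^{-1}\Wlsub{s}\otimes\one_{k_\ell}\one_{k_{\ell-1}}^\top$ for every $s\le t$ and every layer. Because the optimizer has memory, the hypothesis must cover the whole history of weights and gradients, not just the current step. The base case $t=0$ holds by construction.

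\textbf{Step 1: gradients.} Assume the relation holds at step $s$. By Proposition~\ref{prop:narrow-wide-equivalence}, the two networks compute the same output on the same data, so ${dh^\uparrow}^{(L)}=dh^{(L)}$. The backward-pass identities derived in the proof of Proposition~\ref{prop:MLP-lr-equivalent-updates} use only the widening relation at that step and not the optimizer. They therefore give
\[
\dWlupsub{s}=k_\ell^{-1}\,\dWlsub{s}\otimes\one_{k_\ell}\one_{k_{\ell-1}}^\top \quad\text{for all } s\le t.
\]

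\textbf{Step 2: the update for vanilla weight decay.} Fix an entry $(\alpha,\beta)$ of the widened matrix that lies in the block of base entry $(i,j)$. Its argument sequence is
\[
(dW^\uparrow_s+\lambda^\uparrow W^\uparrow_s)_{\alpha\beta}=k_\ell^{-1}(dW_s)_{ij}+\lambda^\uparrow k_{\ell-1}^{-1}(W_s)_{ij}.
\]
With $\lambda^\uparrow=k_{\ell-1}k_\ell^{-1}\lambda$, this equals $k_\ell^{-1}(dW_s+\lambda W_s)_{ij}$. The extra hyperparameter is $\varepsilon^\uparrow=k_\ell^{-1}\varepsilon$, so all arguments of $Q_t$, including $\varepsilon$, carry the common factor $a=k_\ell^{-1}$. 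Homogeneity then gives $Q_t^\uparrow=k_\ell^{-m}Q_t$. Multiplying by $\gamma^\uparrow=k_\ell^m k_{\ell-1}^{-1}\gamma$ gives a step of $k_{\ell-1}^{-1}\gamma Q_t$. This matches the $k_{\ell-1}^{-1}$ scaling of the weights, so the relation holds at $t+1$.

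\textbf{Step 3: the update for decoupled weight decay.} The argument sequence is $k_\ell^{-1}(dW_s)_{ij}$, so in the same way the gradient term contributes $k_{\ell-1}^{-1}\gamma Q_t$. The decay factor is $1-\lambda^\uparrow\gamma^\uparrow = 1-\big(k_{\ell-1}k_\ell^{-m}\lambda\big)\big(k_\ell^m k_{\ell-1}^{-1}\gamma\big)=1-\lambda\gamma$. It multiplies $k_{\ell-1}^{-1}(W_t)_{ij}$, so the relation is again preserved.

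Function equivalence at every step then follows from Proposition~\ref{prop:narrow-wide-equivalence}. The only delicate point is bookkeeping, not a real obstacle. The homogeneity scalar $a$ must be the same for every argument of $Q_t$: it has to apply to the entire gradient history and to $\varepsilon$ together. That is exactly why $\varepsilon$ is rescaled by $k_\ell^{-1}$ and why the induction must carry all past steps.
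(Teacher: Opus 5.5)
Your proposal is correct and follows the paper's proof essentially step for step: strong induction over training steps, the gradient relation $dW^{\uparrow}_s = k_\ell^{-1}\, dW_s \otimes \one_{k_\ell}\one_{k_{\ell-1}}^\top$ imported from the SGD argument, and a case split on the type of weight decay, where homogeneity with the common factor $a = k_\ell^{-1}$ (applied to the whole gradient history and to $\varepsilon$) cancels against $\gamma^{\uparrow}$. In the decoupled case, your factor $1-\lambda^{\uparrow}\gamma^{\uparrow} = 1-\lambda\gamma$ multiplies only the current weight, exactly as in the update rule's definition, which is slightly cleaner than the paper's displayed computation (the paper also applies that factor to the gradient term).
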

We prove the Proposition in Appendix~\ref{appen:equivalence-optimizer}, and describe explicitly how to instantiate it for the SGD (including variants with momentum), Adam, and AdamW optimizers.

\subsection{Extension to general network architectures}\label{sec:general-architectures}
Finally, we present our general result, showing that the observations made above for MLPs extend to virtually all ``standard'' neural network architectures. 
Here, we state an informal result since the formal version requires additional technical background.
A detailed version of this result is deferred to Theorem~\ref{thm:equivalence-appendix} in Appendix~\ref{appen:equivalence-general}. 

\begin{theorem}[Informal]\label{thm:equivalence}
Consider a ``standard'' neural network architecture where the final readout step averages rather than sums along the width axis.\footnote{Standard architectures typically sum along the width axis in the final readout. We instead average. %
For example, in an MLP where the final readout is $W^{(L)}x^{(L-1)}$ with $W^{(L)}\in \RR^{\dout\times n_{L-1}}$, we replace it with $n_{L-1}^{-1}W^{(L)}x^{(L-1)}$.}
Assume an entrywise optimizer whose update functions are homogeneous of degree $m$.
Suppose that we train a base model with learning rate $\gamma$, weight decay coefficient $\lambda$, and additional hyperparameter $\varepsilon$.
Consider a widened model with the same architecture and depth but larger width.
Assume that the widened model's weights are obtained by duplicating units along the designated width axes and rescaling appropriately, and that all hyperparameters are rescaled in tandem according to Table~\ref{tab:widening}.
Then, at every training step, the base and widened models parametrize the same function.
\end{theorem}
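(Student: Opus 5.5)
The proof is a double induction, over training steps and over the operations of the network, formalized in the Tensor Programs (\nesort) language. Every ``standard'' architecture can be written as a program of three kinds of operations: matrix multiplications by weight matrices, coordinatewise nonlinearities applied to width-indexed vectors, and readouts that average along the width axis. For each such program I will define a canonical \emph{widening map} $\iota_k$. A vector $v\in\RR^{n}$ along a width axis becomes $\iota_k(v)=v\otimes\one_k$. A weight matrix $W\in\RR^{n\times n'}$ between width axes with multipliers $k,k'$ becomes $k'^{-1}W\otimes\one_k\one_{k'}^\top$, with $k=1$ or $k'=1$ on non-width axes, exactly as in \eqref{eq:widened}. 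Scalars and other width-free quantities are left unchanged. Table~\ref{tab:widening} should then be read as recording how $\gamma,\lambda,\varepsilon$ scale for each type of weight (input-like, hidden, output-like), in direct generalization of Proposition~\ref{prop:entrywise_update}.

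\textbf{Forward pass.} I will show by induction over the program lines that, when the weights equal $\iota_k$ of the base weights, every forward vector is $\iota_k$ of its base counterpart. Matrix multiplication works as in the proof of Proposition~\ref{prop:narrow-wide-equivalence}: the factor $k'^{-1}$ cancels the $k'$-fold duplicated sum. Coordinatewise nonlinearities, including multi-argument ones covering residual connections, elementwise products and gating, commute with duplication. Averages along a width axis are invariant under duplication, which is exactly why the theorem assumes an averaging readout. Attention-type contractions are handled the same way, as long as they contract along a width axis with normalization $1/n$, or equivalently with the scaling dictated by the table. Layernorm is also compatible, since its mean and variance are averages.

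\textbf{Backward pass.} At the output, $dh$ agrees between the two models. Differentiating each forward line by the chain rule, I will show by reverse induction that the gradient of every width-indexed vector along an axis with multiplier $k$ equals $k^{-1}(\cdot\otimes\one_k)$ of its base counterpart. The averaging readout contributes the initial factor $k^{-1}$, and transposed matrix multiplications preserve the pattern, as in Proposition~\ref{prop:MLP-lr-equivalent-updates}. Since each weight gradient is an outer product of a gradient vector and a forward vector, $dW^\uparrow$ is a fixed power of $k,k'$ times $dW\otimes\one\one^\top$.

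\textbf{Optimizer step.} Because the optimizer is entrywise and homogeneous of degree $m$, the argument in Appendix~\ref{appen:equivalence-optimizer} applies weight by weight. Rescaling the gradient history, the weight-decay term and $\varepsilon$ by a common factor $a$ multiplies $Q_t$ by $a^m$. The learning rate and decay choices in the table are precisely those that convert the update back into $\iota_k$ of the base update, so the widening relation persists at $t+1$. Induction on $t$ then closes the argument. Functional equivalence at every step follows from the forward-pass claim.

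The main obstacle is bookkeeping the exponents consistently across all operation types. The delicate points are weights whose two axes are both width (hidden), weights whose axes are width and non-width (input/output), and vector-like parameters such as biases and layernorm gains. It is also not immediate that the scalings required by the forward pass and by the optimizer step are mutually consistent. To keep this manageable, I will attach to every program variable a pair of exponents (forward scale, gradient scale) and prove by a single structural induction that these exponents are determined by the variable's axis types. Reading off the weight-gradient exponent then yields the table entries mechanically.
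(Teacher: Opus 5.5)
Your skeleton is the paper's: induction over program lines for the forward and backward passes, then induction on $t$ via homogeneity of $Q_t$. However, your widening map is inconsistent with the averaging readout the theorem assumes, and the inconsistency sits exactly at the weights you flag as delicate.

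\textbf{Where it breaks.} You widen every weight matrix as $k'^{-1}W\otimes\one_k\one_{k'}^\top$, and you read the table's output-like entries off Proposition~\ref{prop:entrywise_update}. For $k'=1$ (input weights) this agrees with the table's vector-like row. For $k=1$, where the only width axis is the contracted one (e.g.\ $W^{(L)}\in\RR^{\dout\times n}$), it is the \emph{summing}-readout convention.
\begin{itemize}
\item \emph{Forward pass.} Under the averaging readout $y=n^{-1}W^{(L)}x^{(L-1)}$ your map gives $y^\uparrow=(kn)^{-1}k^{-1}(W^{(L)}\otimes\one_k^\top)(x^{(L-1)}\otimes\one_k)=k^{-1}y$. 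So the forward claim already fails at $t=0$.
\item \emph{Backward pass.} The factor $k^{-1}$ is counted twice, once by the average and once by the weight. Even granting equal output gradients, you get $dx^{(L-1)\uparrow}=k^{-2}\,dx^{(L-1)}\otimes\one_k$, not the $k^{-1}$ pattern you assert.
\item \emph{Hyperparameters.} For the output layer, Proposition~\ref{prop:entrywise_update} gives $\gamma^\uparrow=k^{-1}\gamma$, $\varepsilon^\uparrow=\varepsilon$, and $\lambda^\uparrow=k\lambda$. Table~\ref{tab:widening}, with output weights classified as vector-like as in the main text, instead prescribes $W^\uparrow=W\otimes\one_k^\top$ with no rescaling, $\gamma^\uparrow=k^m\gamma$, $\varepsilon^\uparrow=k^{-1}\varepsilon$, and $\lambda^\uparrow=k^{-1}\lambda$ or $k^{-m}\lambda$.
\end{itemize}
So, as written, you are not proving the stated theorem, and not even a coherent summing-readout variant.

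\textbf{The repair.} Treat every weight with a single width axis as vector-like and widen it by pure duplication. Then the average is what supplies the $k^{-1}$, giving $dW^{(L)\uparrow}=k^{-1}\,dW^{(L)}\otimes\one_k^\top$, and homogeneity yields exactly the table's vector-like row.

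Separately, your single rule also breaks for a forward \texttt{MatMul} of the form $W^\top x$ when $k\neq k'$, since it picks up a factor $k/k'$. Theorem~\ref{thm:equivalence-appendix} explicitly excludes such uses, and Theorem~\ref{thm:equivalence-scaled} normalizes them by $\nout^{-1}$.

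\textbf{How the paper avoids the bookkeeping.} Once repaired, your direct route works. The paper, however, never needs the exponent tracking you defer to a structural induction:
\begin{itemize}
\item It passes to a ``scaled'' program in which each \texttt{MatMul} is $\nin^{-1}Wx$ (or $\nout^{-1}W^\top x$). Widening then becomes pure duplication for every weight type.
\item A single lemma, that duplication commutes with \texttt{Avg}, \texttt{MatMul} and \texttt{OuterNonlin}, applies verbatim to both the forward program and the scaled backpropagation program. This gives $\Nin\Nout\,dW^\uparrow=\nin\nout\,dW\otimes\one\one^\top$ and $N_u\,du^\uparrow=n_u\,du\otimes\one$.
\item With $\mu$P hyperparameters sharing base constants, the update of $W$ written in terms of $\nin\nout\,dW$ is width-independent.
\end{itemize}
As a result, consistency between the forward scalings and the optimizer scalings is automatic rather than something to check type by type.
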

\begin{table*}[t]
    \centering
    \renewcommand{\arraystretch}{1.4} %
    \resizebox{\textwidth}{!}{%
    \begin{tabular}{c c c }
    \toprule
    Type of weights & Weights widening operation & Hyperparameter rescaling \\
    \toprule
    Scalar-like &
    $W^\uparrow \colonequals W$ &
    \makecell{
        $\gamma^\uparrow \colonequals \gamma, \quad
        \lambda^\uparrow \colonequals \lambda, \quad
        \varepsilon^\uparrow \colonequals \varepsilon$
    } \\
    \midrule
    \makecell{Vector-like \\
    \small $\RR^{n\times d} \to \RR^{N\times d}$ \\
    \small \textit{width $n \to$ width $N \colonequals nk$}
    } &
    \makecell{$W^\uparrow \colonequals W \otimes \one_k$ \vspace{5pt}\\
    \texttt{\scriptsize W.repeat\_interleave(k, dim=0)}}
    &
    \makecell{
        $\gamma^\uparrow \colonequals k^{m}\gamma, \quad \varepsilon^\uparrow \colonequals {k}^{-1}\varepsilon$ \\
        $\lambda^\uparrow \colonequals
        \begin{cases}
            k^{-1}\lambda & \text{(vanilla)} \\
            k^{-m}\lambda & \text{(decoupled)}
        \end{cases}$
    } \\
    \midrule
    \makecell{Matrix-like \\
    \small $\RR^{\nout \times \nin}\to \RR^{\Nout \times \Nin }$ \\
    \small \textit{width ($\nout, \nin) \to$ width $(\Nout, \Nin)$} \\
    \small $\Nout  \colonequals \nout \kout,\quad \Nin  \colonequals \nin \kin $}
     &
    \makecell{
        $W^\uparrow \colonequals \kin ^{-1}\, W
        \otimes\big(\one_{\kout}\one_{\kin }^{\top}\big)$\vspace{5pt}\\
        \texttt{\scriptsize W.repeat\_interleave(k\_out, dim=0)} \\ \texttt{\scriptsize \phantom{W.}\ .repeat\_interleave(k\_in, dim=1) / k\_in}
    } &
    \makecell{
        $\gamma^\uparrow \colonequals \kout^{m}\kin ^{-1}\gamma,\quad
        \varepsilon^\uparrow \colonequals \kout^{-1}\varepsilon$,\\
        $\lambda^\uparrow \colonequals
        \begin{cases}
            \kin\kout^{-1}\lambda & \text{(vanilla)} \\
            \kin \kout^{-m}\lambda & \text{(decoupled)}
        \end{cases}$
    } \\
    \bottomrule
\end{tabular}}%
\caption{Construction of weights and optimizer hyperparameters to obtain a widened model with equivalent training trajectories.}
    \label{tab:widening}
\end{table*}

The term ``standard'' architecture refers to any neural network representable in the \nesort~program developed in the Tensor Program literature~\cite{yang2019wide,yang2020tensor,yang2023tensor}.
Intuitively, \nesort is a formal programming language in which each variable is typed as matrix-like, vector-like, or scalar-like, and new variables are generated by: (i) multiplication of a vector-like variable by a matrix-like one; (ii) elementwise nonlinear transformations of vector-like variables; and (iii) averaging the entries of a vector-like variable.
See Appendix~\ref{appen:background-TP} for the formal definition.
This framework encompasses many widely used architectures and components, including MLPs, RNNs, convolution, attention, pooling layers, skip connections, and batch/layer normalization, as established by~\citet{yang2019wide}.
Furthermore,~\citet{yang2023tensor} demonstrates that any network specified in the \nesort program admits a corresponding backpropagation program also expressible in \nesort.

In \nesort, matrix-like parameters have two dimensions that scale with width, vector-like parameters have one, and scalar-like parameters have none.
In Table~\ref{tab:widening}, we treat each of these categories separately in describing how to construct a dynamically equivalent widened model.
For example, in MLPs, input/output dimensions are constant while hidden dimensions scale with width.
Consequently, input and output weights and hidden biases are vector-like, hidden weights are matrix-like, and the output bias is scalar-like.
Similarly, in transformers, the context length is fixed while \texttt{d\_model}, \texttt{n\_head}, and \texttt{dim\_feedforward} could scale with width,\footnote{We borrow the notation \texttt{d\_model}, \texttt{n\_head}, and \texttt{dim\_feedforward} from the PyTorch implementation of transformer~\cite{paszke2017automatic}.}
so the query, key, value, and output projection matrices $W^Q, W^K, W^V, W^O \in \RR^{\texttt{d\_model} \times \texttt{d\_model}}$ are matrix-like.

Since nearly any relevant neural computation---including forward and backward passes---can be expressed as a \nesort program, we use this framework to formally extend Propositions~\ref{prop:narrow-wide-equivalence} and~\ref{prop:entrywise_update} from MLPs to a much broader class of architectures via similar inductive reasoning.

\paragraph{Transfer of optimizer internal state.}
In practice, the construction of a dynamically equivalent wider model is applied mid-training: the base model has already been trained for some number of steps, and the goal is to construct a wider model that behaves exactly as if training of the base model were continued.
For optimizers that maintain internal state (e.g., momentum in SGD, or the first- and second-moment estimates in Adam/AdamW), achieving this requires transferring the accumulated optimizer state alongside the weights.
The state is duplicated and rescaled following a similar principle as the weight transfer in Table~\ref{tab:widening}.
The specific rules are given in Appendix~\ref{appen:implementation-upscaling}.

\paragraph{Connection to $\mu$P.}
Notably, the widening rules in Table~\ref{tab:widening} are compatible with the $\mu$P scaling of~\citet{yang2020feature,yang2023tensor}.

\begin{table}[t]
    \centering
    \renewcommand{\arraystretch}{1.6} %
    \resizebox{\textwidth}{!}{%
    \begin{tabular}{c c c c c c c}
    \toprule
    Type of weights
    & \makecell[t]{{Init variance} \\ \scriptsize $\mathcal{N}(0,B\cdot\overline{\sigma}^2)$}
    & \makecell[t]{{Learning rate} \\ \scriptsize $\gamma = C\cdot\overline{\gamma}$}
    & \makecell[t]{{Weight decay} \\\scriptsize{(vanilla)} \\ \scriptsize $\lambda = D\cdot\overline{\lambda}$}
    & \makecell[t]{{Weight decay} \\\scriptsize{(decoupled)} \\ \scriptsize $\lambda = \tilde{D}\cdot\overline{\lambda}$}
    & \makecell[t]{Additional \\ hyperparameters \\ \scriptsize $\varepsilon = E\cdot\overline{\varepsilon}$}
    & \makecell[t]{Output \\multiplier} \\
    \midrule
    Vector-like & $1$ & $n^{m}$ & $n^{-1}$ & $n^{-m}$ & $n^{-1}$ & $n^{-1}$ \\
    \midrule
    Matrix-like & $\nin^{-1}$ & $\nout^{m}\nin^{-1}$ & $\nin\nout^{-1}$ & $\nin\nout^{-m}$ & $\nout^{-1}$ & $--$ \\
    \bottomrule
    \end{tabular}}%
    \caption{$\mu$P width scalings.
    The output multiplier in the last row should be interpreted as in Theorem~\ref{thm:equivalence}.
    Scalar-like weights have all factors equal to $1$ (no width dependence) and are therefore omitted here.
    Entries are the width-dependent multiplicative factors ($B,C,D,\tilde D,E$) applied to width-independent base constants (denoted with bars). The actual hyperparameters equal the base constants times the listed powers of the widths.
    See Appendix~\ref{appen:mup-explanation} for a detailed comparison with the versions presented in~\citet{yang2022tensor,yang2023tensor}.}
    \label{tab:mup_scaling_main}
\end{table}

For reference, Table~\ref{tab:mup_scaling_main} summarizes $\mu$P, which prescribes width-dependent choices of initialization and optimization hyperparameters so that training dynamics exhibit optimal feature learning behavior in the infinite-width limit.
Now, suppose we have a trained base model and instantiate a widened model by applying the ``Weights widening operation'' rules in Table~\ref{tab:widening}.
We then continue training the widened model under $\mu$P, using base constants $\overline{\gamma}$, $\overline{\lambda}$, and $\overline{\varepsilon}$ chosen to match the base model's hyperparameters at width $n$.
(Since $\mu$P only prescribes how hyperparameters scale with width---not their absolute values at any fixed width---such a choice is always possible regardless of how the base model was trained.)
The resulting hyperparameters at width $N$ match exactly the ``Hyperparameter rescaling'' column of Table~\ref{tab:widening}.
For example, for a matrix-like weight, $\mu$P sets the widened-model learning rate to $\gamma^\uparrow = \Nout^{m} \Nin^{-1}\,\overline{\gamma}$, whereas the base model uses $\gamma = \nout^{m} \nin^{-1}\,\overline{\gamma}$; therefore $\gamma^\uparrow = \kout^{m} \kin^{-1}\,\gamma$, exactly as required by Table~\ref{tab:widening}.
The same reasoning applies to the remaining hyperparameters and to the other weight types.
In other words, while the explicit construction of an equivalent widened model may appear involved, under $\mu$P the procedure becomes essentially mechanical: one transfers the learned weights according to Table~\ref{tab:widening}, and the associated hyperparameters adjust automatically.
Consequently, from that point onward, the widened model receives exactly the same parameter updates as the base model, i.e., training proceeds as if no widening had occurred.

In the next section, we leverage this observation and adopt the $\mu$P framework for the upscaled model throughout the rest of the paper.
We show that it yields a simple, easily implementable upscaling algorithm and, moreover, leads to desirable properties: it maintains optimal training dynamics, enables hyperparameter transfer, and facilitates infinite-width analysis.

\section{Training from upscaled initialization} \label{sec:upscaling}
The previous sections focused on constructing an equivalent wider model that retains the knowledge learned by the narrower base model, which we call \emph{widening}.
To \emph{upscale}---i.e., to initialize training of a wider model from an existing narrow model---we apply such a widening procedure and then inject a small symmetry-breaking noise into the parameters.
Without noise, the duplicated weights remain identical throughout training and the widened model cannot exploit its extra capacity; injecting noise perturbs the parameters away from this lower-dimensional subspace, potentially increasing the initialization loss but unlocking the additional capacity of the wider model.
This raises two practical questions: how much noise to add, and which hyperparameters (particularly the learning rate) to use when training the upscaled model.

\subsection{Upscaling algorithm}

We propose an upscaling method that addresses both questions within a unified procedure, applicable to the general architectures and optimizers discussed in Section~\ref{sec:equivalence}.
Meta-algorithm~\ref{alg:upscaling} provides the pseudocode, with additional details deferred to Appendix~\ref{appen:implementation-upscaling}.

\floatname{algorithm}{Meta-algorithm}
\begin{algorithm}[ht]
\caption{Upscaling procedure}
\label{alg:upscaling}
\setcounter{algstep}{-1}
\begin{algorithmic}
\REQUIRE Base model checkpoint at width $n$ and the corresponding optimizer checkpoint; expansion multiplier $k$.
\ENSURE Trained upscaled model of width $N=nk$.
\vspace{.2cm}
\STATE  \textbf{Step 0 (Hyperparameter tuning).}\linelabel{line:hp_transfer} Run Steps~\ref{line:weight}--\ref{line:train} below on a small system (width $n_0 \to kn_0$ with $n_0 \ll n$) many times with different choices of $\overline{\sigma_\Delta}$ and $\overline{\gamma^\uparrow}$, selecting the values that minimize the terminal training loss.
\vspace{.1cm}
\STATE  \textbf{Step 1.} \linelabel{line:weight}Widen the base model's checkpoint to width $N$ using the ``Widening operation'' in Table~\ref{tab:widening}.
\vspace{.1cm}
\STATE  \textbf{Step 2.} \linelabel{line:noise}Add noise to the widened model, with standard deviation (std) following the same scaling that $\mu$P prescribes for random initialization, controlled by the base constant $\overline{\sigma_\Delta}$.
\vspace{.1cm}
\STATE  \textbf{Step 3.} \linelabel{line:optimizer}Create an optimizer for the upscaled model. For optimizers with internal state, transfer this state by applying a similar widening procedure as used for the weights.
\vspace{.1cm}
\STATE \textbf{Step 4.}\linelabel{line:train} Train the upscaled model constructed in Step~\ref{line:noise} under $\mu$P using the optimizer in Step~\ref{line:optimizer}, with learning rate base constant $\overline{\gamma^\uparrow}$.
\end{algorithmic}
\end{algorithm}
\floatname{algorithm}{Algorithm}

For simplicity, this algorithm assumes that all hidden-layer dimensions are equal (i.e., upscaling from width $n$ to $kn$), but it extends straightforwardly to heterogeneous hidden-layer widths.
For concreteness, Algorithm~\ref{alg:upscaling-mlp} in Appendix~\ref{appen:implementation-upscaling} instantiates the method for an MLP trained with SGD, where the specific distributions of the injected noise are specified.
We also provide a \emph{signal-normalized noise} variant of the algorithm in Appendix~\ref{par:rescaled-noise} that sets the noise magnitude per layer as a fraction $t \in [0,1]$ of the widened weight's spectral norm; after tuning $t$ at small width, the resulting per-layer noise base constants are transferred to the target width.

Our proposed upscaling algorithm enjoys the following key properties.
The theoretical basis for the first was established in Section~\ref{sec:equivalence}; we develop that of the second and third in Section~\ref{sec:choose-hyperparam-upscaling} below.

\paragraph{Equivalence at zero noise.}
By Theorem~\ref{thm:equivalence} and the $\mu$P connection in Section~\ref{sec:general-architectures}, injecting zero noise ($\overline{\sigma_\Delta}=0$) yields a widened model whose training trajectory evolves exactly as if training continued on the base model (up to a potential change in learning rate base constant $\overline{\gamma^\uparrow}$).
We empirically validate this equivalence across architectures and optimizers in Appendix~\ref{appen:exp-verification-equivalence}.
Because dynamic equivalence governs the parametrized function across the full training trajectory---not just at initialization---it promotes well-behaved training dynamics; Section~\ref{sec:choose-hyperparam-upscaling} justifies this more fully via infinite-width theory.
Dynamic equivalence also provides a rigorous baseline: continuing base-model training is automatically included in any hyperparameter sweep over the noise level that contains zero.
Consequently, if the sweep selects a nonzero $\overline{\sigma_\Delta}$, one can be confident that upscaling yields genuine benefits over simply continuing the base-model training.

\paragraph{Optimal infinite-width dynamics.}
The key design choice is the width-dependent scaling of noise and optimizer hyperparameters: the injected noise is scaled in the same way as the initialization in $\mu$P (see Step~\ref{line:noise}), and the upscaled model’s optimizer hyperparameters follow $\mu$P scaling with the same base constants as the base model.
In Section~\ref{sec:choose-hyperparam-upscaling}, we show that this ensures the entire training process---including mid-course upscaling and noise injection---exhibits optimal infinite-width dynamics in which the noise injection is both ``safe’’ and ``useful’’: the signal from the base model is preserved in the wider optimization landscape, while the injected noise contributes meaningfully without vanishing or exploding.
This provides precise principled guidance for noise injection that prior heuristic approaches lack.

\paragraph{Hyperparameter transfer.}
As specified in Step~\ref{line:hp_transfer}, zero-shot hyperparameter transfer applies directly to upscaling.
This significantly reduces the computational overhead of hyperparameter tuning, allowing one to choose the noise level and learning rate in a principled manner.
As we show in Section~\ref{sec:choose-hyperparam-upscaling}, the success of this transfer inherits the same theoretical guarantees as $\mu$Transfer~\cite{yang2022tensor}, owing to the infinite-width analysis of the entire training trajectory.
We empirically validate this hyperparameter transfer in Section~\ref{sec:experiments}.

\subsection{Infinite-width limit of model upscaling}\label{sec:choose-hyperparam-upscaling}

\citet{yang2020feature, yang2023tensor} derive $\mu$P for any neural network architecture expressed in the \nesort program (see Table~\ref{tab:mup_scaling_main}), and rigorously show that it is the unique ``optimal’’ parametrization in the infinite-width limit.
Concretely, all hidden activations and their updates stay at $\Theta(1)$ scale throughout training, eliminating vanishing and exploding gradients and keeping the model in the ``feature learning’’ regime.
Moreover,~\citet{yang2022tensor} empirically observe that training under $\mu$P yields dynamics that align across widths, enabling hyperparameters tuned on narrow models to transfer directly to wider ones.

At first glance, this result appears inapplicable to upscaling.
As noted in Section~\ref{sec:general-architectures}, the \nesort program natively supports only matrix multiplication and elementwise nonlinearities of vectors, and thus does not explicitly permit the ``duplication’’ operation $\otimes \one \one^{\top}$ that upscaling involves.
Nevertheless, we provide a workaround showing that the upscaling procedure can be encoded within the \nesort framework by introducing additional variables, so that the results in~\citet{yang2022tensor} apply directly.

To illustrate the idea, take the MLP example from Section~\ref{sec:MLP-SGD-equivalence}.
Specifically, the first forward propagation for the upscaled network is defined recursively as:
\begin{align*}
    \hl &= (k_{\ell-1}^{-1}\Wl \otimes \one_{k_\ell} \one_{k_{\ell-1}}^\top + \Delta^{(\ell)}) x^{(\ell-1)} \in \RR^{N_{\ell}},\\
    \xl &= \phi(\hl) \in \RR^{N_\ell},
\end{align*}
where $\Delta^{(\ell)}$ denotes the Gaussian noise injected into the model after widening (Step~\ref{line:noise} of Meta-algorithm~\ref{alg:upscaling}).\footnote{See Algorithm~\ref{alg:upscaling-mlp} for the explicit distribution of $\Delta^{(\ell)}$ in the MLP case.}
Because of the $\otimes \one_{k_{\ell}}\one_{k_{\ell - 1}}^{\top}$ operation, these equations do not give a \nesort program.
However, they can be rewritten in terms of variables of the original, not widened, dimension:
\begin{align*}
    \hl_{(i)} &= \sum_{j=1}^{k_{\ell-1}}  \left(k_{\ell-1}^{-1}\Wl + \Delta^{(\ell)}_{(i,j)}\right) x^{(\ell-1)}_{(j)} \in \RR^{n_{\ell}},\\
    \xl_{(i)} &= \phi(\hl_{(i)}) \in \RR^{n_\ell}, \quad \text{for } i=1, \dots, k_{\ell},
\end{align*}
where $\hl_{(i)} \colonequals (\hl_i, \hl_{i+k_\ell}, \hl_{i+2k_\ell}, \dots) \in \RR^{n_\ell},i\in [k_\ell]$ partition the entries of the widened vector $\hl \in \RR^{N_\ell}$, and similarly $( \Delta^{(\ell)}_{(i,j)}\in \RR^{n_\ell\times n_{\ell-1}})_{i \in [k_{\ell}], j \in [k_{\ell-1}]}$ partition the entries of the matrix $ \Delta^{(\ell)} \in \RR^{N_{\ell} \times N_{\ell-1}}$.
The two above sets of equations describe identical dynamics, but the latter is a combination only of matrix multiplications and elementwise nonlinearities, and therefore falls within the \nesort framework.

We generalize this observation in Appendix~\ref{appen:modified-tp-expressed-as-original-tp}, showing that any architecture expressible in \nesort admits an upscaling procedure formulated within the \nesort framework by introducing such partitioned variables.
This allows us to directly apply the results of~\citet{yang2022tensor} and conclude that the width-dependent scalings of noise and hyperparameters in Meta-Algorithm~\ref{alg:upscaling} are precisely those ensuring the entire training process---including mid-training upscaling---adheres consistently to $\mu$P.
Three consequences follow.

First, following~\citet{yang2020feature, yang2023spectral}, the upscaling method yields ``optimal’’ training dynamics in the infinite-width limit (fixing the width multiplier $k$ and letting the base width $n \to \infty$).
In particular, both the initialization and the injected noise make non-trivial contributions to the activations and their updates, maintaining signal from the base model while exploiting increased width.

Second, hyperparameters can be transferred reliably for training procedures that include upscaling, as illustrated in Figure~\ref{fig:upscaling_hp_transfer}.

Third, using the Tensor Program framework of~\citet{yang2023tensor}, we can explicitly characterize the infinite-width limit of the entire training dynamics involving upscaling.
To streamline future work, we introduce a modified Tensor Program tailored to upscaled training in Appendix~\ref{appen:modified-TP}.
Without upscaling, pre-activations converge to i.i.d.\ Gaussian random variables in the infinite-width limit.
With upscaling by a factor $k$, pre-activations instead converge to i.i.d.\ blocks, each a $k$-dimensional Gaussian random vector with non-trivial covariance structure.
Our modified framework tracks these vectors---through their evolving covariance structure---across the entire training trajectory.
We illustrate this on two simple MLPs in Appendices~\ref{appen:eg1} and~\ref{appen:eg2}, showing that the infinite-width dynamics after upscaling is in general qualitatively different from non-upscaled training.
Consequently, optimal hyperparameters for non-upscaled training should not be expected to remain optimal after upscaling, justifying our approach of transferring hyperparameters across upscaling systems of different sizes.
\section{Experiments}\label{sec:experiments}
We evaluate the effectiveness of our upscaling algorithm across multiple architectures and optimizers, verifying the theory's predictions on hyperparameter transfer.

\paragraph{Setup.}
We consider three settings: an MLP with AdamW on Forest Cover Type tabular classification~\cite{covertype_31}, a ResNet~\cite{he2015deep} with SGD on CIFAR-100 image classification~\cite{krizhevsky2009learning}, and GPT-2~\cite{radford2019language} with AdamW on FineWeb~\cite{penedo2024fineweb}.
For the MLP, we use a 4-layer ReLU network and upscale from width $n{=}500$ to $kn{=}2000$ ($k{=}4$), with hyperparameters tuned at width $n_0{=}100$ to $kn_0{=}400$.
For ResNet-18, we upscale from the standard $2{\times}$ to a $4{\times}$ model ($k{=}2$), with hyperparameters tuned at the $0.5{\times}$ to $1{\times}$ scale; due to its heterogeneous channel widths, we use the alternative signal-normalized noise variant of the algorithm (Appendix~\ref{par:rescaled-noise}).
For GPT-2 (12 layers, 12 heads), we upscale from $160$ to $320$ dimensions per head ($k{=}2$), with hyperparameters tuned at $16$ to $32$ dimensions per head; this yields a target-to-tuning scale ratio of $n/n_0 = 10$, demonstrating transfer across a large scale separation.
Full experimental details, including architecture configurations and hyperparameter search grids, are provided in Appendix~\ref{appen:exp}.
In all cases, we follow the three-step protocol below.
\begin{enumerate}[label=(\arabic*)]
    \item Train a base model of width $n$ from scratch under $\mu$P, with hyperparameters either taken from previously reported best settings or obtained via the transfer procedure of~\citet{yang2022tensor} at width $n_0 \ll n$.
    \item As a baseline, train a wider model of width $N = kn$ from scratch under $\mu$P, using the same hyperparameter base constants as in (1), which remain near-optimal by hyperparameter transfer~\cite{yang2022tensor}.
    \item Train another model of width $N = kn$ using our upscaling algorithm (Meta-algorithm~\ref{alg:upscaling}).
    The noise std and learning rate are tuned on a small upscaling system $n_0 \to kn_0$, selecting the configuration with lowest training loss after $T$ steps; the selected values are then applied to the target system ($n \to kn$) for the same $T$ steps.
    All other hyperparameters are kept identical to (1) and (2).
\end{enumerate}

\paragraph{Evaluation and results.}
We treat the base model's training cost as sunk (e.g., because the base model was obtained from an open-source release), and the hyperparameter tuning cost as negligible relative to training the large model.
Appendix~\ref{appen:eval-criteria} relaxes both assumptions and shows that upscaling remains cost-effective even when these costs are fully charged.
Under these assumptions, the comparison is direct: both the from-scratch baseline and the upscaled model share the same architecture and width, so we simply plot their loss curves as a function of training steps.
Figure~\ref{fig:upscaling-exp} summarizes the results.
Across settings, the upscaled model offers two clear benefits: it converges faster \emph{and} reaches lower final training loss than the model trained from scratch.
In all experiments, the hyperparameter sweep selects a nonzero noise level (varying by dataset, task, and training horizon), which raises the initialization loss above the base model's terminal loss---a necessary trade-off for exploiting additional capacity---but the loss decreases sharply within a few steps.
On validation, the upscaled MLP and GPT-2 mirror the same trends seen in training, whereas the upscaled ResNet shows worse generalization despite achieving lower training loss.
Since our theory governs training dynamics rather than generalization, this is not a contradiction: large generalization gaps are a known property of CNN-based image classifiers, and here test performance is dominated by this gap.
In practice, upscaling is most reliable when validation and training performance largely align; practitioners should exercise caution otherwise.

Our method can be directly compared with Net2Net~\cite{chen2015net2net}, which uses a similar upscaling approach but tunes noise and learning rate directly at the expensive target width $kn$.
By performing hyperparameter transfer at the small width $kn_0$, we obtain per-run FLOP speedups of $23.6{\times}$ for MLP ($n/n_0{=}5$), $16.0{\times}$ for ResNet ($n/n_0{=}4$), and $48.2{\times}$ for GPT-2 ($n/n_0{=}10$).
These savings grow with $n/n_0$.
Our experiments use moderate ratios due to limited compute, but hyperparameter transfer via $\mu$P works robustly for much larger ratios provided $n_0$ is sufficiently large (typically $\gtrsim 100$ units), so substantially greater savings could be attainable in large-scale settings.

\begin{figure}[t]
\centering
\begin{minipage}[t]{0.32\textwidth}
\centering
\includegraphics[width=\linewidth]{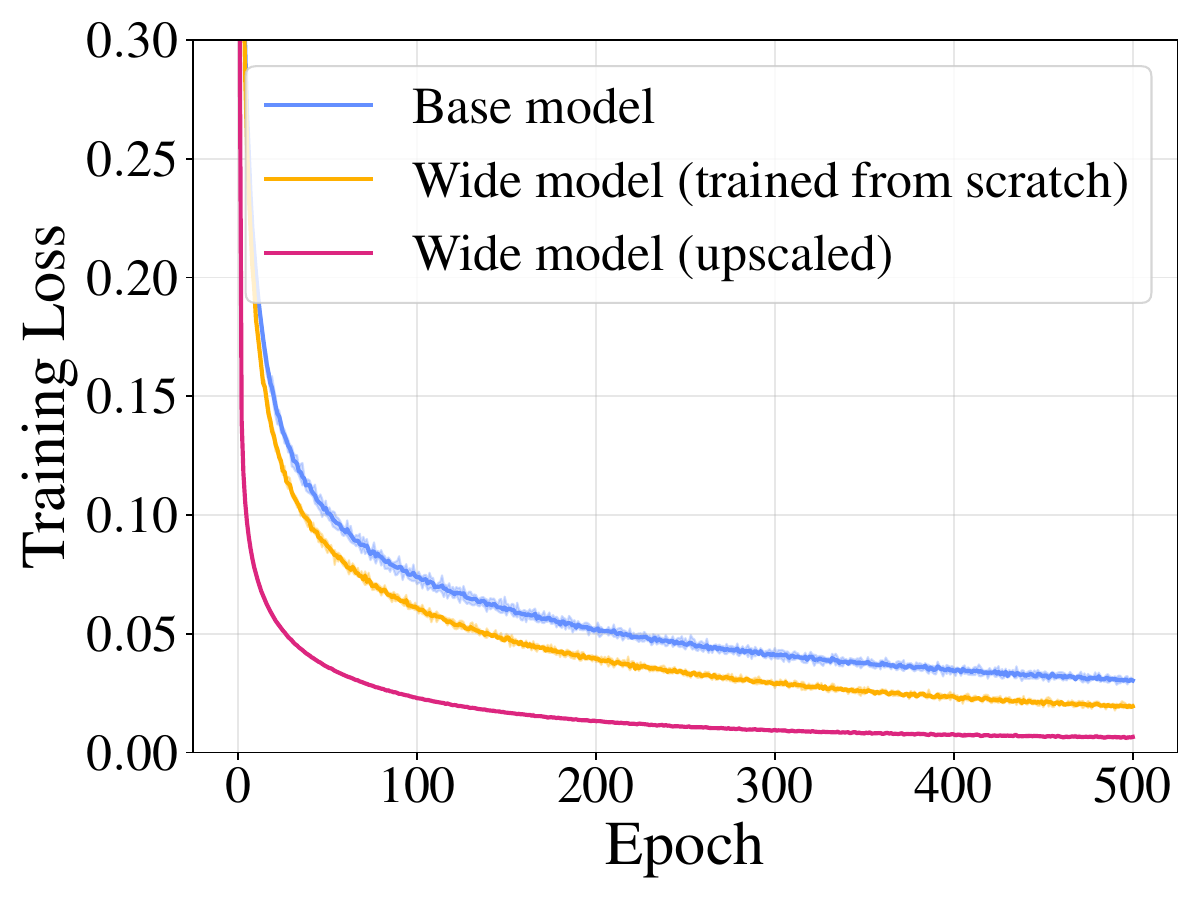} %
\small{MLP (Training loss)}
\end{minipage}
\hfill
\begin{minipage}[t]{0.32\textwidth}
\centering
\includegraphics[width=\linewidth]{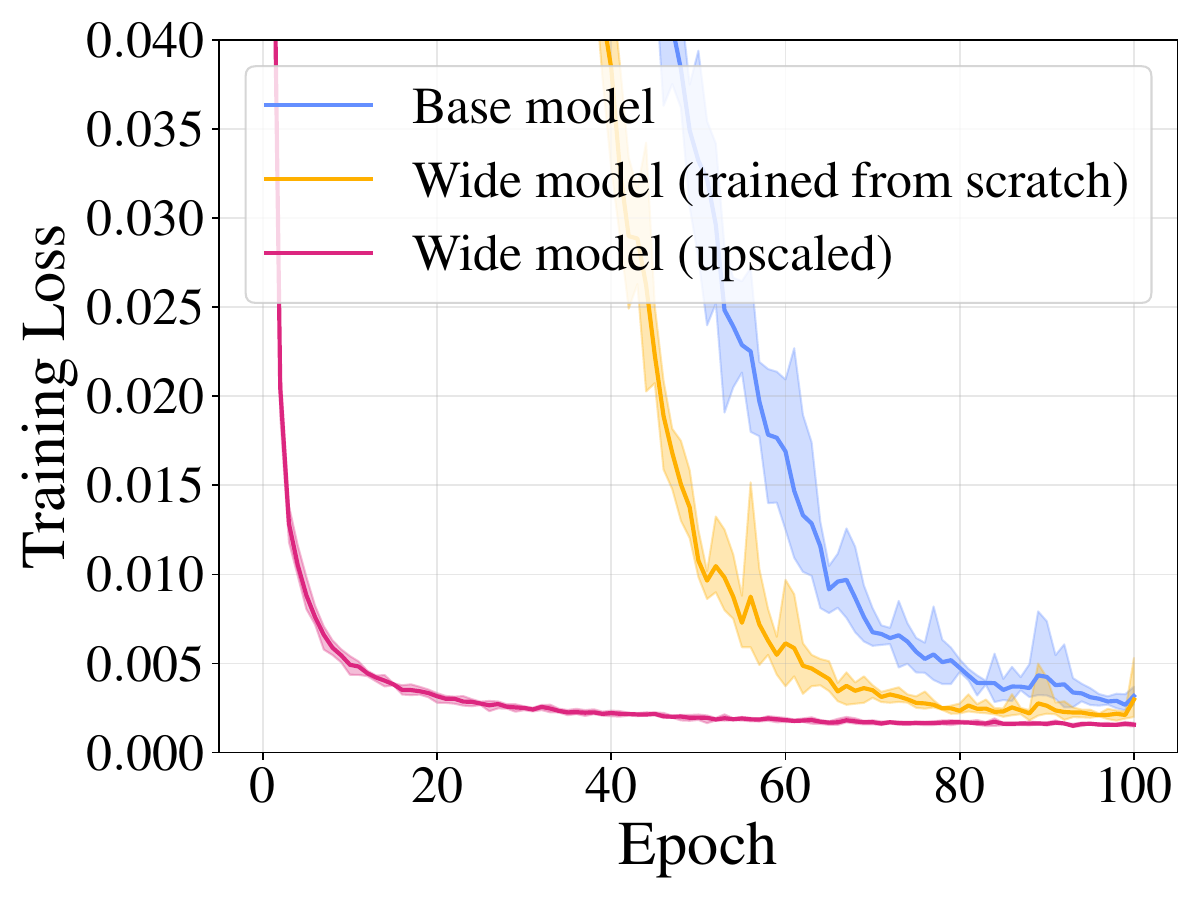}
\small{ResNet (Training Loss)}
\end{minipage}
\hfill
\begin{minipage}[t]{0.32\textwidth}
\centering
\includegraphics[width=\linewidth]{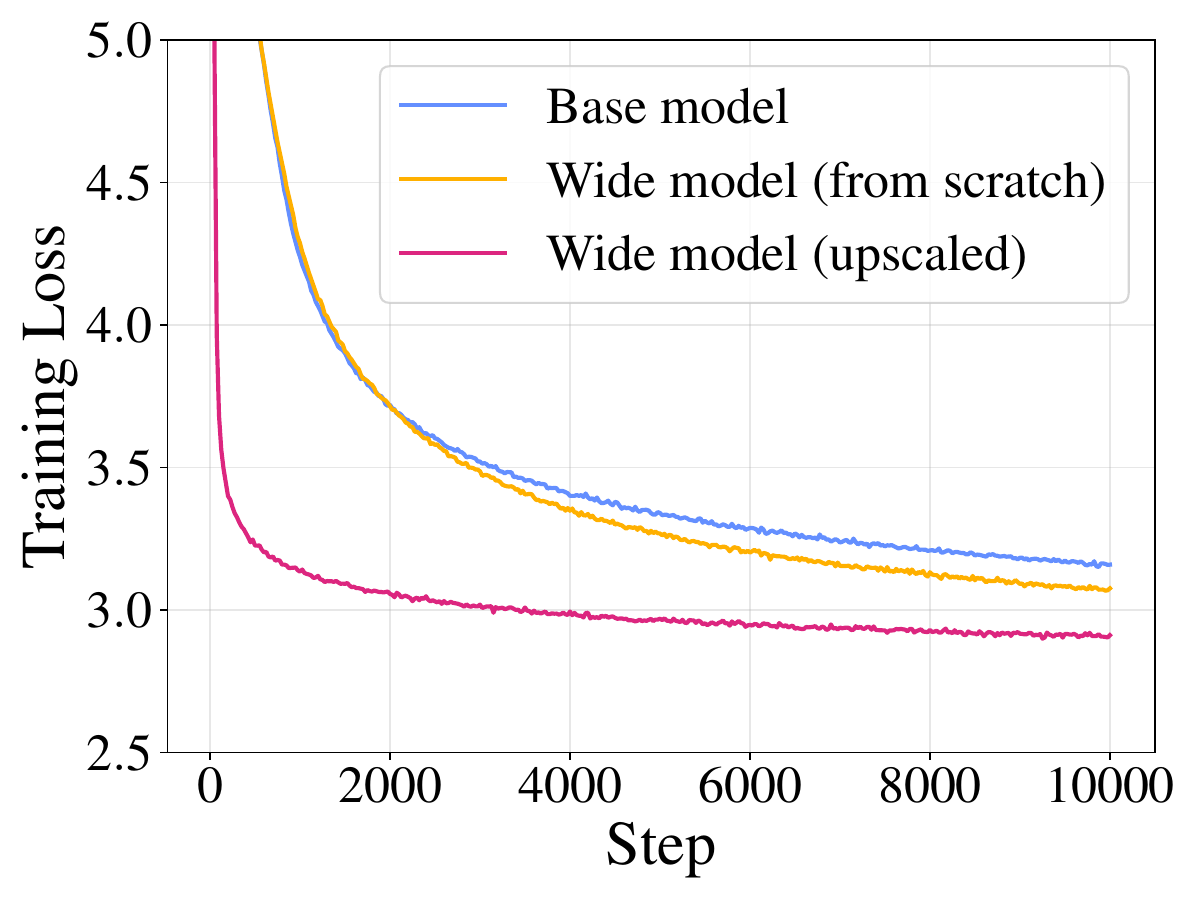} %
\small{GPT-2 (Training Loss)}
\end{minipage}
\vspace{0.8em}
\begin{minipage}[t]{0.32\textwidth}
\centering
\includegraphics[width=\linewidth]{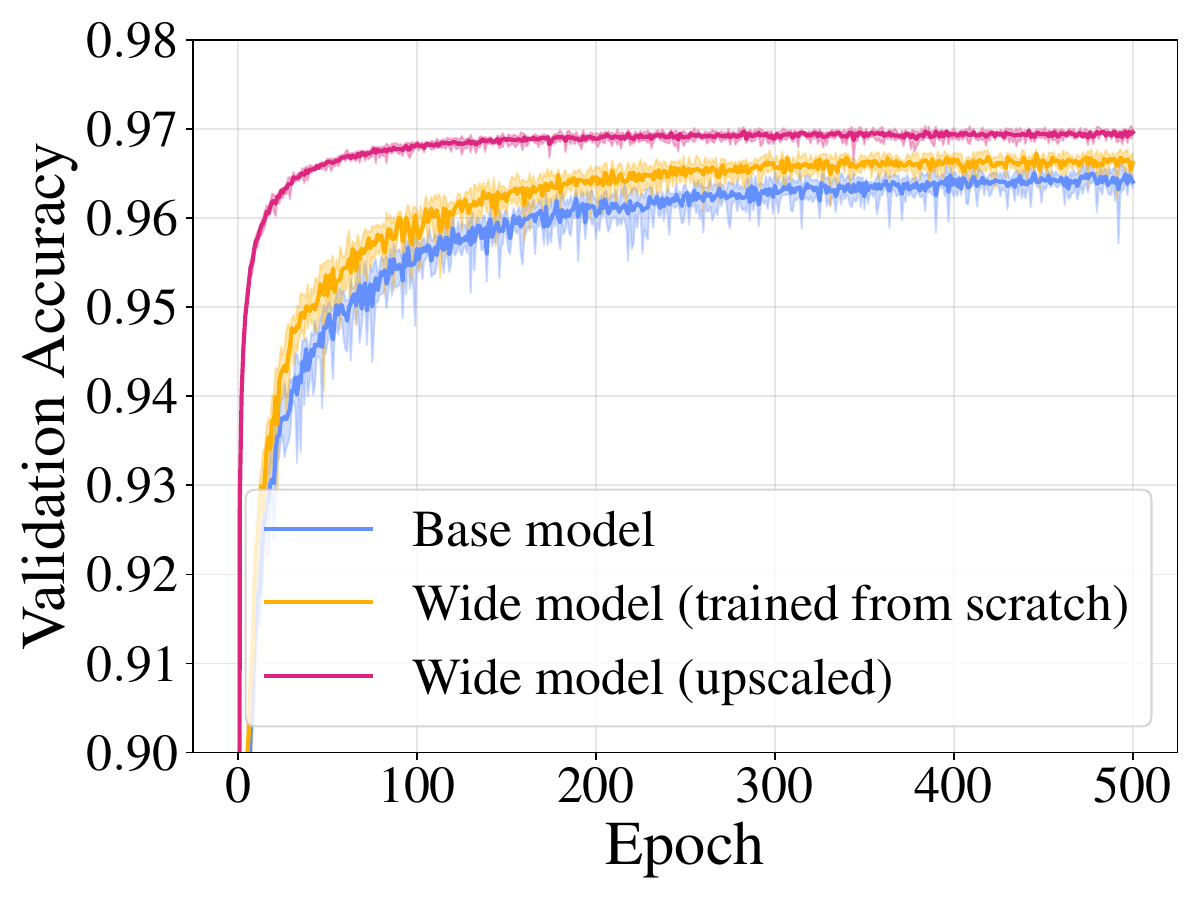}
\small{MLP (Validation accuracy)}
\end{minipage}
\hfill
\begin{minipage}[t]{0.32\textwidth}
\centering
\includegraphics[width=\linewidth]{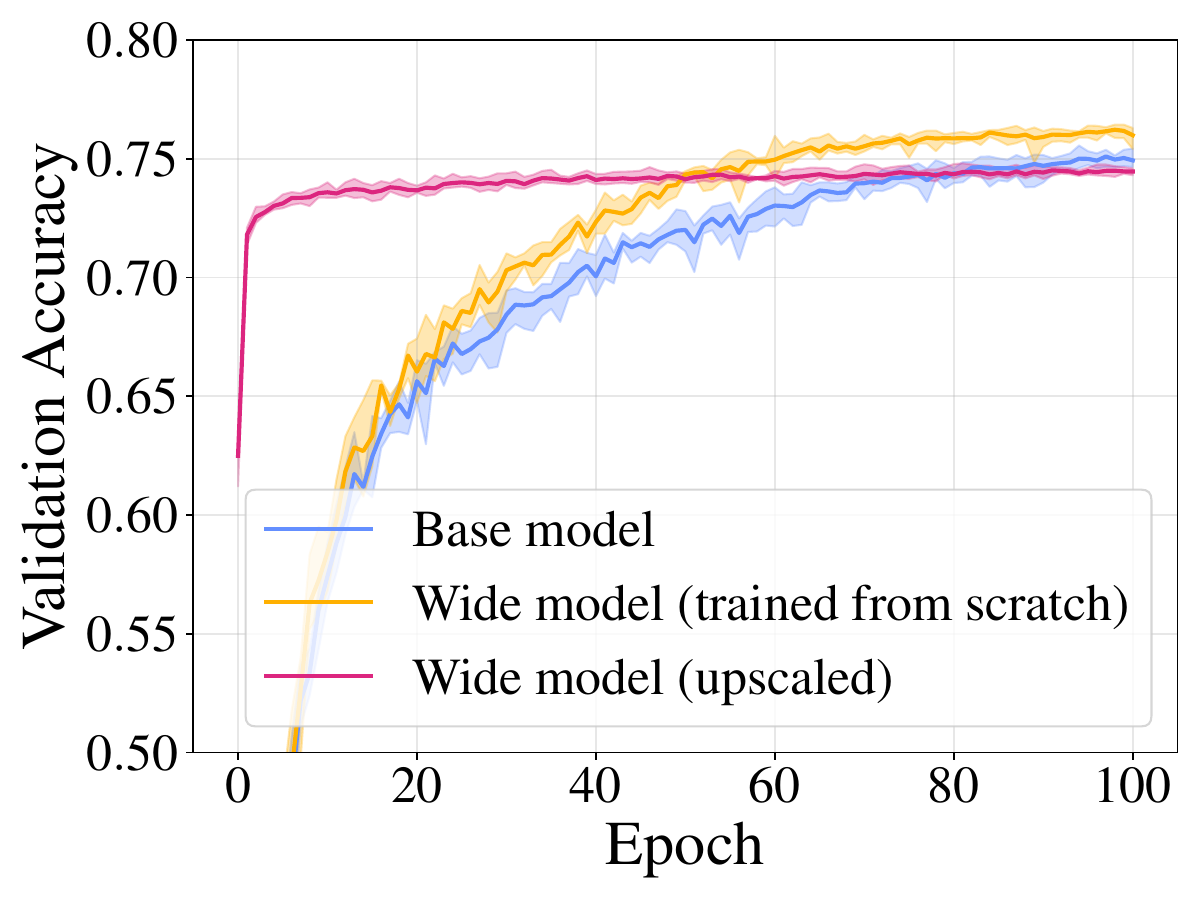} %
\small{ResNet (Validation accuracy)}
\end{minipage}
\hfill
\begin{minipage}[t]{0.32\textwidth}
\centering
\includegraphics[width=\linewidth]{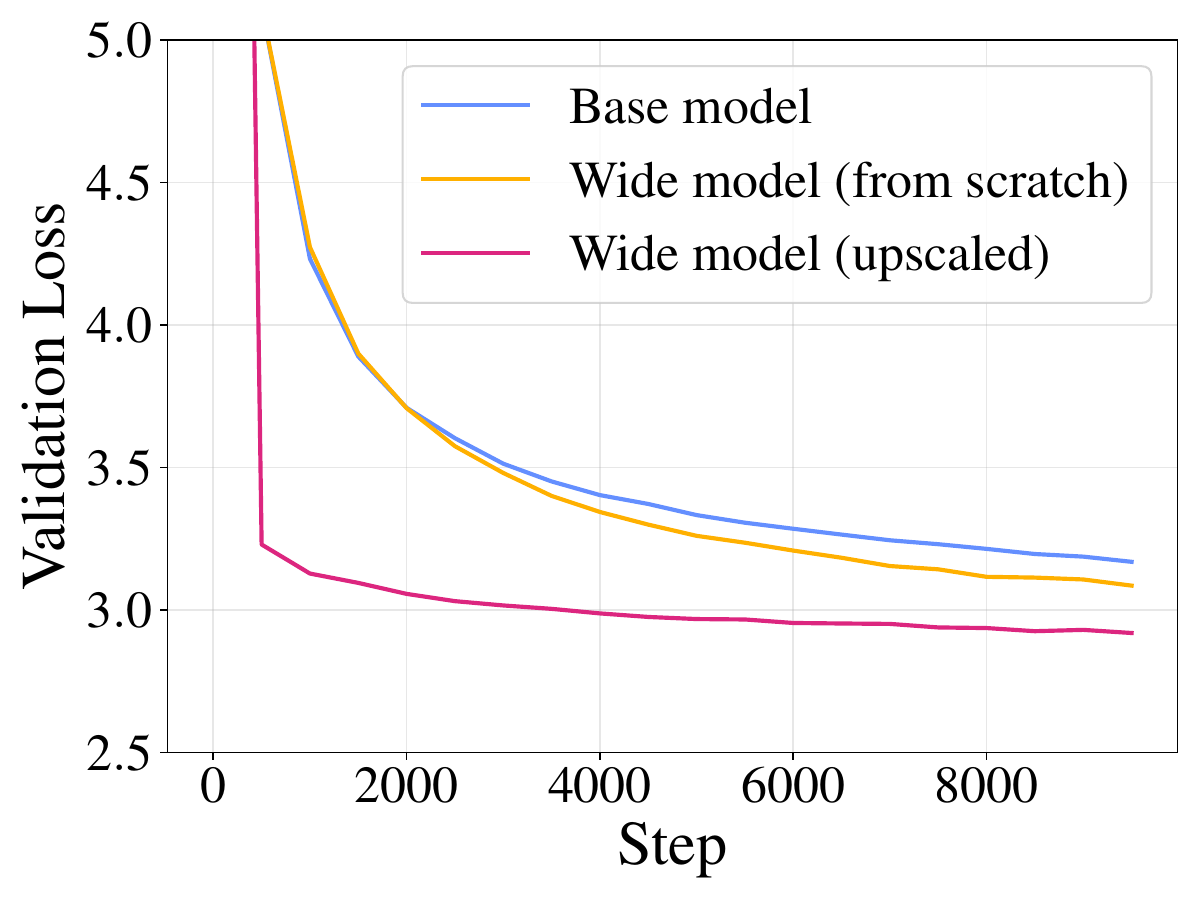} %
\small{GPT-2 (Validation loss)}
\end{minipage}
\vskip -10pt
\caption{Training (top row) and validation (bottom row) performance for MLP, ResNet, and GPT-2. The y-axes are truncated to highlight differences between the two curves in each panel. 
For the MLP and ResNet experiments which have training instability, plots show the mean over five random runs, with shaded min--max bands. More details and additional results are deferred to Appendix~\ref{appen:exp}.}\label{fig:upscaling-exp}
\end{figure}

\paragraph{Verification of hyperparameter transfer.}
We verify hyperparameter transfer empirically by repeating the upscaling hyperparameter sweep across a range of target widths.
Figure~\ref{fig:hp-transfer-combined}(a)(d) reports results for GPT-2, where we train base models with hidden widths $n \in \{64, 128, 256, 512, 1024\}$ and upscale each by a multiplier $k=2$ to widths $N \in \{128, 256, 512, 1024, 2048\}$.
For each target width, we independently sweep the learning rate or the injected noise magnitude while holding all other hyperparameters fixed, and plot the resulting terminal training loss.
Figure~\ref{fig:hp-transfer-combined}(b)(e) presents analogous results for MLPs trained with SGD, and Figure~\ref{fig:hp-transfer-combined}(c)(f) for MLPs trained with AdamW.
The optimal learning rate and noise level remain stable across target widths, confirming that values tuned on a small upscaling system transfer reliably to much larger ones, as illustrated in Figure~\ref{fig:upscaling_hp_transfer}.

\begin{figure}[t]
\centering
\begin{subfigure}[t]{0.32\linewidth}
    \centering
    \includegraphics[width=\linewidth]{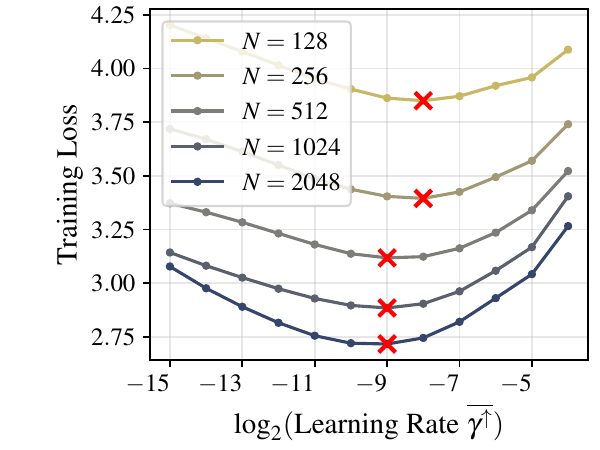}
    \caption{GPT-2: Learning-rate sweep with fixed noise.}
\end{subfigure}\hfill
\begin{subfigure}[t]{0.32\linewidth}
    \centering
    \includegraphics[width=\linewidth]{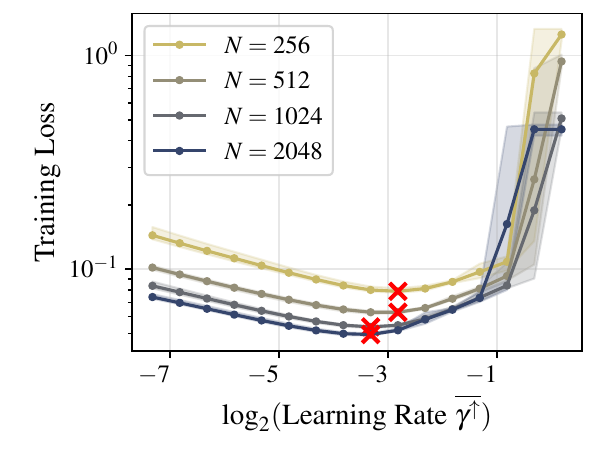}
    \caption{MLP (SGD): Learning-rate sweep with fixed noise.}
\end{subfigure}\hfill
\begin{subfigure}[t]{0.32\linewidth}
    \centering
    \includegraphics[width=\linewidth]{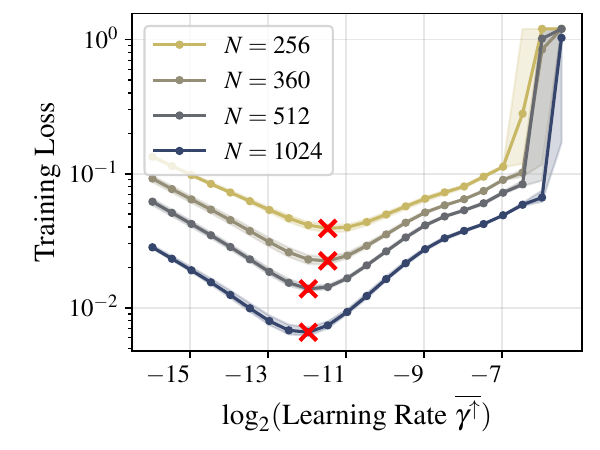}
    \caption{MLP (AdamW): Learning-rate sweep with fixed noise.}
\end{subfigure}
\vspace{0.8em}

\begin{subfigure}[t]{0.32\linewidth}
    \centering
    \includegraphics[width=\linewidth]{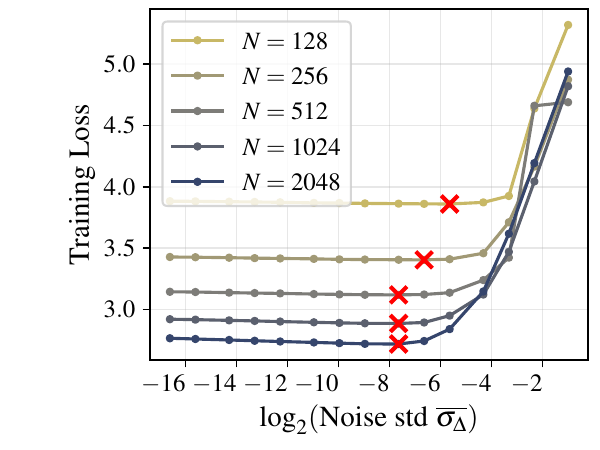}
    \caption{GPT-2: Noise sweep with fixed learning rate.}
\end{subfigure}\hfill
\begin{subfigure}[t]{0.32\linewidth}
    \centering
    \includegraphics[width=\linewidth]{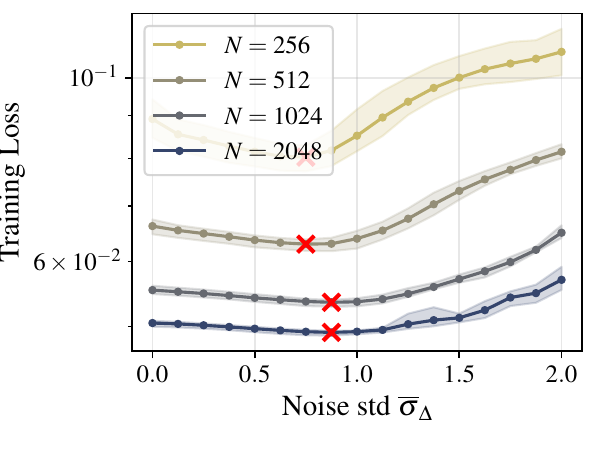}
    \caption{MLP (SGD): Noise sweep with fixed learning rate.}
\end{subfigure}\hfill
\begin{subfigure}[t]{0.32\linewidth}
    \centering
    \includegraphics[width=\linewidth]{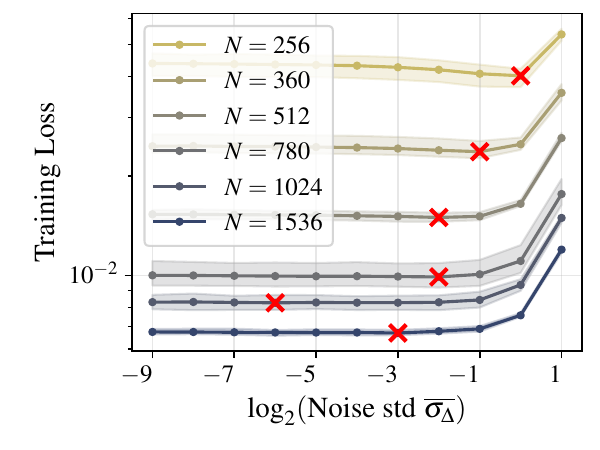}
    \caption{MLP (AdamW): Noise sweep with fixed learning rate.}
\end{subfigure}
\caption{Hyperparameter transfer for the upscaled model. Columns (left to right): GPT-2 with AdamW, MLP with SGD, and MLP with AdamW. 
For MLP experiments, curves report the mean across five runs, with min--max ranges across random seeds.
In (f), more widths are evaluated than in (c) because of the slightly noisy behavior at $N=1024$.}
\label{fig:hp-transfer-combined}
\end{figure}

\section{Conclusions and limitations}\label{appen:limitation}

We present the first rigorous theoretical framework for hyperparameter transfer in the model upscaling setting.
Our contributions are threefold.
First, we extend the notion of function-preserving expansion (static equivalence) to \emph{dynamic equivalence}---proving that, without noise injection, the upscaled model's entire training trajectory is equivalent to that of the base model---and establish both properties for general architectures and optimizers.
Second, and most critically, we establish \emph{hyperparameter transfer for upscaled systems}: by extending the Tensor Programs framework to accommodate upscaling, we show that the learning rate and injected noise magnitude can be tuned on a small, inexpensive proxy system and transferred directly to the target scale, substantially reducing the cost of hyperparameter tuning compared to prior upscaling methods such as Net2Net.
Third, our infinite-width limit analysis rigorously characterizes the training dynamics of upscaled models, opening the door to future theoretical investigations connecting model equivalence, optimization, and scaling.
Importantly, our framework applies universally across standard architectures and optimizers, in contrast to prior upscaling methods that are architecture-specific or purely empirical.
Experiments on MLPs, ResNets, and GPT-2 consistently validate the theory's predictions: hyperparameters transfer robustly across widths, and upscaled models converge faster while achieving lower terminal training loss than models trained from scratch.

Several limitations merit discussion.
First, our framework inherits the assumptions underlying $\mu$P: while $\mu$P is rigorously proven to yield optimal training dynamics in the infinite-width limit~\cite{yang2020feature,yang2021tensor}, formal proofs of hyperparameter transfer exist only in simple settings~\cite{hayou2025proof}.
Our experiments confirm that transfer is effective in practice, but it may not be uniformly robust across all tasks and architectures.
Moreover, analogous to $\mu$Transfer~\cite{yang2022tensor}, our framework transfers only \emph{certain} hyperparameters---the learning rate and injected noise magnitude---and does not extend to others such as the growth factor $k$ or the base-model scale $n$, whose optimal values are likely task- and data-specific and may require complementary approaches such as scaling-law analyses~\cite{du2024stacking}.
Second, our scope is restricted to width upscaling; extending the framework to joint width-and-depth upscaling, as required by many practical deployments, is a promising direction for future work.
Third, effective transfer empirically requires the tuning width $n_0$ to be sufficiently large (often exceeding $100$ units) so that $\mu$P's asymptotic behavior is a good approximation, while the cost savings grow with the target-to-tuning ratio $n/n_0$.
This makes the method most advantageous at large target widths---precisely the regime where hyperparameter tuning at scale is most expensive---but empirically validating these gains requires computational resources beyond our academic setting.
Finally, our analysis addresses training dynamics rather than generalization, optimization landscape, or implicit bias.
Performance may therefore degrade in settings prone to overfitting (as observed in the ResNet experiment on CIFAR-100), and understanding generalization under upscaling remains an open question.

%% file: appendix.tex
\section{Notation}\label{appen:notation}
\paragraph{Widening and upscaling.}
Throughout the paper, $n$ denotes hidden widths. Specifically, $n_\ell$ denotes the width of the $\ell$-th layer of an MLP, while $\nin$ and $\nout$ denote the input and output dimensions of a weight matrix. 
We use $k$ to represent the expansion multiplier applied when ``widening'' or ``upscaling'' a base model. 
The hidden widths of the resulting widened or upscaled model are denoted by $N$, typically satisfying $N=kn$. 
Quantities associated with the widened or upscaled model (including weights and hyperparameters) are denoted with the superscript $\bullet^{\uparrow}$. 
We use $\overline{\bullet}$ to denote width-independent quantities, such as the hyperparameter base constants in $\mu$P or rescaled variables in ``scaled'' architectures.

\paragraph{Matrices and vectors.}
For a vector $x \in \RR^{nk}$, we use $x_{(i)} = (x_i, x_{i+k}, \dots) \in \RR^n$ for $i \in [k]$ to denote the vectors that partition $x$.
Similarly, for a matrix $W \in \RR^{nk \times nk}$, we use $W_{(i,j)} \in \RR^{n \times n}$ for $i,j \in [k]$ to denote the blocks that partition $W$. 
Layer-specific quantities in an MLP are denoted by superscripts: $W^{(\ell)}$, $h^{(\ell)}$, and $x^{(\ell)}$ represent the weights, pre-activations, and post-activations of the $\ell$-th layer, respectively. Subscripts $\bullet_t$ denote quantities at a specific training step $t$.
\if\isArxiv 0
\paragraph{Kronecker product.}
We use $\otimes$ to denote the Kronecker product. 
In particular, we use it with the all-ones matrix to express expansions that duplicate entries.
For example, the diagram below shows $A\otimes \one_{\kout}\one_{\kin}^\top$ for $A\in \RR^{\nout\otimes \nin}$:

\begin{center}
\begin{tikzpicture}[
    block/.style={draw, minimum size=1.8cm, align=center, font=\footnotesize, fill=white},
    eb_dots/.style={minimum size=1.8cm, font=\large, anchor=center, align=center}
]

    \matrix (A) [matrix of math nodes, 
                left delimiter={[}, right delimiter={]}, 
                column sep=0.5cm, row sep=0.5cm] {
        A_{1,1} & \dots & A_{1,\nin} \\
        \vdots & \ddots & \vdots \\
        A_{\nout,1} & \dots & A_{\nout,\nin} \\
    };
    
    \node[above=0.8cm of A, font=\small\bfseries] (labelA) {Matrix $A$ ($\nout \times \nin$)};

    \draw[-stealth, ultra thick] ($(A.east)+(1,0)$) -- ($(A.east)+(3,0)$) 
        node[midway, above=3pt, font=\small] {$\otimes \one_{\kout}\one_{\kin}^\top$};

    \begin{scope}[shift={(7.5cm, 1.8cm)}]
        \node[block] (b11) at (0,0) {$A_{1,1}$ \\ block};
        \node[eb_dots, right=-\pgflinewidth of b11] (hd) {$\dots$};
        \node[block, right=-\pgflinewidth of hd] (b1n) {$A_{1,\nin}$ \\ block};
        
        \node[eb_dots, below=-\pgflinewidth of b11] (vd) {$\vdots$};
        \node[eb_dots, below=-\pgflinewidth of hd] (dd) {$\ddots$};
        \node[eb_dots, below=-\pgflinewidth of b1n] (vd2) {$\vdots$};
        
        \node[block, below=-\pgflinewidth of vd] (bn1) {$A_{\nout,1}$ \\ block};
        \node[eb_dots, right=-\pgflinewidth of bn1] (hd2) {$\dots$};
        \node[block, right=-\pgflinewidth of hd2] (bnn) {$A_{\nout, \nin}$ \\ block};

        \draw [decorate, decoration={brace, amplitude=4pt, raise=2pt}] 
            (b11.north west) -- (b11.north east) node [midway, above=5pt, font=\scriptsize] {$\kin$ cols};
        \draw [decorate, decoration={brace, amplitude=4pt, raise=2pt}] 
            (b11.south west) -- (b11.north west) node [midway, left=5pt, font=\scriptsize, rotate=90, anchor=south] {$\kout$ rows};
            
        \node[above=0.8cm of hd.north, font=\small\bfseries] {Expanded Matrix ($\nout \kout \times \nin\kin$)};
    \end{scope}

\end{tikzpicture}
\end{center}
For a vector $x=(x_1,\dots,x_n)^\top\in\RR^n$ and $k\ge 1$,
\[
x\otimes \one_k
=
(\underbrace{x_1,\dots,x_1}_{k\text{ times}},\ \dots,\ \underbrace{x_n,\dots,x_n}_{k\text{ times}})^\top
\in \RR^{nk}.
\]
The Kronecker product is bilinear and associative. We will also frequently use the mixed-product property (for compatible dimensions), i.e.,
\[
(A\otimes B)(C\otimes D) \;=\; (AC)\otimes (BD).
\]
\fi

\paragraph{Tensor Program.}
Many of our theoretical results build on the Tensor Program literature, and we follow its notation.
For a vector-like quantity $x$ in the Tensor Program, we write $\ket{x}$ for the random variable representing its infinite-width limit, and we decompose it as $\ket{x}=\hatket{x}+\dotket{x}$, where $\hatket{x}$ is the Gaussian part and $\dotket{x}$ is the correction term. 
For a scalar-valued quantity $c$, we denote by $\mathring{c}$ its deterministic limit as the width tends to infinity.
Additional notation is introduced in Appendix~\ref{appen:inf-width-limit}. 
For $x\in\RR^n$, we sometimes denote the empirical average by
\[
\langle x_\alpha\rangle_\alpha \;=\; \frac{1}{n}\sum_{\alpha\in[n]} x_\alpha.
\]
For higher-order tensors $x\in(\RR^{n})^{\otimes k}$, we write
\[
\langle x_{\alpha_1\dots \alpha_k}\rangle_{\alpha_1\dots \alpha_k}
\;=\;
\frac{1}{n^k}\sum_{\alpha_1,\dots,\alpha_k\in[n]} x_{\alpha_1\dots \alpha_k}.
\]

\if\isArxiv 1
\section{Related work}
\else
\section{Related Work}
\fi
\label{appen:related-work}
\paragraph{Scaling limits of neural networks, Tensor Program, and hyperparameter transfer.}
A large body of research seeks to derive tractable descriptions of neural networks in the infinite-width limit, both at initialization and during training.
Early work establishes Gaussian-process descriptions at initialization \cite{neal1996bayesian,lee2017deep,matthews2018gaussian}.
The Neural Tangent Kernel (NTK) characterizes a lazy-training regime in which features remain nearly fixed \cite{jacot2018neural}.
Complementary analyses based on mean-field theory and PDEs capture aspects of feature learning beyond fixed-kernel approximations \cite{mei2018mean,rotskoff2022trainability}.
More recently, the Tensor Program series \cite{yang2019wide,yang2020tensor,yang2021tensor,yang2020tensoriii,yang2020feature,yang2023tensor,yang2022tensor,yang2023tensorvi,yang2023spectral} provides a unified scaling-limit framework that applies across diverse architectures and optimizers and tracks dynamics in various regimes including the lazy-kernel regime and the feature learning regime.
Our work builds on this framework.

The Tensor Program framework captures standard neural computations as compositions of matrix multiplications and elementwise nonlinearities, unified by the \nesort program (see Definition~\ref{def:nesort}).
For neural networks expressible in \nesort, and under appropriate scaling, training dynamics in the infinite-width limit can be characterized by deterministic state-evolution recursions that track the distributions of preactivations and model outputs over training.
Additional background is provided at the beginning of Appendix~\ref{appen:modified-TP}.
Building on the \nesort scaling-limit analysis, Yang and collaborators introduced the {maximal-update parameterization} ($\mu$P), which prescribes how weights, learning rates, initialization (and related hyperparameters) should scale with width. Within a broad class of width-dependent parameterizations, $\mu$P is, in a sense, optimal: it is the unique stable choice under which all parameters are initialized and trained with ``maximal'' (i.e., non-vanishing, non-exploding) updates in the infinite-width limit. 
Put differently, $\mu$P keeps the network in the optimal feature-learning regime, rather than drifting into a degenerate or lazy-training one.
A practical advantage of $\mu$P---especially when contrasted with the standard parametrization typically paired with Kaiming initialization~\cite{he2015delving}---is {hyperparameter transfer}: hyperparameters tuned on small-width models tend to remain near-optimal as width increases, substantially reducing the cost of scaling up.
Extensions of $\mu$P and hyperparameter transfer to the infinite-depth setting, as well as to joint infinite-width-and-depth regimes, have been developed and analyzed \cite{bordelon2023depthwise,yang2023tensorvi,hayou2023width,bordelon2024infinite,dey2025don,yao2025understanding,mlodozeniec2025completed}. 
To the best of our knowledge, however, rigorous guarantees of hyperparameter transfer remain scarce: the only formal proof we are aware of is currently limited to linear MLPs \cite{hayou2025proof}.

\paragraph{Model upscaling.}
Model upscaling refers to leveraging smaller, pretrained neural networks to initialize the training of larger models within the same architecture family, typically by increasing width and/or depth.\footnote{Model upscaling is also known in the literature as knowledge transfer~\cite{chen2015net2net, gong2019efficient, chen2022bert2bert} and model growth~\cite{wang2023learning,pan2023reusing,du2024stacking, samragh2024scaling}. }
The foundational work on Net2Net~\cite{chen2015net2net} introduced methods for upscaling that preserve the underlying parametrized function.
It includes width upscaling (Net2WiderNet), which randomly duplicates weight entries, and depth upscaling (Net2DeeperNet), which inserts identity-mapping layers.
It also proposes adding a small amount of noise to break symmetry, as we do in our work.
While our upscaling procedure is mechanically akin to Net2WiderNet, our contributions go substantially beyond Net2Net in three ways:
\begin{enumerate}[leftmargin=*, itemsep=0pt]
    \item We extend static equivalence (function preservation at initialization) to dynamic equivalence (function preservation throughout the entire training trajectory), and rigorously establish both for a broad class of architectures and optimizers, whereas \citet{chen2015net2net} demonstrates only MLPs and convolutional networks.
    \item Our careful choice of noise and hyperparameter scaling enables theory-grounded hyperparameter transfer, greatly reducing the tuning cost for upscaled models.
    By contrast, Net2WiderNet provides no tuning guidance: \citet{chen2015net2net} leaves the symmetry-breaking noise unspecified and reports that optimal learning rates for upscaled models are about $1/10$ of those for base models in their experiments.
    Such shifts necessitate costly retuning at the upscaled model size, which our framework mitigates through zero-shot transfer from small systems.
    \item We establish a theoretical connection between model equivalences across widths and the infinite-width limit theory, enabling a rigorous characterization of training dynamics involving upscaling and opening the door to future theoretical analysis.
\end{enumerate}

Motivated by the function-preserving philosophy, recent work has empirically explored model upscaling for large language models (LLMs)~\cite{gong2019efficient, chen2022bert2bert, samragh2024scaling, du2024stacking, kim2024solar, zhang2024aquilamoe, mallik2024warmstarting} and vision transformers~\cite{hao2025scalenet}.
Several studies~\cite{chen2022bert2bert, zhang2024aquilamoe} propose alternative symmetry breaking by mixing parameters from upper layers.
In contrast to known function-preserving transformations, \citet{wang2023learning, pan2023reusing} develops data-driven upscaling strategies that learn mappings from smaller pretrained models to larger ones, without guarantees on equivalence or training dynamics.
Generally, these empirical efforts tend to focus on specific architectures (e.g., particular transformer variants) and rely on ad-hoc heuristics, without a unifying framework applicable to general architectures and optimizers.
More importantly, hyperparameter tuning for upscaled models remains an open problem across all of these approaches: practitioners either perform costly retuning at the upscaled dimension, reuse base-model hyperparameters (often suboptimal), or apply heuristic adjustments.
Among these, \citet{mallik2024warmstarting} is the only prior work that also connects model upscaling with $\mu$P, but does so at an empirical level without theoretical guarantees on hyperparameter transfer.
In particular, their zero-padding-based upscaling introduces non-zero-mean initializations that fall outside the regime where the infinite-width limit theory underlying $\mu$Transfer applies, so hyperparameter transfer is not expected to hold in their setting, and is neither theoretically justified nor empirically validated.
Our work takes a fundamentally different approach: we provide a theoretically grounded framework that applies to general architectures and optimizers, and rather than reusing base-model hyperparameters at the upscaled scale, we transfer hyperparameters from a small upscaling system (width $n_0 \to kn_0$) to a larger one (width $n \to kn$), with both rigorous theoretical grounding (Section~\ref{sec:choose-hyperparam-upscaling} and Appendix~\ref{appen:modified-TP}) and empirical validation.

\paragraph{Any-dimensional learning.}
The motivation of our work---particularly the analysis of equivalence across neural-network weight spaces with varying hidden widths in Section~\ref{sec:equivalence}---originates from a parallel line of research: any-dimensional learning. 
In our work, we examine the ``equivalence'' between narrower and wider architectures achieved by expanding a narrow network’s weights through ``duplication'' of entries followed by suitable rescaling.
We note a related form of ``cross-dimensional equivalence'' that we do not pursue here: expanding a narrow model’s weights by zero-padding also yields a wider model that is functionally equivalent.
This type of ``cross-dimensional'' equivalence, via duplication or zero padding, has been used to study convex sets~\cite{levin2023free} and polynomial optimization~\cite{levin2025any} over varying dimensions.
Similar ideas have been used for any-dimensional learning, in which inputs and outputs can be objects of arbitrary size.
Architectures operating on sets, graphs, or point clouds are examples of ``any-dimensional neural networks:'' they use a fixed number of parameters while processing inputs of arbitrary size.
Correspondingly, the underlying learning tasks are likewise defined to arbitrary sizes and sometimes require cross-size equivalence via duplication or zero-padding.
Along these lines, \citet{levin2024any} provides a general framework for constructing equivariant, any-dimensional neural networks; \citet{diaz2025invariant} explores these ideas in the context of kernel methods; and \citet{levin2025transferring} investigates size generalization for any-dimensional neural networks.
Our work extends this line of research by moving from the input/output space to the weight space, establishing cross-dimensional equivalence among network parameters.

\if\isArxiv 1
\section{Equivalence between models of different widths}
\else
\section{Equivalence Between Models of Different Widths}
\fi
\if\isArxiv 1
\subsection{Missing details from Section~\ref{sec:general-optimizer-equivalence}: MLPs with general optimizers}
\else
\subsection{Missing Details from Section~\ref{sec:general-optimizer-equivalence}: MLPs with General Optimizers}
\fi
\label{appen:equivalence-optimizer}

\begin{proof}[Proof of Proposition \ref{prop:entrywise_update}]
    We proceed by strong induction on the training step $t$. Suppose $\Wlupsub{s} = k_{\ell-1}^{-1}(\Wlsub{s}\otimes \one_{k_\ell}\one_{k_{\ell-1}}^\top)$ holds for all $s\leq t$. We will show that it also holds for $t+1$.
    First, just like in Proposition~\ref{prop:MLP-lr-equivalent-updates}, we have 
    \[\dWlupsub{s} = {k^{-1}_{\ell}}\dWlsub{s}\otimes  \one_{k_\ell} \one_{k_{\ell-1}}^\top\]
    for all $s\leq t$.
    We perform different calculations depending on the type of weight decay involved.

    \noindent \emph{Case 1: Vanilla weight decay.} 
    Substituting the construction $\gammaupl = k_\ell^m k_{\ell-1}^{-1}\gammal$, $\lambdaupl = k_{\ell-1}k_{\ell}^{-1}\lambdal$, and $\epsilonupl = k_\ell^{-1}\epsilonl$ into the weight update rule \eqref{eq:entrywise_update}, we obtain
    \begin{align*}
        (\Wlupsub{t+1})_{\alpha,\beta}
        &= (\Wlupsub{t})_{\alpha,\beta} \\
         &\quad- \gammaupl Q_t\left((\dWlupsub{0} + \lambdaupl \Wlupsub{0})_{\alpha,\beta}, \dots, (\dWlupsub{t} + \lambdaupl\Wlupsub{t})_{\alpha,\beta}; \epsilonupl\right)\\
         &= (k_{\ell-1}^{-1}\Wlsub{t}\otimes \one_{k_\ell}\one_{k_{\ell-1}}^\top)_{\alpha,\beta} \\
         &\quad - k_\ell^m k_{\ell-1}^{-1}\gammal Q_t\left(\left((k_\ell^{-1}\dWlsub{0} + k_{\ell-1}k_{\ell}^{-1}\lambdal k_{\ell-1}^{-1}\Wlsub{0}) \otimes \one_{k_\ell}\one_{k_{\ell-1}}^\top\right)_{\alpha,\beta}, \dots,\right.\\
         &\hspace{3.1cm} \left.
         \left((k_\ell^{-1}\dWlsub{t} + k_{\ell-1}k_{\ell}^{-1}\lambdal k_{\ell-1}^{-1}\Wlsub{t}) \otimes \one_{k_\ell}\one_{k_{\ell-1}}^\top\right)_{\alpha,\beta}; k_\ell^{-1}\epsilonl\right).
         \intertext{By the homogeneity of $Q_t$ (of degree $m$), this yields}
        &= (k_{\ell-1}^{-1}\Wlsub{t}\otimes \one_{k_\ell}\one_{k_{\ell-1}}^\top)_{\alpha,\beta} \\
        &\quad - k_{\ell-1}^{-1}\gammal Q_t\left(\left((\dWlsub{0} + \lambdal \Wlsub{0}) \otimes \one_{k_\ell}\one_{k_{\ell-1}}^\top\right)_{\alpha,\beta}, \dots,\right. \\
        &\hspace{2.65cm}\left. \left((\dWlsub{t} + \lambdal \Wlsub{t}) \otimes \one_{k_\ell}\one_{k_{\ell-1}}^\top\right)_{\alpha,\beta}; \epsilonl\right)\\
        &= k_{\ell-1}^{-1}(\Wlsub{t+1} \otimes \one_{k_\ell}\one_{k_{\ell-1}}^\top)_{\alpha,\beta}.
    \end{align*}
    
    \noindent \emph{Case 2: Decoupled weight decay.}
    Similarly, substituting the scalings $\gammaupl = k_\ell^m k_{\ell-1}^{-1}\gammal$, $\lambdaupl = k_{\ell-1}k_\ell^{-m}\lambdal$, and $\epsilonupl = k_\ell^{-1}\epsilonl$ into the weight update rule \eqref{eq:entrywise_update_decoupled}, we obtain
    \begin{align*}
        (\Wlupsub{t+1})_{\alpha,\beta}
        &= \left(1-\lambdaupl \gammaupl\right)
        \left((\Wlupsub{t})_{\alpha,\beta} 
         - \gammaupl Q_t\left((\dWlupsub{0})_{\alpha,\beta}, \dots, (\dWlupsub{t})_{\alpha,\beta}; \epsilonupl\right)\right)\\
         &= (1-\lambdal\gammal)\left((k_{\ell-1}^{-1}\Wlsub{t}\otimes \one_{k_\ell}\one_{k_{\ell-1}}^\top)_{\alpha,\beta} \right.\\
         &\left.\quad - k_\ell^m k_{\ell-1}^{-1}\gammal Q_t\left((k_\ell^{-1}\dWlsub{0} \otimes \one_{k_\ell}\one_{k_{\ell-1}}^\top)_{\alpha,\beta}, \dots, \right.\right.\\
         &\hspace{3.1cm}\left.\left.(k_\ell^{-1}\dWlsub{t} \otimes \one_{k_\ell}\one_{k_{\ell-1}}^\top)_{\alpha,\beta}; k_\ell^{-1}\epsilonl\right)\right).
         \intertext{Again, by the homogeneity of $Q_t$ (of degree $m$), this yields}
        &= (1-\lambdal\gammal)\left((k_{\ell-1}^{-1}\Wlsub{t}\otimes \one_{k_\ell}\one_{k_{\ell-1}}^\top)_{\alpha,\beta}\right. \\
        &\quad \left.- k_{\ell-1}^{-1}\gammal Q_t\left((\dWlsub{0})_{\alpha,\beta} \otimes \one_{k_\ell}\one_{k_{\ell-1}}^\top, \dots, (\dWlsub{t})_{\alpha,\beta} \otimes \one_{k_\ell}\one_{k_{\ell-1}}^\top; \epsilonl\right)\right)\\
        &= k_{\ell-1}^{-1}(\Wlsub{t+1} \otimes \one_{k_\ell}\one_{k_{\ell-1}}^\top)_{\alpha,\beta}.
    \end{align*}

    The induction is complete in both cases, which establishes the result.
\end{proof}

Next, we instantiate Proposition~\ref{prop:entrywise_update} on concrete optimization methods.
In particular, we describe how to apply this result to the implementations of these methods found in the \texttt{PyTorch} library.
\begin{example}[SGD with and without momentum]
    The update function of stochastic gradient descent (SGD) takes the form
    $Q_t(x_0,\dots, x_t) = x_t,$
    i.e., a degree-one homogeneous map. Hence, one should choose learning rate  
    $\gammaupl \colonequals k_\ell k_{\ell-1}^{-1}\gammal$ to ensure equivalent updates between equivalent weights.
    For SGD with momentum $\beta$ and dampening $\tau$, the update function is given by
    \[Q_t(x_0,\dots,x_t)=(1-\tau)\sum_{s=0}^t\beta ^{t-s} x_s.\]
    This function is again degree-one homogeneous. Hence, using the same learning rate $\gammaupl$, we also obtain dynamic equivalence.%
    Further, for SGD with Nesterov momentum, i.e.,  (\texttt{nesterov=True} in \texttt{PyTorch}), the update function is given by
    \[Q_t(x_0,\dots, x_t)=(1+\beta - \beta\tau)x_t + (1-\tau)\sum_{s=0}^{t-1}\beta^{t-s+1}x_s,\]
    which once more is degree-one homogeneous.
    So, the same conclusion applies.
    Finally, in \texttt{PyTorch} implementation of SGD, the weight decay is not implemented in a decoupled way as in Definition~\ref{def:entrywise-optimizer}, so one should choose a weight decay constant for the widened model of
    \[\lambdaupl\colonequals k_{\ell-1}k_\ell^{-1} \lambdal.\]
\end{example}

\begin{example}[Adam and AdamW]
    The Adam optimizer~\cite{kingma2014adam} with hyperparameters $\beta_1,\beta_2$ is also an entrywise optimizer with update function
    \[Q_t(x_0,\dots, x_t;\varepsilon )=\frac{(1-\beta_1^{t})^{-1}(1-\beta_1)\sum_{s=0}^t \beta_1^{t-s}x_s}{\sqrt{(1-\beta_2^t)^{-1}(1-\beta_2)\sum_{s=0}^{t}\beta_2^{t-s}x_s^2} + \varepsilon},\]
    where $\varepsilon>0$ is a small constant for numerical stability.  
    The function $Q_t$ is a degree-zero homogeneous map. 
    Hence, for Adam, one should use the following hyperparameters for the widened model
    \[\gammaupl\colonequals k_{\ell-1}^{-1}\gammal,\quad 
    \epsilonupl\colonequals k_\ell^{-1}\epsilonl.\]
    The update function of AMSGrad~\cite{reddi2019convergence} (\texttt{amsgrad=True} in \texttt{PyTorch}) is given by
    \begin{align*}
    Q_t(x_0,\dots, x_t;\varepsilon)&=\frac{(1-\beta_1^{t})^{-1}(1-\beta_1)\sum_{s=0}^t \beta_1^{t-s}x_s}{\sqrt{\max\left({ v_0(x_0), \dots,  v_t(x_0,\dots, x_t)}\right)} + \varepsilon},\\
    v_t(x_0,\dots, x_t) &=(1-\beta_2^t)^{-1}(1-\beta_2)\sum_{s=0}^{t}\beta_2^{t-s} x_s^2,
    \end{align*}
    which is again degree-zero homogeneous. Hence, the same choice applies here.
    Once more, by default, the weight decay is not implemented in a decoupled way in the {Adam} optimizer in \texttt{PyTorch}, so one should choose  
    \[\lambdaupl \colonequals  k_{\ell-1}k_\ell^{-1}\lambdal.\]
   Finally, for the {AdamW} optimizer, i.e., \texttt{decoupled\_weight\_decay=True} in \texttt{Adam}, weight decay \emph{is} implemented in a decoupled way~\cite{loshchilov2017decoupled}. 
   Hence, one should instead use
    \[
    \lambdaupl \colonequals k_{\ell-1} \lambdal.
    \]
\end{example}

\if\isArxiv 1
\subsection{Missing details from Section~\ref{sec:general-architectures}: Equivalence for general network architectures}
\else
\subsection{Missing Details from Section~\ref{sec:general-architectures}: Equivalence for General Network Architectures}
\fi
\label{appen:equivalence-general}
To extend these results to all ``standard'' architectures, we will use \nesort programs. Appendix~\ref{appen:background-TP} starts by reviewing the version of these programs appearing in \citet{yang2023tensor}, which is more general than those in earlier papers on the subject. With this background in place, we state a formal version of Theorem~\ref{thm:equivalence} in Theorem~\ref{thm:equivalence-appendix}. We then present a slightly more general formulation of the same result in Appendix~\ref{appen:scaled-architecture}, followed by a proof in Appendix~\ref{appen:proof-equivalence}.

\subsubsection{Background: \texorpdfstring{\nesort program}{Tensor Program} and the backpropagation program} \label{appen:background-TP}

We briefly review the \nesort language (Definition~2.6.1 in \citet{yang2023tensor}). 
We introduce a minor extension that allows ``width'' dimensions to vary across layers. 
In the original \nesort construction used for infinite-width analysis, assuming equal layer widths was essentially a notational convenience: because all widths are taken to diverge at the same rate, any finite discrepancies between them vanish asymptotically and can be ignored.
For our purpose, we explicitly allow varying widths.

\begin{definition}[\nesort program]\label{def:nesort} %
A \nesort program is an iterative procedure that generates a sequence of vectors \(\bm{x}\) and a sequence of real scalars \(\bm{c}\), defined inductively from an initial collection of scalars \(\bm{c}^0 \subseteq \bm{c}\), an initial collection of vectors \(\bm{x}^0 \subseteq \bm{x}\), and an initial set of matrices \(\mathcal{W}\), by repeatedly applying any of the following allowed operations.\footnote{For this section, we slightly depart from Definition~2.6.1 of \citet{yang2023tensor}: The initial scalars, vectors, and matrices are taken to be deterministic, rather than randomly initialized under the prescribed rules. }

\begin{itemize}
    \item \texttt{Avg}: Choose a vector $x \in \bm{x}$ of dimension $n$, and append to $\bm{c}$ a scalar
    \[
        \langle x_\alpha\rangle_\alpha=\frac{1}{n} \sum_{\alpha=1}^{n} x_\alpha \in \RR.
    \]
    \item {\texttt{MatMul}}: Choose a matrix $W \in \mathcal{W}$ and vector $x \in \bm{x}$ with compatible dimensions, and append to $\bm{x}$ the vector
    \begin{equation}\label{eq:mat_mul}
        Wx,  \quad \text{or} \quad W^\top x.
    \end{equation}

    \item \texttt{OuterNonlin}: Given integers $r \ge 0, n\in \NN$ and a function $\psi : \RR^{|\bm{\tilde x}|(r+1)+l} \to \RR$, append to $\bm{x}$ the vector
    \[
        y \in \RR^n, \quad y_\alpha = \langle \psi(\tilde{\bm{x}}_\alpha; \tilde{\bm{x}}_{\beta_1}; \dots; \tilde{\bm{x}}_{\beta_r}; \bm{c} )\rangle_{\beta_1, \dots, \beta_r} =\frac{1}{n^r} \sum_{\beta_1,\dots,\beta_r=1}^{n} \psi(\bm{\tilde x}_\alpha; \bm{\tilde x}_{\beta_1}; \dots; \bm{\tilde x}_{\beta_r}; \bm{c}).
    \]
   Here, $\bm{\tilde{x}} \subseteq \bm{x}$ denotes the subset of the vectors with the same dimension $n$, and we will think of $\bm{\tilde{x}}$ as a matrix with $n$-dimensional columns.  
    We write $\bm{\tilde{x}}_\gamma$ for the $\gamma$th row of the matrix $\bm{\tilde{x}}$, and $|\bm{\tilde{x}}|$ denotes the number of columns in the matrix $\bm{\tilde x}$.
\end{itemize}
\end{definition}
We emphasize that a \nesort program itself is merely the syntactic structure of the above transformations, not their evaluations on any particular family of scalars, vectors, and matrices.
It may be viewed as specified by an abstract syntax tree, for instance, with nodes representing the above operations (together with the nonlinearities $\psi$).

Consider a neural network architecture parametrizing a function $\RR^{\din} \to \RR^{\dout}$ with a weight space consisting of $\ell$ matrices, $m$ vectors, and $j$ scalar weights.
Suppose the matrices have dimensions $n_{1,\mathrm{out}} \times n_{1,\mathrm{in}}, \dots, n_{\ell,\mathrm{out}} \times n_{\ell,\mathrm{in}}$, and the vectors have dimensions $n_1, \dots, n_m$.
We denote the weight space as
\[
\mathcal{T}_{\bm{n}} = (\RR)^j \oplus 
\left(\bigoplus_{i=1}^m \RR^{n_i}\right) \oplus 
\left(\bigoplus_{i=1}^\ell \RR^{n_{i,\mathrm{out}} \times n_{i,\mathrm{in}}}\right),
\]
which is indexed by the ``width vector'' that collects the hidden widths of the weights.
\[
\bm{n} = (n_1, \dots, n_m, n_{1,\mathrm{in}}, n_{1,\mathrm{out}}, \dots, n_{\ell,\mathrm{in}}, n_{\ell,\mathrm{out}}).
\]

A \nesort program $\pi$ \emph{represents} this neural network architecture if the following properties hold.
First, $\pi$ starts with the initial set of scalars $\bm{c}_0$ consisting of the (initialized) $j$ scalar weights and the input of dimension $\din$, the initial set of vectors $\bm{x}_0$ consisting of the (initialized) $m$ vector weights, and the initial set of matrices $\mathcal{W}$ consisting of the (initialized) $\ell$ matrix weights in $\mathcal{T}_{\bm{n}}$. 
The \nesort program then describes the computations of all intermediate values in the architecture's forward pass. 
At the end, it picks vectors $(x^1, \dots, x^{\dout})$ from the final set of vectors $\bm{x}$, and outputs $y \in \RR^{\dout}$ with entries
\begin{equation}\label{eq:readout}
y_i = \sum_{\alpha} x^i_\alpha, \quad i = 1, \dots, \dout.
\end{equation}

Appendix~A of \citet{yang2019wide} shows that many common neural network components---BatchNorm, skip connections, convolution, pooling, GRU, LSTM, layer normalization, and scaled attention---are expressible in \nesort, and by composing these we see that $\nesort$ programs can represent standard CNN, RNN, and Transformer architectures.

Given a program $\pi$ representing an architecture,
Definition 2.9.14 of \citet{yang2023tensor} shows that one can automatically construct another \nesort program for backpropagation to compute all of the gradient vectors with respect to $x$ needed to perform gradient updates.
    A few times here and in the discussion below, we will talk about automatically building a new $\nesort$ program from a given one.
    Since we view $\nesort$ as a formal programming language, this kind of procedure should be thought of as akin to compilation of ordinary computer programs: we perform automated transformations on a $\nesort$ program to turn it into another $\nesort$ program, perhaps having different operational semantics.

\begin{definition}[Backpropagation program]\label{def:backprop-program}
Consider any \nesort program $\pi$ and a vector $x\in \RR^{n_x}$ in $\pi$. 
Then $\pi$'s \emph{backpropagation program} with respect to $x$ is an extension of $\pi$ defined by constructing the following objects on top of $\pi$: 
(Intuitively, one should interpret $d^x y = n_y \frac{\partial \langle x_\alpha\rangle_\alpha}{\partial y}$ 
if $y\in \RR^{n_y}$ is a vector and $d^x c = \frac{\partial  \langle x_\alpha\rangle_\alpha}{\partial c}$ 
if $c$ is a scalar.)
\begin{itemize}
    \item $d^x x \colonequals \one_{n_x} \in \RR^{n_x}$.
    
    \item For any \texttt{MatMul} instruction $z \colonequals Wy$ in $\pi$, we construct 
    \begin{equation}\label{eq:matmul-backprop}
    d^{x \mid z} y \colonequals W^\top d^x z \text{ (via another \texttt{MatMul})}.
    \end{equation}
    
    \item For any \texttt{Avg} instruction $c \colonequals \langle x_\alpha\rangle_\alpha$ in $\pi$, 
    we construct 
    \[d^{x\mid c} z \colonequals (d^x c) \one_{n_z} \in \RR^{n_z} \text{ (via \texttt{OuterNonlin})}. \]
    
    \item For any \texttt{OuterNonlin} instruction $y \colonequals \langle \psi(\tilde{\bm{x}}; \tilde{\bm{x}}_{\beta_1}, \dots, \tilde{\bm{x}}_{\beta_r}; \bm{c} )\rangle_{\beta_1, \dots, \beta_r}$,
    for each $i=0, \dots, r$, let
    \[ \mathbf{g}^i_{\beta_0 \dots \beta_r} \colonequals d^x y \psi_i(\tilde{\bm{x}}_{\beta_0}, \dots, \tilde{\bm{x}}_{\beta_r}; \bm{c}) \in \RR^{|\bm{\tilde x}|},\]
    where $\psi_i : \RR^{|\tilde{\bm{x}}|(r+1)+\ell} \to \RR^{|\tilde{\bm{x}}|}$ yields the derivative of $\psi$ with respect to $x$ in the $i$-th slot. 
    When $i=r+1$, we make the analogous definition for $\mathbf{g}^{r+1}_{\beta_0 \dots \beta_r} \in \RR^{|\bm{c}|}$. 
    We write $\bm{\beta} = (\beta_0, \dots, \beta_r)$, 
    $\bm{\beta}[i \mapsto \alpha] = (\beta_0, \dots, \beta_{i-1}, \alpha, \beta_{i+1}, \dots, \beta_r)$, 
    and $\bm\beta_{-i} = (\beta_0, \dots, \beta_{i-1}, \beta_{i+1}, \dots, \beta_r)$.
    Then we construct $d^{x\mid y} c = (d^{x\mid y} c^1, \dots, d^{x\mid y} c^{|c|})$ and $d^{x\mid y} \bm{x} = (d^{x\mid y} x^1, \dots, d^{x\mid y} x^{|\bm{x}|})$ by
    \[
    d^{x\mid y} \bm{c} \colonequals \langle\mathbf{g}_{\bm\beta}^{r+1}\rangle_{\bm\beta} \in \RR^{|\bm{c}|} \quad \text{(using \texttt{OuterNonlin} and \texttt{Avg})} 
    \]
    \[
    d^{x\mid y} \bm{x}_\alpha \colonequals \sum_{i=0}^r \langle \mathbf{g}^i_{\bm\beta[i \mapsto \alpha]} \rangle_{\bm\beta_{-i}} \in \RR^{|\bm{x}|} \quad \text{(using \texttt{OuterNonlin})} .
    \]
    Explicitly,
    \[
    d^{x\mid y} \bm{x}_\alpha = \langle \mathbf{g}_{\alpha \beta_1 \dots \beta_r}^0 \rangle_{\beta_1 \dots \beta_r} + \langle \mathbf{g}_{\beta_0 \alpha \beta_2 \dots \beta_r}^1 \rangle_{\beta_0 \beta_2 \dots \beta_r} + \dots + \langle \mathbf{g}_{\beta_0 \dots \beta_{r-1} \alpha}^r \rangle_{\beta_0 \dots \beta_{r-1}}.
    \]
    
    \item Finally, for every vector or scalar $y$ in $\pi$ other than $x$,
    \[
    d^x y \colonequals \sum_u d^{x \mid u} y 
    \]
    where $u$ ranges over all vectors or scalars in $\pi$ whose construction used $y$.
\end{itemize}
The subprogram that constructs all of these new objects is denoted $d^x\pi$.
For a \nesort program $\pi$ representing a neural-network architecture, backpropagation involves the subprograms $d^{x^1}\pi, \dots$, $d^{x^{\dout}}\pi$, where $(x^1, \dots, x^{\dout})$ are vectors used to generate the output in \eqref{eq:readout}.
\end{definition}

With this background, we now write the formal version of Theorem~\ref{thm:equivalence}, that we will proceed to prove in the rest of this section.
\begin{theorem}[Formal version of Theorem~\ref{thm:equivalence}]\label{thm:equivalence-appendix}
    Consider a neural network architecture represented by a \nesort program $\pi$ satisfying the following two rules.
    \begin{enumerate}
        \item Its forward pass only uses matrix multiplication of the form $Wx$, not of the form $W^\top x$.
        \item Its readout rule \eqref{eq:readout} replaces the sum by the average, i.e., $y_i = \langle x^i_\alpha\rangle_\alpha$.
    \end{enumerate}
    Assume an entrywise optimizer whose update function is homogeneous of degree $m$.
    Consider a base model under this architecture, with widths $\bm{n}=(n_1,\dots, n_m, n_{1,\mathrm{in}}, n_{1,\mathrm{out}},\dots, n_{\ell,\mathrm{in}}, n_{\ell,\mathrm{out}})$ and initial weights 
    \[\Theta = \big(\underbrace{\theta_1,\dots, \theta_j}_{\text{scalars}},\; \underbrace{x_1,\dots, x_m}_{\text{vectors}},\; \underbrace{W_1,\dots, W_\ell}_{\text{matrices}}\big)\in \mathcal{T}_{\bm{n}}.\]
    It is trained with the entrywise optimizer with learning rate $\gamma$, weight decay constant $\lambda$, and additional hyperparameter $\varepsilon$.

    Construct a widened model under the same architecture with widths 
    \[\bm{N} = \bm{k} \odot \bm{n} = (k_1 n_1,\dots, k_m n_m, k_{1,\mathrm{in}} n_{1,\mathrm{in}}, k_{1,\mathrm{out}} n_{1,\mathrm{out}},\dots, k_{\ell,\mathrm{in}} n_{\ell,\mathrm{in}}, k_{\ell,\mathrm{out}} n_{\ell,\mathrm{out}}),\]
    where the width multipliers are summarized in the vector $\bm{k} = (k_1,\dots,k_m,k_{1,\mathrm{in}},k_{1,\mathrm{out}},\dots,k_{\ell,\mathrm{in}},k_{\ell,\mathrm{out}}).$
    Its weights $\Theta^\uparrow \in \mathcal{T}_{\bm{N}}$ are constructed using the rule in the ``Widening operation'' column of Table~\ref{tab:widening}. That is,
    \begin{equation}
    \Theta^\uparrow \colonequals \big(\underbrace{\theta_1,\dots, \theta_j}_{\text{scalars}},\; \underbrace{x_1 \otimes \one_{k_1},\dots, x_m \otimes \one_{k_m}}_{\text{vectors}},\; \underbrace{k_{1,\mathrm{in}}^{-1}W_1 \otimes \one_{k_{1,\mathrm{out}}}\one_{k_{1,\mathrm{in}}}^\top,\dots, k_{\ell,\mathrm{in}}^{-1} W_\ell \otimes \one_{k_{\ell,\mathrm{out}}}\one_{k_{\ell,\mathrm{in}}}^\top}_{\text{matrices}}\big).
    \end{equation}
    The widened model is trained with the same optimizer using per-weight hyperparameters given by the ``Hyperparameters'' column of Table~\ref{tab:widening}.
    
    Then, at all training steps, both models parametrize the same function.
\end{theorem}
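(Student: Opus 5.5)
We will follow the same two-level induction used in Propositions~\ref{prop:narrow-wide-equivalence}, \ref{prop:MLP-lr-equivalent-updates} and~\ref{prop:entrywise_update}: an inner induction over the instructions of the \nesort program $\pi$ (forward pass, then the backpropagation program of Definition~\ref{def:backprop-program}), and an outer induction over training steps $t$.

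\emph{Forward pass.} Each vector in $\pi$ lives in some width dimension $n$, with multiplier $k$. We will show that every vector $x$ computed in the widened forward pass satisfies $x^\uparrow = x\otimes\one_k$, and every scalar is unchanged. Vector weights satisfy this by construction, and scalar weights and inputs are unchanged.
\begin{itemize}
\item For \texttt{Avg}, averaging a duplicated vector returns the same average.
\item For \texttt{MatMul} $Wx$, rule~1 rules out $W^\top x$. The mixed-product identity then gives $\kin^{-1}(W\otimes\one_{\kout}\one_{\kin}^\top)(x\otimes\one_{\kin}) = (Wx)\otimes\one_{\kout}$, exactly as in Proposition~\ref{prop:narrow-wide-equivalence}.
\item For \texttt{OuterNonlin}, the row $\tilde{\bm x}^\uparrow_\alpha$ equals $\tilde{\bm x}_{\lceil\alpha/k\rceil}$. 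The normalized sums $\frac{1}{N^r}\sum_{\beta}$ over duplicated indices equal $\frac{1}{n^r}\sum_\beta$ over the originals, because each original index appears exactly $k$ times and the $k^r$ factor cancels against $N^r=k^rn^r$.
\end{itemize}
By rule~2, the averaged readout is unchanged. Hence the outputs agree, and so do the output-layer loss gradients.

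\emph{Backward pass.} Define $d^x y$ as in Definition~\ref{def:backprop-program}. Because that definition is already normalized by $n_y$, we expect the cleanest invariant to be $(d^xy)^\uparrow = (d^xy)\otimes\one_k$ for vectors and equality for scalars. Here $x$ ranges over the readout vectors $x^i$, whose own multiplier cancels through the average in the readout.

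The \texttt{MatMul} backward step $W^\top d^xz$ sums over $\kout$ duplicated rows. This produces a factor $\kout$, which must be reconciled with the normalization change $n_y\to N_y$. We will track these factors carefully and convert to the true gradients $\partial\mathcal L/\partial y = n_y^{-1}d^x y$, which scale by $k^{-1}$, matching Proposition~\ref{prop:MLP-lr-equivalent-updates}. The \texttt{OuterNonlin} and \texttt{Avg} backward rules are again normalized averages, so the invariant propagates through them exactly as in the forward pass.

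\emph{Parameter gradients.} From the invariants we will read off the gradients:
\begin{itemize}
\item for a vector-like weight, the gradient is $k^{-1}\,dx\otimes\one_k$;
\item for a matrix weight, $dW = \sum(\partial\mathcal L/\partial z)\,y^\top$ gives $dW^\uparrow = \kout^{-1}\,dW\otimes\one_{\kout}\one_{\kin}^\top$;
\item for a scalar weight, the gradient is unchanged, since it is a normalized average.
\end{itemize}

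\emph{Optimizer step.} The optimizer step is then verbatim the homogeneity computation of Proposition~\ref{prop:entrywise_update}, applied weight by weight with the rescalings of Table~\ref{tab:widening}. For a vector-like weight, the weight picks up no factor and the gradient picks up $k^{-1}$, so $\gamma^\uparrow=k^m\gamma$, $\varepsilon^\uparrow=k^{-1}\varepsilon$, and $\lambda^\uparrow=k^{-1}\lambda$ (vanilla) or $k^{-m}\lambda$ (decoupled). This is the matrix case with $\kin=1$. The widening relation is therefore preserved at step $t+1$, and strong induction on $t$ closes the argument, including for optimizer histories.

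\emph{Main obstacle.} The step we expect to be hardest is the backward \texttt{OuterNonlin} rule. Its gradient $d^{x\mid y}\bm x_\alpha$ sums, over the slots $i$, averages with $\alpha$ inserted in slot $i$. We must check that the duplicated-index averages reproduce the base ones, and that derivatives $\psi_i$ evaluated at duplicated rows are themselves duplicated. We would handle this by proving a single lemma first: any normalized multi-index average of a function of duplicated rows equals the base average. That lemma also covers the \texttt{Avg} and scalar-gradient cases. The remaining bookkeeping is getting the $k$-exponents in the \texttt{MatMul} backward step consistent with the normalization in Definition~\ref{def:backprop-program}.
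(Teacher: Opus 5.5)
Your proposal is correct, but it takes a more direct route than the paper. The paper does not argue in the unscaled parametrization of the statement. It reparametrizes to a ``scaled'' architecture in which each \texttt{MatMul} is $\nin^{-1}Wx$ (or $\nout^{-1}W^\top x$), trains it under $\mu$P with fixed base constants, and obtains the statement as the special case of Theorem~\ref{thm:equivalence-scaled} with no $W^\top$ in the forward pass. In that parametrization widening is pure duplication, $W\mapsto W\otimes\one_{\kout}\one_{\kin}^\top$. Moreover, the scaled backpropagation program (with $d^{x\mid z}y\colonequals\nout^{-1}W^\top d^xz$) is itself a scaled \nesort program. So a single duplication lemma (Lemma~\ref{lem:scaled-nesort}) covers the forward pass and every backward pass at once. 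The relations $\Nin\Nout\,dW^\uparrow=\nin\nout\,dW\otimes\one\one^\top$ and $N_u\,du^\uparrow=n_u\,du\otimes\one$ are then read off. The optimizer step becomes a width-independent update in the base constants, which makes the rescalings of Table~\ref{tab:widening} automatic.

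You instead keep the $\kin^{-1}$ on the matrices and push a $k^{-1}$ scaling of the true gradients through each backward rule by hand. You then reuse the homogeneity computation of Proposition~\ref{prop:entrywise_update} with $dW^\uparrow=\kout^{-1}dW\otimes\one\one^\top$. Your route is more elementary and stays in the parametrization of the statement. The paper's route buys a free backward pass, a normalization that is exactly consistent for heterogeneous widths, the extension to forward passes using both $W$ and $W^\top$, and the direct link to $\mu$P.

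One bookkeeping correction for the write-up. In the widened model the \texttt{MatMul} backward step contributes $\kout/\kin$, not $\kout$, because $W^\uparrow$ carries $\kin^{-1}$. Also, the literal rule $d^{x\mid z}y\colonequals W^\top d^xz$ of Definition~\ref{def:backprop-program} matches the interpretation $d^xy=n_y\,\partial\langle x_\alpha\rangle_\alpha/\partial y$ only when $n_y=n_z$. In general it should read $(n_y/n_z)W^\top d^xz$, and the $\kin/\kout$ change in that ratio is exactly what cancels the $\kout/\kin$. Applied literally, the rule would break your invariant $(d^xy)^\uparrow=d^xy\otimes\one$ whenever $\kout\neq\kin$.

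So state the invariant directly for true gradients: every vector $y$ with multiplier $k_y$ satisfies $\partial\mathcal L/\partial y^\uparrow=k_y^{-1}(\partial\mathcal L/\partial y)\otimes\one_{k_y}$, and every scalar gradient is unchanged. Then verify it by the chain rule. For $z=Wy$ one gets $\kin^{-1}(W^\top\otimes\one_{\kin}\one_{\kout}^\top)(\kout^{-1}g\otimes\one_{\kout})=\kin^{-1}(W^\top g)\otimes\one_{\kin}$. For the slot-$i$ terms of \texttt{OuterNonlin} the powers combine as $k^{-1}\cdot k\cdot k^{r-1}\cdot k^{-r}=k^{-1}$, exactly as your averaging lemma predicts.
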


\subsubsection{Formulation with scaled architecture}\label{appen:scaled-architecture}
Theorem~\ref{thm:equivalence-appendix} does not apply to architectures whose forward pass simultaneously uses both $Wx$ and $W^\top x$. This restriction is typically harmless: allowing both $W$ and $W^\top$ in the definition of the \nesort program was intended to capture backpropagation. 
(If $y=Wx$ appears in the forward pass, then $dx = W^\top dy$ naturally appears in backpropagation.) Standard architectures do not employ $W^\top$ in the forward pass.
However, extending the result to architectures that include both operations is straightforward: for the $Wx$ operation use $\kin^{-1} W\otimes \one\one^\top$, and for the $W^\top x$ operation use $\kout^{-1} W^\top\otimes \one\one^\top$, i.e., adopt distinct scaling rules for $W$ and $W^\top$.
We will formally state the result in this more general setting (see Theorem~\ref{thm:equivalence-scaled}) because it contains insights of independent interest.

We begin with observing an equivalent formulation of Proposition~\ref{prop:entrywise_update} for MLPs.
This formulation rescales the MLP weights so that constructing an equivalent MLP is width-independent—duplicate the weights with no further rescaling.

Define a ``scaled'' MLP that maps $x^{(0)}$ to $x^{(L)}$ via the recursion
\[
\begin{aligned}
\hl &= n_{\ell-1}^{-1}\Wl x^{(\ell-1)} \in \RR^{n_{\ell}},\\
\xl &= \phi(\hl) \in \RR^{n_\ell}, \quad \text{for } \ell = 1, 2, \dots, L.
\end{aligned}
\]
We train the weights $n_{\ell-1}^{-1}\Wl$ using the entrywise optimizer, where we adopt $\mu$P layer-wise hyperparameters when updating each $n_{\ell-1}^{-1}\Wl$ as in Table~\ref{tab:mup_scaling_main}, i.e.,
\[
\begin{aligned}
    \gammal & \colonequals n_\ell^m n_{\ell-1}^{-1} \overline{\gamma}, \\
    \epsilonl & \colonequals n_\ell^{-1} \overline{\varepsilon}, \\
    \lambdal & \colonequals 
    \begin{cases}
         n_{\ell-1} n_\ell^{-1} {\overline\lambda} & \text{for vanilla weight decay}, \\
         n_{\ell-1} n_\ell ^{-m} {\overline\lambda} & \text{for decoupled weight decay}.
    \end{cases}
\end{aligned}
\]
Here $\overline{\gamma}, \overline{\varepsilon}, \overline{\lambda}$ denote the base learning rate, base auxiliary hyperparameter, and base weight decay constant, respectively, and the actual hyperparameters are scaled according to the layer widths.

\begin{corollary}[Alternative statement of Proposition~\ref{prop:entrywise_update}]
Consider a base ``scaled'' MLP with weight matrices $\big(\Wl \in \RR^{n_{\ell} \times n_{\ell-1}}\big)_{\ell=1}^{L}$.
Construct a widened ``scaled'' MLP that uses the same activation function and preserves the input and output dimensions, with weights obtained by duplicating those of the base MLP as
\[
\Wlup \colonequals \Wl \otimes \one_{k_\ell}\one_{k_{\ell-1}}^\top \in \RR^{N_{\ell} \times N_{\ell-1}} \quad \text{for } \ell = 1, \dots, L.
\]
Here $N_\ell = k_\ell n_\ell$ for all $\ell$, and the width multipliers $k_\ell \in \NN$ satisfy $k_0 = k_L = 1$.
Suppose both models are trained with the same base hyperparameter constants $\overline\gamma, \overline\varepsilon, \overline\lambda$ using the entrywise optimizer, with layer-wise hyperparameters scaled accordingly.
Then, at all training steps, both models parametrize the same function.
\end{corollary}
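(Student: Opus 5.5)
The plan is to reduce the corollary to Proposition~\ref{prop:entrywise_update} through a change of variables. For each model, set $V^{(\ell)} \colonequals n_{\ell-1}^{-1}\Wl$ in the base scaled MLP and ${V^\uparrow}^{(\ell)} \colonequals N_{\ell-1}^{-1}\Wlup$ in the widened one. With these variables the scaled MLP is exactly the ordinary MLP of \eqref{eq:forward_pass} with weights $V^{(\ell)}$. By construction, the optimizer acts on $V^{(\ell)}$ with hyperparameters $\gammal, \epsilonl, \lambdal$. The widening rule $\Wlup = \Wl\otimes\one_{k_\ell}\one_{k_{\ell-1}}^\top$ becomes
\[
{V^\uparrow}^{(\ell)} = (k_{\ell-1}n_{\ell-1})^{-1}\Wl\otimes\one_{k_\ell}\one_{k_{\ell-1}}^\top = k_{\ell-1}^{-1} V^{(\ell)}\otimes\one_{k_\ell}\one_{k_{\ell-1}}^\top,
\]
which is precisely the widening in Proposition~\ref{prop:entrywise_update}. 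Since $k_0=k_L=1$, the input and output dimensions are unchanged, so $n_0=N_0$ and $n_L=N_L$.

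Next I would check that the hyperparameters match those required by Proposition~\ref{prop:entrywise_update}. The widened model uses the same base constants $\overline\gamma,\overline\varepsilon,\overline\lambda$ evaluated at widths $N_\ell=k_\ell n_\ell$. This gives
\[
\gammaupl = N_\ell^m N_{\ell-1}^{-1}\overline\gamma = k_\ell^m k_{\ell-1}^{-1}\gammal,\qquad \epsilonupl = k_\ell^{-1}\epsilonl,
\]
and $\lambdaupl = k_{\ell-1}k_\ell^{-1}\lambdal$ for vanilla weight decay, or $k_{\ell-1}k_\ell^{-m}\lambdal$ for decoupled weight decay. These are exactly the rescalings in Proposition~\ref{prop:entrywise_update}. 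The same bookkeeping as in the $\mu$P discussion of Section~\ref{sec:general-architectures} handles this.

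Proposition~\ref{prop:entrywise_update} then yields ${V^\uparrow}^{(\ell)}_t = k_{\ell-1}^{-1}V^{(\ell)}_t\otimes\one_{k_\ell}\one_{k_{\ell-1}}^\top$ for every step $t$. Consequently the two networks compute the same function at every step; equivalently, $\Wlup_t=\Wlsub{t}\otimes\one_{k_\ell}\one_{k_{\ell-1}}^\top$.

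The main point to verify carefully is that the optimizer really acts on the rescaled variables $V^{(\ell)}$ and not on $\Wl$. The phrase ``we train the weights $n_{\ell-1}^{-1}\Wl$'' is what ensures that gradients, weight decay, and the update function $Q_t$ all apply to $V^{(\ell)}$, so the dynamics of $V$ form an ordinary MLP trained by the entrywise optimizer. I would also confirm that the loss is evaluated on the output $h^{(L)}$ in the same way in both models. Given this, both the forward pass and the training updates are literally those of Proposition~\ref{prop:entrywise_update}, and no new induction over layers or steps is needed.
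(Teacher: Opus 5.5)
Your proposal is correct and is essentially the paper's own proof. The paper likewise passes to the trained variables $n_{\ell-1}^{-1}W^{(\ell)}$ and $N_{\ell-1}^{-1}{W^{\uparrow}}^{(\ell)}$, notes that duplication becomes the $k_{\ell-1}^{-1}(\cdot)\otimes\one_{k_\ell}\one_{k_{\ell-1}}^\top$ widening of Proposition~\ref{prop:entrywise_update}, and checks that the $\mu$P-scaled $\gamma,\varepsilon,\lambda$ at widths $N_\ell=k_\ell n_\ell$ are exactly the rescalings that proposition requires; your remark that ${W^{\uparrow}}^{(\ell)}_t = W^{(\ell)}_t\otimes\one_{k_\ell}\one_{k_{\ell-1}}^\top$ follows by multiplying back by $N_{\ell-1}$ is also correct.
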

\begin{proof}
By Proposition~\ref{prop:entrywise_update}, the widened ``scaled'' MLP parametrizes the same function at each training step if the following holds:
\[
\begin{aligned}
N_{\ell-1}^{-1}\Wlup &= k_{\ell-1}^{-1}\big(n_{\ell-1}^{-1}\Wl \otimes \one_{k_\ell}\one_{k_{\ell-1}}^\top\big),\\
(N_\ell^m N_{\ell-1}^{-1})\overline\gamma &= k_\ell^m k_{\ell-1}^{-1} (n_\ell^m n_{\ell-1}^{-1})\overline\gamma, \\
(N_\ell^{-1})\overline{\varepsilon} &= k_\ell^{-1}(n_\ell^{-1})\overline{\varepsilon},\\
(N_{\ell-1} N_\ell^{-1})\overline\lambda &= k_{\ell-1} k_\ell^{-1} (n_{\ell-1} n_\ell^{-1})\overline\lambda \quad \text{for vanilla weight decay},\\
(N_{\ell-1} N_\ell^{-m})\overline\lambda &= k_{\ell-1} k_\ell^{-m} (n_{\ell-1} n_\ell^{-m})\overline\lambda \quad \text{for decoupled weight decay}.
\end{aligned}
\]
These equalities do hold exactly, establishing the claim.
\end{proof}
To extend these results to general architectures, we first define a ``scaled'' architecture in which weights are rescaled with respect to width.
\begin{definition}[Scaled architecture]
A \emph{scaled \nesort program} is the same as the \nesort~program in Definition~\ref{def:nesort}, with the modification that in each \texttt{MatMul} operation \eqref{eq:mat_mul} involving $W \in \RR^{\nout \times \nin}$,
\[
Wx \;\;\text{is replaced by}\;\; \frac{1}{\nin}Wx,
\qquad\text{and}\qquad
W^\top x \;\;\text{is replaced by}\;\; \frac{1}{\nout}W^\top x.
\]
A \emph{scaled neural network architecture} represented by a scaled \nesort program $\pi$ is defined in the same way as before, except that at the final readout step \eqref{eq:readout}, the output $y \in \RR^{\dout}$ is given by
\begin{equation}\label{eq:readout_mean}
y_i \;=\langle x^i_\alpha\rangle_\alpha=\; \frac{1}{n^{(i)}} \sum_{\alpha=1}^{n^{(i)}} x^i_\alpha, \qquad i = 1,\dots, \dout,
\end{equation}
where each chosen vector $x^i \in \RR^{n^{(i)}}$.
The modification is that we replace the sum with the mean.
Because the scaled architecture can be instantiated at multiple hidden widths, it parametrizes a family of functions
\[
f_{\bm n}\colon \RR^{\din} \times \mathcal{T}_{\bm{n}} \to \RR^{\dout},
\]
indexed by the hidden-width vector $\bm{n} = (n_1, \dots, n_m, n_{1,\mathrm{in}}, n_{1,\mathrm{out}}, \dots, n_{\ell,\mathrm{in}}, n_{\ell,\mathrm{out}})$.

With this modification, the \emph{scaled backpropagation program} is adjusted accordingly from Definition~\ref{def:backprop-program}.
For any \texttt{MatMul} instruction $z \colonequals \nin^{-1}Wy$ in $\pi$, the construction of the gradient \eqref{eq:matmul-backprop} is replaced by $d^{x \mid z} y \colonequals \nout ^{-1}W^\top d^x z$ (via another \texttt{MatMul}).
Notice that the modified backpropagation program is then a scaled \nesort program.
\end{definition}

With this definition in place, we state the general result that we will prove.
One can verify that Theorem~\ref{thm:equivalence-appendix} is a special case of the following theorem in which the forward pass does not use both $W$ and $W^\top$.

\begin{theorem}\label{thm:equivalence-scaled}
Consider a scaled architecture represented by a scaled \nesort program $\pi$, and an entrywise optimizer whose update function is homogeneous of degree $m$.
Adopt the $\mu$P scaling for its hyperparameters as in Table~\ref{tab:mup_scaling_main}.
Consider a base model under this scaled architecture with widths $\bm{n}=(n_1,\dots,n_m,n_{1,\mathrm{in}},n_{1,\mathrm{out}},\dots,n_{\ell,\mathrm{in}},n_{\ell,\mathrm{out}})$ and initial weights
\[
\Theta=\big(\underbrace{\theta_1,\dots,\theta_j}_{\text{scalars}},\;\underbrace{x_1,\dots,x_m}_{\text{vectors}},\;\underbrace{W_1,\dots,W_\ell}_{\text{matrices}}\big)\in\mathcal{T}_{\bm{n}}.
\]
Construct a widened model under the same scaled architecture with widths
\[
\bm N = \bm k \odot \bm  n = (k_1 n_1,\dots,k_m n_m,k_{1,\mathrm{in}} n_{1,\mathrm{in}},k_{1,\mathrm{out}} n_{1,\mathrm{out}},\dots,k_{\ell,\mathrm{in}} n_{\ell,\mathrm{in}},k_{\ell,\mathrm{out}} n_{\ell,\mathrm{out}}),
\]
where the width multipliers are summarized in the vector $k=(k_1,\dots,k_m,k_{1,\mathrm{in}},k_{1,\mathrm{out}},\dots,k_{\ell,\mathrm{in}},k_{\ell,\mathrm{out}})$.
Its weights $\Theta^\uparrow\in\mathcal{T}_{\bm{N}}$ are constructed as
\[
\Theta^\uparrow \colonequals \big(\underbrace{\theta_1,\dots,\theta_j}_{\text{scalars}},\;\underbrace{x_1 \otimes \one_{k_1},\dots,x_m \otimes \one_{k_m}}_{\text{vectors}},\;\underbrace{W_1 \otimes \one_{k_{1,\mathrm{out}}}\one_{k_{1,\mathrm{in}}}^\top,\dots,W_\ell \otimes \one_{k_{\ell,\mathrm{out}}}\one_{k_{\ell,\mathrm{in}}}^\top}_{\text{matrices}}\big).
\]
Both the base model and the widened model are trained with the entrywise optimizer using the same base hyperparameter constants $\overline\gamma, \overline\varepsilon, \overline\lambda$, with per-weight hyperparameters scaled as specified.

Then, at every training step, both models parametrize the same function. That is, for any training step and any input $\xi\in \RR^{\din}$,
\[f_n(\xi;\Theta) = f_N(\xi;\Theta^\uparrow).\]
\end{theorem}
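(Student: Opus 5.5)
We prove Theorem~\ref{thm:equivalence-scaled} by a double induction: an outer induction over training steps $t$, and, within each step, an inner structural induction over the instructions of the scaled \nesort program $\pi$ and its scaled backpropagation program.

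The invariant for the outer induction at step $t$ has three parts. Every scalar weight is unchanged. Every vector weight satisfies $x^\uparrow_t = x_t\otimes \one_k$. Every matrix weight satisfies $W^\uparrow_t = W_t\otimes\one_{\kout}\one_{\kin}^\top$. For optimizers with state, the same relations are required of the whole gradient history $s\le t$, which is what the strong induction in Proposition~\ref{prop:entrywise_update} provides.

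\textbf{Forward pass.} Assuming the invariant, I show by induction along $\pi$ that every scalar $c$ computed in the widened program equals its base counterpart, and every vector $y\in\RR^{n}$ has widened counterpart $y\otimes\one_{k}$. The multiplier $k$ is determined by the width of the dimension $y$ lives in, and I assume these multipliers are consistent across instructions.

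\begin{itemize}
\item \texttt{Avg}: the mean of $y\otimes \one_k$ equals the mean of $y$, so scalars match.
\item \texttt{MatMul}: by the mixed-product property,
\[
\tfrac{1}{\Nin}(W\otimes\one_{\kout}\one_{\kin}^\top)(y\otimes\one_{\kin})=\tfrac{\kin}{\Nin}(Wy)\otimes\one_{\kout}=\bigl(\tfrac1{\nin}Wy\bigr)\otimes\one_{\kout}.
\]
The $W^\top$ case is symmetric, using the factor $\Nout^{-1}$.
\item \texttt{OuterNonlin}: the rows of the widened $\tilde{\bm x}$ are rows of the base $\tilde{\bm x}$, each repeated $k$ times. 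Hence each inner average over $\beta_1,\dots,\beta_r\in[N]$ equals the base average over $[n]$, and the output row at $\alpha$ depends only on the block $\alpha$ lies in. The output is therefore $y\otimes\one_k$.
\end{itemize}

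The mean readout \eqref{eq:readout_mean} then gives identical outputs, so the loss and $\partial\mathcal L/\partial y$ coincide. This is already the claimed equality $f_n(\xi;\Theta)=f_N(\xi;\Theta^\uparrow)$ at step $t$.

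\textbf{Backward pass.} I run the same induction on the scaled backpropagation program $d^x\pi$. By the interpretation $d^xy=n_y\,\partial\langle x\rangle/\partial y$, the scaled gradient vectors $d^xy$ satisfy exactly the same duplication relation, with no extra $k$ factor. The base case $d^x x=\one$ is trivially duplicated. The \texttt{MatMul} rule $\nout^{-1}W^\top d^xz$ is handled exactly as in the forward pass. The \texttt{OuterNonlin} rule is built from $\psi_i$ evaluated at duplicated rows and averaged, so the previous argument applies. Summing contributions $d^{x\mid u}y$ preserves the relation.

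\textbf{Parameter gradients and the update.} From the backward relations I compute the actual parameter gradients. For a \texttt{MatMul} $z=\nin^{-1}Wy$, the chain rule gives
\[
\frac{\partial\mathcal L}{\partial W}=\frac{1}{\nin\nout}(\text{scaled }dz)\,y^\top,
\]
and by the duplication relations the widened gradient equals $(\kin\kout)^{-1}$ times the duplicated base gradient. Similarly, vector-weight gradients scale by $k^{-1}$, and scalar-weight gradients are equal.

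With these, the optimizer step follows the computation in the proof of Proposition~\ref{prop:entrywise_update}. Homogeneity of degree $m$ of $Q_t$ absorbs the gradient factor together with $\varepsilon^\uparrow$. For a matrix weight, the $\mu$P factors $\Nout^m\Nin^{-1}$ versus $\nout^m\nin^{-1}$ combine to give $W^\uparrow_{t+1}=W_{t+1}\otimes\one\one^\top$. Both weight-decay variants are handled exactly as in that proof: the scaled weights themselves are duplicated without rescaling, so vanilla weight decay contributes $\lambda^\uparrow W^\uparrow$ with the same overall factor as the gradient term. This closes the outer induction.

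\textbf{Main obstacle.} I expect the hardest step to be the exponent bookkeeping in the matrix update. The gradient factor $(\kin\kout)^{-1}$ must be reconciled with $\mu$P's learning rate $\Nout^m\Nin^{-1}$, the weight decay $\Nin\Nout^{-1}$ or $\Nin\Nout^{-m}$, and $\varepsilon\propto\Nout^{-1}$ listed in Table~\ref{tab:mup_scaling_main}. It is not obvious that these match for an arbitrary scaled program; the cleanest check is to reduce to the MLP case via the preceding corollary. I would first redo that MLP calculation in the scaled convention, identifying the correct normalization of $\partial\mathcal L/\partial W$ (possibly $(\kout)^{-1}$ alone if a factor $\nin$ is absorbed). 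I would then argue that the general \texttt{MatMul} gradient has the same form, because each weight appears only through \texttt{MatMul} instructions and its gradient is a sum over such uses.
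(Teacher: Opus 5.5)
Your skeleton is the paper's. It uses an instruction-by-instruction induction (the paper's Lemma~\ref{lem:scaled-nesort}) applied to $\pi$ and to the scaled backpropagation programs. This yields $\Nin\Nout\,dW^\uparrow=\nin\nout\,dW\otimes\one\one^\top$, $N_u\,du^\uparrow=n_u\,du\otimes\one$ and $dc^\uparrow=dc$, closed by strong induction over steps. Your vector-weight update is also correct as stated.

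The gap is the step you defer as the ``main obstacle'', and as written it fails. You never specify which tensor the optimizer updates. Suppose it acts on $W$, with your gradient ratio $(\kin\kout)^{-1}$ and the matrix-like $\mu$P ratios $\kout^m\kin^{-1}$, $\kout^{-1}$, $\kin\kout^{-1}$ for $\gamma$, $\varepsilon$, $\lambda$. Then $W^\uparrow_{t+1}=W_{t+1}\otimes\one\one^\top$ does not follow. For SGD ($m=1$) the widened increment is $\kout\kin^{-1}\cdot(\kin\kout)^{-1}=\kin^{-2}$ times the duplicated base increment. For general $Q_t$, homogeneity cannot absorb the gradient factor $(\kin\kout)^{-1}$, because $\varepsilon^\uparrow$ carries only $\kout^{-1}$. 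The decay term $\lambda^\uparrow W^\uparrow$ carries $\kin\kout^{-1}$, so your claim that weight decay enters ``with the same overall factor as the gradient term'' is false in this convention.

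The missing idea is the convention built into the scaled setup (see the scaled-MLP corollary and the paper's proof). The optimizer, with the hyperparameters of Table~\ref{tab:mup_scaling_main}, updates the effective weight $V=\nin^{-1}W$, whose gradient is $\nin\,dW$. In these variables $V^\uparrow=\kin^{-1}V\otimes\one_{\kout}\one_{\kin}^\top$ and $dV^\uparrow=\Nin\,dW^\uparrow=\kout^{-1}dV\otimes\one_{\kout}\one_{\kin}^\top$; this is your guessed ``$\kout^{-1}$ alone''. These are exactly the hypotheses of the entrywise computation in the proof of Proposition~\ref{prop:entrywise_update}, with $k_\ell=\kout$ and $k_{\ell-1}=\kin$. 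That computation uses nothing about MLPs beyond these two relations, so it applies to any scaled program and no reduction to the MLP case is needed.

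Equivalently, as the paper does, factor $\nout^{-1}$ out of both $\nin\,dW+\lambda\nin^{-1}W=\nout^{-1}(\nin\nout\,dW+\overline\lambda W)$ and $\varepsilon=\nout^{-1}\overline\varepsilon$. Homogeneity then gives the width-free recursion $W_{t+1}=W_t-\overline\gamma\,Q_t\bigl((\nin\nout\,dW_s+\overline\lambda W_s)_{s\le t};\overline\varepsilon\bigr)$. The identical recursion holds for $W^\uparrow$ with $\Nin\Nout$ in place of $\nin\nout$, and the gradient relation closes the induction. Decoupled decay is analogous, since $\lambda\gamma=\overline\lambda\,\overline\gamma$.
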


\subsubsection{Proof of Theorem~\ref{thm:equivalence-scaled}}\label{appen:proof-equivalence}
We begin by establishing a useful property of the scaled \nesort program. Consider two runs: one initialized with a set of scalars, vectors, and matrices, and another initialized with the same scalars but with the vectors and matrices widened by duplicating entries. In the second run, the vectors and matrices generated by the program are exactly widened duplications of those obtained in the first run, while the scalars remain unchanged.
\begin{lemma}\label{lem:scaled-nesort}
Consider a scaled \nesort program $\pi$ with an initial set $\bm{c}^0\subseteq \bm{c}$ of scalars, an initial set $\bm{x}^0\subseteq \bm{x}$ of vectors, and an initial set $\mathcal{W}$ of matrices.
The same scaled \nesort program can be instantiated on the initial set $\bm{c}^0\subseteq \bm{c}$ of scalars, the widened initial set of vectors ${\bm{x}^\uparrow}^0 \colonequals \{x \otimes \one : x \in \bm{x}^0\}\subseteq \bm{x}$, and the widened initial set of matrices $\mathcal{W}^\uparrow \colonequals \{W \otimes \one_{\kout}\one_{\kin}^\top : W \in \mathcal{W}\}$.
Let the two runs of the program generate sequences of vectors and scalars $(\bm{x},\bm{c})$ and $(\bm{x}^\uparrow,\bm{c}^\uparrow)$, respectively.
Then $\bm{c}^\uparrow = \bm{c}$, and $\bm{x}^\uparrow = \{x \otimes \one : x \in \bm{x}\}$.
\end{lemma}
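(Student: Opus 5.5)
We will argue by induction on the instructions of $\pi$, in the order in which the program generates them. The inductive claim is that every scalar produced so far is identical in the two runs, and every vector $x\in\RR^{n}$ produced in the base run corresponds to $x\otimes\one_{k}\in\RR^{nk}$ in the widened run. Here $k$ is the multiplier attached to the width axis of $x$; vectors sharing a width share the same $k$, and this is well defined because the program fixes which dimensions are compatible. The base case is the hypothesis on $\bm c^0$, ${\bm x^\uparrow}^0$ and $\mathcal W^\uparrow$. We then check that each of the three allowed operations preserves the claim.

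\texttt{Avg}: for $x\in\RR^n$ we have $\langle (x\otimes\one_k)_\alpha\rangle_\alpha=\frac{1}{nk}\sum_{\alpha}k\,x_\alpha=\langle x_\alpha\rangle_\alpha$, so the new scalar agrees in both runs.

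\texttt{MatMul} (scaled): for $W\in\RR^{\nout\times\nin}$ and $x\in\RR^{\nin}$, the mixed-product property gives
\[
\frac{1}{\Nin}\big(W\otimes\one_{\kout}\one_{\kin}^\top\big)\big(x\otimes\one_{\kin}\big)=\frac{1}{\nin\kin}(Wx)\otimes\big(\one_{\kout}\,\kin\big)=\Big(\frac{1}{\nin}Wx\Big)\otimes\one_{\kout}.
\]
For the transpose, $(W\otimes\one_{\kout}\one_{\kin}^\top)^\top=W^\top\otimes\one_{\kin}\one_{\kout}^\top$. The same computation with the roles of the two dimensions swapped, using the $\Nout^{-1}$ normalization, yields $(\frac{1}{\nout}W^\top x)\otimes\one_{\kin}$. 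This is exactly why the scaled program uses different normalizations for $W$ and $W^\top$.

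\texttt{OuterNonlin}: this is the step requiring the most care. In the widened run the row $\tilde{\bm x}^\uparrow_{\gamma}$ of the stacked matrix equals $\tilde{\bm x}_{\lceil\gamma/k\rceil}$, since every column has been duplicated with the same $k$. Hence $\psi(\tilde{\bm x}^\uparrow_{\alpha};\tilde{\bm x}^\uparrow_{\beta_1};\dots;\tilde{\bm x}^\uparrow_{\beta_r};\bm c)$ depends on $(\alpha,\beta_1,\dots,\beta_r)$ only through the base indices $(\lceil\alpha/k\rceil,\lceil\beta_1/k\rceil,\dots)$. Each base index in $[n]$ has exactly $k$ preimages in $[nk]$, so the average over $\beta_1,\dots,\beta_r\in[nk]$ equals $(nk)^{-r}k^r\sum_{\beta'\in[n]^r}\psi(\cdots)$, which is the base average. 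It follows that $y^\uparrow_\alpha=y_{\lceil\alpha/k\rceil}$, i.e.\ $y^\uparrow=y\otimes\one_k$. The scalars $\bm c$ fed into $\psi$ coincide by the inductive hypothesis.

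The main obstacle is bookkeeping rather than mathematics. We must make sure that the multiplier $k$ attached to each width axis is consistent, so that all vectors in $\tilde{\bm x}$ share a single $k$ and that \texttt{MatMul} inputs have multiplier $\kin$. This follows from the dimensional compatibility enforced by the program. We also need that \texttt{Avg} and \texttt{OuterNonlin} normalize by the actual dimension. Once these points are in place, induction over the instruction sequence gives $\bm c^\uparrow=\bm c$ and $\bm x^\uparrow=\{x\otimes\one:x\in\bm x\}$.
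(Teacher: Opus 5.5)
Your proposal is correct and matches the paper's proof: induction over the program's instructions, checking \texttt{Avg}, the scaled \texttt{MatMul} via the mixed-product property, and \texttt{OuterNonlin} via $\tilde{\bm x}^\uparrow_\gamma=\tilde{\bm x}_{\lceil\gamma/k\rceil}$ together with the $k$-to-one collapse of indices. You spell out two things more explicitly than the paper does: the $W^\top$ case with its $\Nout^{-1}$ normalization, and the requirement that all vectors on a given width axis share one multiplier $k$; the paper leaves both implicit.
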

\begin{proof}
We show inductively that, in each step of the program, 
if a scalar is generated, then in both runs, the generated scalars are the same; 
if a vector is generated, then the vector $x^\uparrow$ generated in the second run and the vector $x$ generated in the first run are related by $x^\uparrow=x\otimes \one$.
We consider the cases for different operations of the program.

\texttt{Avg}: Let the chosen vector be $x\in \RR^n$ in the first run, 
corresponding to $x^\uparrow=x\otimes \one \in \RR^N$ in the second run. 
Then the \texttt{Avg} operation generates the same scalar for the two models
    \[\frac{1}{n}\sum_{\alpha=1}^n x_\alpha = \frac{1}{N}\sum_{\alpha=1}^N x^\uparrow_\alpha \in \RR.\]

    \texttt{MatMul}: Without loss of generality, consider the operation $\nin^{-1}Wx$. 
The case $\nout^{-1}W^\top x$ follows in exactly the same manner. 
Let the chosen matrix and vector be $W\in \RR^{\nout\times \nin}, x\in \RR^{\nin}$ in the first run, 
corresponding to $W^\uparrow = W\otimes \one_{\kout} \one_{\kin}^\top \in \RR^{\kout\nout\times \kin\nin}, x^\uparrow=x\otimes \one_{\kin}\in \RR^{\kin\nin}$ in the second run. 
Then the \texttt{MatMul} operation in the first run generates the vector
    \[\nin^{-1}W x,\]
    and in the widened model generates the vector
    \[(\kin\nin)^{-1} (W\otimes \one_{\kout} \one_{\kin}^\top )(x\otimes \one_{\kin}) = \nin^{-1}Wx\otimes \one_{\kout}.\]

    \texttt{OuterNonlin}: Suppose the subset of vectors $\tilde{\bm{x}}\in \RR^{n\times |\tilde{\bm{x}}|}$ in the first run corresponds to $\tilde{\bm{x}}^\uparrow=\tilde{\bm{x}}\otimes \one_k\in \RR^{kn\times |\tilde{\bm{x}'}|}$ in the second run. 
Then the \texttt{OuterNonlin} operation generates $y^\uparrow\in \RR^{kn}$ in the second run with
    \begin{align*}
        y^\uparrow_\alpha
        &= \frac{1}{(kn)^r}\sum_{\beta_1,\dots,\beta_r=1}^{kn}\psi\,\big(\tilde{\bm{x}}^\uparrow_\alpha;\tilde{\bm{x}}^\uparrow_{\beta_1};\dots;\tilde{\bm{x}}^\uparrow_{\beta_r};\bm{c}\big)\\
        &= \frac{1}{n^r}\sum_{\beta_1,\dots,\beta_r=1}^{n}\psi\,\big(\tilde{\bm{x}}_{\lceil \alpha/k \rceil};\tilde{\bm{x}}_{\beta_1};\dots;\tilde{\bm{x}}_{\beta_r};\bm{c}\big)
        = y_{\lceil \alpha/k \rceil}.
\end{align*}
    where $y$ is the vector generated in the first run.
    Hence, we have $y^\uparrow=y\otimes \one_k$.
\end{proof}

Applying Lemma~\ref{lem:scaled-nesort} to both the forward-pass and the backpropagation program suffices to complete the proof.
\begin{proof}[Proof of Theorem~\ref{thm:equivalence-scaled}]
We prove the claim by strong induction on training steps $t$.

At $t=0$, apply Lemma~\ref{lem:scaled-nesort} to the scaled \nesort program $\pi$ that specifies the architecture, instantiated once with the base weights and once with the widened weights.
At the final readout \eqref{eq:readout_mean}, the program selects vectors $x^1,\dots,x^{\dout}$ from the set of computed vectors $\bm{x}$ to form the output.
By Lemma~\ref{lem:scaled-nesort}, the corresponding vectors in the widened model are $(x^i\otimes\one_{k^{(i)}})_{i=1}^{\dout}$, where $k^{(i)}$ is the width multiplier of $x^i\in \RR^{n^{(i)}}$.
Since averaging is invariant under duplication, for every $i\in[\dout]$ we have
\[
\frac{1}{n^{(i)}}\sum_{\alpha=1}^{n^{(i)}} x^i_\alpha \;=\; \frac{1}{n^{(i)}k^{(i)}}\sum_{\alpha=1}^{n^{(i)}k^{(i)}} (x^i\otimes\one_k)_\alpha,
\]
implying $f_n(\xi;\Theta)=f_N(\xi;\Theta^\uparrow)$ at $t=0$.

For the induction step, apply Lemma~\ref{lem:scaled-nesort} to the scaled backpropagation programs 
$d x^1\pi,\dots,\allowbreak d x^{\dout}\pi$.
For each $i\in[\dout]$ and for each vector $u\in\RR^{n_u}$ in $\pi$ with widened counterpart $u^\uparrow=u\otimes\one_{k_u}$, the generated backpropagated gradients satisfy
\begin{equation}\label{eq:duplicate-gradients-vectors}
    d^{x^i}u^\uparrow \;=\; d^{x^i}u \otimes \one_{k_u} 
\end{equation}

For each scalar $c$ in $\pi$, we similarly have \begin{equation}\label{eq:duplicate-gradients-scalars}
    d^{x^i}c^\uparrow=d^{x^i}c.
\end{equation}

Let $W$ denote any matrix weight, $u$ any vector weight, and $c$ any scalar weight. Write $y\in \RR^{\dout}$ for the model output computed by $y_i = \langle x^i_\alpha\rangle_\alpha$ according to \eqref{eq:readout_mean}.
The gradients used by the optimizer can be written as
\begin{align*}
    dW = \frac{\partial \mathcal{L}}{\partial W} 
    &= \sum_{j\in \dout} \frac{\partial \mathcal{L}}{\partial y_j}\frac{\partial y_{j}}{\partial W} \\
    &= \sum_{j\in \dout}\frac{\partial \mathcal{L}}{\partial y_j}\left( \sum_{z=\nin^{-1}Wh}\nin^{-1}\frac{\partial \langle x^j_\alpha\rangle_\alpha}{\partial z} h^\top + \sum_{z=\nout^{-1}W^\top h}\nout^{-1}\left(\frac{\partial  \langle x^j_\alpha\rangle_\alpha}{\partial z} h^\top \right)^\top\right)\\
    &= \sum_{j\in \dout} \frac{\partial \mathcal{L}}{\partial y_j}\left(\sum_{z=\nin^{-1}Wh} \nin^{-1} \nout^{-1} (d^{x^j} z) h^\top + 
    \sum_{z=\nout^{-1}W^\top h} \nout^{-1} \nin^{-1} h (d^{x^j} z)^\top\right)
    \intertext{since $d^{x^j} z = \nout \frac{\partial \langle x^j_\alpha\rangle_\alpha}{\partial z_t}$ for $z=\nin^{-1}Wh$. Continuing,}
    du = \frac{\partial \mathcal{L}}{\partial u} &= \sum_{j\in \dout}\frac{\partial \mathcal{L}}{\partial y_j} \frac{\partial y_{j}}{\partial u} 
    = \sum_{j\in \dout} \frac{\partial \mathcal{L}}{\partial y_j} n_u^{-1} d^{x_j} u,\\
    dc = \frac{\partial \mathcal{L}}{\partial c} &= \sum_{j\in \dout} d^{x_j} c.
\end{align*}

By \eqref{eq:duplicate-gradients-vectors} and \eqref{eq:duplicate-gradients-scalars}, the gradients in the widened model and the base model satisfy the relation
\[
\Nin\Nout\, dW^\uparrow \;=\; \nin\nout\, dW \otimes \one_{\kout}\one_{\kin}^\top,\qquad
N_u\, du^\uparrow \;=\; n_u\, du \otimes \one_{k_y},\qquad
dc^\uparrow \;=\; dc.
\]
Finally, apply the entrywise optimizer to update $(\nin^{-1}W,u,c)$ in the base model and $(\Nin^{-1}W^\uparrow,u^\uparrow,c^\uparrow)$ in the widened model, with per-weight hyperparameters scaled as specified by $\mu$P.
Here we give the proof for vanilla weight decay; the proof for decoupled weight decay is similar, so we do not include it here.

For the matrix weight $W$, since the gradient of $\nin^{-1}W$ is $\nin dW$, the update rule in the base model is
\begin{align*}
&(\nin^{-1}W_{t+1})_{\alpha,\beta}\\
&= (\nin^{-1}W_t)_{\alpha,\beta} - \gamma\, Q_t\!\Big((\nin dW_0 + \lambda \nin^{-1}W_0)_{\alpha,\beta}, \dots , (\nin dW_t + \lambda \nin^{-1} W_t)_{\alpha,\beta};\varepsilon\Big)
\intertext{Substituting the $\mu$P for $\gamma, \lambda, \varepsilon$ and the duplication relations for $dW$ and $W$, we have by homogeneity of $Q_t$ of degree $m$}
&= (\nin^{-1}W_t)_{\alpha,\beta} - \begin{multlined}[t]
    \nout^m \nin^{-1}\overline{\gamma}\; Q_t \Big( (\nin dW_0 + \nin\nout^{-1} \overline{\lambda} \nin^{-1}W_0)_{\alpha,\beta}, \dots, \\ 
    (\nin dW_t + \nin\nout^{-1} \overline{\lambda}\nin^{-1} W_t)_{\alpha,\beta};\;\nout^{-1}\overline{\varepsilon} \Big) 
\end{multlined}\\
&= (\nin^{-1}W_t)_{\alpha,\beta}
- \nin^{-1}\overline{\gamma}\;
Q_t\!\Big(
(\nin\nout dW_0 + \overline{\lambda} W_0)_{\alpha,\beta},\dots,
(\nin\nout dW_t + \overline{\lambda} W_t)_{\alpha,\beta};\;\overline{\varepsilon}
\Big)
\end{align*}
Equivalently, 
\[(W_{t+1})_{\alpha,\beta} = (W_t)_{\alpha,\beta} - \overline{\gamma} Q_t\!\Big((\nin\nout dW_0 + \overline{\lambda} W_0)_{\alpha,\beta},\dots,(\nin\nout dW_t + \overline{\lambda} W_t)_{\alpha,\beta};\;\overline{\varepsilon}\Big).\]
Similarly,
the update rule for the matrix weight in the widened model is
\[(W^\uparrow_{t+1})_{\alpha,\beta} = (W^\uparrow_t)_{\alpha,\beta} - \overline{\gamma} Q_t\!\Big((\Nin\Nout dW^\uparrow_0 + \overline{\lambda} W^\uparrow_0)_{\alpha,\beta},\dots,(\Nin\Nout dW^\uparrow_t + \overline{\lambda} W^\uparrow_t)_{\alpha,\beta};\;\overline{\varepsilon}\Big).\]
Since for all $s\leq t$, by the induction hypothesis we have $\Nin\Nout dW_s^\uparrow = \nin\nout dW_s \otimes \one_{\kout}\one_{\kin}^\top$ and $W_s^\uparrow = W_s \otimes \one_{\kout}\one_{\kin}^\top$, we conclude that the matrix updates satisfy
\[W_{t+1}^\uparrow = W_{t+1} \otimes \one_{\kout}\one_{\kin}^\top.\]

For the vector weight $u$, the update rule in the base model is
\begin{align*}
(u_{t+1})_\alpha &= (u_t)_\alpha - \gamma\, Q_t\Big((du_0 + \lambda u_0)_\alpha,\dots,(n_u du_t + \lambda u_t)_\alpha;\varepsilon\Big)\\
&= (u_t)_\alpha - n_u^m \overline{\gamma}\;Q_t\Big((du_0 + n_u^{-1}\overline{\lambda} u_0)_\alpha,\dots,(n_u du_t + n_u^{-1}\overline{\lambda} u_t)_\alpha;\; n_u^{-1}\overline{\varepsilon}\Big)\\
&= (u_t)_\alpha - \overline{\gamma}\;Q_t\!\Big((n_u du_0 + \overline{\lambda} u_0)_\alpha,\dots,(n_u du_t + \overline{\lambda} u_t)_\alpha;\;\overline{\varepsilon}\Big).
\end{align*}
Similarly, the update rule in the widened model is
\[(u_{t+1}^\uparrow)_\alpha = (u_t^\uparrow)_\alpha - \overline{\gamma}\;Q_t\!\Big((N_u du_0^\uparrow + \overline{\lambda} u_0^\uparrow)_\alpha,\dots,(N_u du_t^\uparrow + \overline{\lambda} u_t^\uparrow)_\alpha;\;\overline{\varepsilon}\Big).\]
Since for all $s\leq t$, we have $N_u du_s^\uparrow = n_u du_s \otimes \one_{k_u}$ and $u_s^\uparrow = u_s \otimes \one_{k_u}$, we conclude that the vector updates satisfy
\[u_{t+1}^\uparrow = u_{t+1} \otimes \one_{k_u}.\]

Finally, for the scalar weight $c$, it is immediate that $c_{t+1}^\uparrow = c_{t+1}$.

\end{proof}

\subsection{Explanation of \texorpdfstring{$\mu$P}{muP} in Table~\ref{tab:mup_scaling_main}}\label{appen:mup-explanation}
The $\mu$P that we stated in Table~\ref{tab:mup_scaling_main} is not exactly the same as the versions of $\mu$P in \citet{yang2023tensor,yang2022tensor}. Our convention is intended to facilitate direct comparison with the widening operation that produces the equivalent model in Table~\ref{tab:widening}. Below we compare these conventions with those in the prior papers and justify their equivalence.

\paragraph{One degree of degeneracy.}
More generally, one may introduce a width-dependent \emph{weight multiplier} $A$ so that the actual weight relates to the width-independent rescaled weight via $W = A\cdot\overline{W}$, with $\overline{W}_{\alpha\beta}\sim\dnorm{0,B\cdot\overline{\sigma}^2}$.
In our convention (Table~\ref{tab:mup_scaling_main}), we set $A=1$ for all weight types, which is why $A$ does not appear in the table.
However, keeping $A$ explicit reveals a symmetry with one degree of degeneracy in the choice of $A,B,C,D,\tilde D,E$, which we state as a variant of Lemma~J.1 in \citet{yang2022tensor}.

\begin{proposition}[Symmetry of scalings]
Consider the setup of an entrywise optimizer with an update function $Q_t$ that is homogeneous of degree $m$. Suppose we adopt a parametrization prescribed by scalings $A,B,C,D,\tilde D, E$ as stated in Table~\ref{tab:mup_scaling_main}.

For all $\theta>0$, at any finite width, if we set
\[
A \leftarrow A\theta, \quad
B \leftarrow B/\theta^2, \quad
C \leftarrow C/\theta^{1+m}, \quad
D \leftarrow D\theta^2, \quad
\tilde D \leftarrow \tilde D\,\theta^{1+m},\quad
E \leftarrow E\theta,
\]
then we obtain a parametrization that is exactly equivalent.
\end{proposition}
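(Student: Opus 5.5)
The plan is to show that the reparametrized training run produces exactly the same sequence of \emph{actual} weights $W_t = A\overline{W}_t$, and hence the same functions, as the original run. I will induct on the training step $t$, working with the rescaled weight $\overline{W}$, which is the object the optimizer actually updates. The actual weight is $W = A\overline{W}$, and the optimizer updates $\overline{W}$ with learning rate $\gamma = C\overline\gamma$, weight decay $\lambda = D\overline\lambda$ (or $\tilde D\overline\lambda$ when decoupled), and auxiliary hyperparameter $\varepsilon = E\overline\varepsilon$.

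\textbf{Initialization.} Under the new scalings the rescaled weight is $\overline{W}' \sim \dnorm{0, B\overline\sigma^2/\theta^2}$. Coupling the randomness, we can take $\overline{W}' = \overline{W}/\theta$. Then $W' = A\theta\,\overline{W}' = A\overline{W} = W$. So the actual weights agree at $t=0$, and so does the network function.

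\textbf{Gradients.} Suppose $W'_s = W_s$ for all $s \le t$. Since the loss depends only on the actual weights, the chain rule gives $d\overline{W}' = A\theta\, dW = \theta\, d\overline{W}$.

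\textbf{Updates, vanilla weight decay.} The optimizer feeds the entries of
\[
d\overline{W}'_s + D\theta^2\overline\lambda\, \overline{W}'_s = \theta\, d\overline{W}_s + D\theta\overline\lambda\,\overline{W}_s = \theta\big(d\overline{W}_s + D\overline\lambda\,\overline{W}_s\big)
\]
into $Q_t$, together with $\varepsilon' = \theta\varepsilon$. Because $Q_t$ is homogeneous of degree $m$, the output is $\theta^m$ times the original output. Multiplying by $C\overline\gamma/\theta^{1+m}$, the increment of $\overline{W}'$ is exactly $1/\theta$ times the increment of $\overline{W}$. Hence $\overline{W}'_{t+1} = \overline{W}_{t+1}/\theta$, and therefore $W'_{t+1} = W_{t+1}$.

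\textbf{Updates, decoupled weight decay.} The decay factor is $1 - \lambda'\gamma' = 1 - \tilde D\theta^{1+m}\overline\lambda \cdot C\overline\gamma/\theta^{1+m} = 1 - \lambda\gamma$, which is unchanged. The gradient term scales by $\theta^{m}\cdot\theta^{-1-m} = \theta^{-1}$ as before. So again $\overline{W}'_{t+1} = \overline{W}_{t+1}/\theta$, and the induction closes.

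The same computation applies verbatim to vector-like and scalar-like weights, since it never uses the specific width exponents. Thus the equivalence holds for every weight type simultaneously.

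The main subtlety is bookkeeping rather than anything deep. First, the gradient must be taken with respect to $\overline{W}$, which is what produces the factor $A$. Second, the weight-decay term must be added in $\overline{W}$-coordinates so that it scales like the gradient; this is exactly why $D$ picks up $\theta^2$ while $\tilde D$ picks up $\theta^{1+m}$. I will state that convention explicitly before running the induction.
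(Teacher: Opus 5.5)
Your proof is correct and follows essentially the same route as the paper: both use the chain rule $d\overline{W} = A\theta\, dW$ under the new multiplier, together with the degree-$m$ homogeneity of $Q_t$ (with $\varepsilon$ rescaled by the same factor $\theta$), to show that the effective update of $W$ does not depend on $\theta$, and in the decoupled case the decay factor is invariant because the product $C\tilde D$ is unchanged. The only cosmetic difference is that you track $\overline{W}'_t = \overline{W}_t/\theta$ through an explicit coupling and induction, whereas the paper factors out $A\theta$ and writes the $W$-update directly in a $\theta$-free form.
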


\begin{proof}
Let $W = A\theta\,\overline W$ with $\overline{W}_{\alpha\beta} \sim \dnorm{0,\frac{B}{\theta^2}\,\overline{\sigma}^2}$. It follows immediately that
$W_{\alpha\beta} \sim \dnorm{0, A^2 B\,\overline{\sigma}^2}$, which is independent of $\theta$.

The update rule for $\overline{W}$ with learning rate $C\,\theta^{-(1+m)}\overline{\gamma}$, weight decay constant $D\,\theta^2\overline\lambda$, decoupled weight decay constant $\tilde D\,\theta^{1+m}\overline\lambda$, and auxiliary hyperparameter $E\theta\,\overline\varepsilon$ is (choose either $D=0$ or $\tilde D=0$)
\begin{align*}
(\overline{W}_{t+1})_{\alpha,\beta}
&= \begin{multlined}[t]
    \bigl(1-\overline\lambda\,\overline\gamma\,C\,\tilde D\bigr)\,(\overline W_t)_{\alpha,\beta} 
    - C\,\theta^{-(1+m)} \overline\gamma\; Q_t \Big( \bigl(d\overline W_0 + D\theta^2\overline\lambda\,\overline W_0\bigr)_{\alpha,\beta}, \dots, \\
    \bigl(d\overline W_t + D\theta^2\overline\lambda\,\overline W_t\bigr)_{\alpha,\beta};\; E\theta\,\overline \varepsilon \Big).
\end{multlined}
\end{align*}
Since $W = A\theta\,\overline W$, we have $d\overline W = A\theta\, dW$, hence
\begin{align*}
&(A\theta)^{-1} (W_{t+1})_{\alpha,\beta}\\
&= \bigl(1-\overline\lambda\,\overline\gamma\,C\,\tilde D\bigr)\,(A\theta)^{-1} (W_t)_{\alpha,\beta} \\
&\quad - C\,\theta^{-(1+m)} \overline\gamma\;
Q_t\!\left(
\bigl(A\theta\, dW_0 + D\theta^2\overline\lambda\,(A\theta)^{-1} W_0\bigr)_{\alpha,\beta}, \dots,
\bigl(A\theta\, dW_t + D\theta^2\overline\lambda\,(A\theta)^{-1} W_t\bigr)_{\alpha,\beta};\;
E\theta\,\overline \varepsilon\right) \\
&= \bigl(1-\overline\lambda\,\overline\gamma\,C\,\tilde D\bigr)\,(A\theta)^{-1} (W_t)_{\alpha,\beta}\\
&\quad - A^m C\theta^{-1}\overline\gamma\;
Q_t\!\left(
\bigl(dW_0 + \overline\lambda D A^{-2} W_0\bigr)_{\alpha,\beta}, \dots,
\bigl(dW_t + \overline\lambda D A^{-2} W_t\bigr)_{\alpha,\beta};\;
EA^{-1}\,\overline \varepsilon\right),
\tag{by the homogeneity of $Q_t$ of degree $m$ }
\end{align*}

Multiplying both sides by $A\theta$ yields
\begin{align*}
(W_{t+1})_{\alpha,\beta}
&= \bigl(1-\overline\lambda\,\overline\gamma\,C\,\tilde D\bigr)\,(W_t)_{\alpha,\beta}\\
&\quad - A^{m+1} C\,\overline\gamma\;
Q_t\!\left(
\bigl(dW_0 + \overline\lambda D A^{-2} W_0\bigr)_{\alpha,\beta}, \dots,
\bigl(dW_t + \overline\lambda D A^{-2} W_t\bigr)_{\alpha,\beta};\;
EA^{-1}\,\overline \varepsilon\right),
\end{align*}
which is independent of $\theta$. This proves that the rescaled parametrization is exactly equivalent.
\end{proof}
\begin{remark}
This proposition describes \emph{exact equivalence} at any finite width. We may also consider \emph{asymptotic equivalence}: two scalings $(A,B,C,D,\tilde D,E)$ and $(A',B',C',D',\tilde D',E')$ are asymptotically equivalent if there exists $\theta>0$ such that
\[
\frac{A\theta}{A'},\quad
\frac{B/\theta^2}{B'},\quad
\frac{C/\theta^{1+m}}{C'},\quad
\frac{D\theta^2}{D'},\quad
\frac{\tilde D\,\theta^{1+m}}{\tilde D'},\quad
\frac{E\theta}{E'}
\]
are all $\Theta(1)$ as the width(s) go to $\infty$ (if there are multiple width dimensions, we assume they go to $\infty$ at the same rate).
The $\mu$P scaling is defined via its asymptotic behaviour and hence identifies a family of scalings only up to asymptotic equivalence.
\end{remark}

\paragraph{Comparison with $\mu$P in Definition~2.9.12 of \citet{yang2023tensor}.}
The $abcd$-parametrization (see Definition~2.9.7 of~\citet{yang2023tensor}) considers a general update function $Q_t$ rather than restricting to homogeneous ones. As noted in their Remark~2.2.6, when $Q_t$ is homogeneous of degree $m$, one should interpret $n^{-d}$ as the scaling for $\varepsilon$ and $n^{dm-c}$ as the scaling for the learning rate $\gamma$.

In their Definition~2.9.12, where $\mu$P is defined, they do not consider weight decay. For vector-like weights, the parametrization translates to
\[
A=1,\quad B=1,\quad C=n^{m},\quad E=n^{-1}
\]
in our convention. For matrix-like weights, the parametrization translates to
\[
A=1,\quad B=n^{-1},\quad C=n^{m-1},\quad E=n^{-1},
\]
and it does not distinguish $\nout$ and $\nin$ as we do, because it assumes all widths go to $\infty$ at the same rate.

Weight decay is treated in Section~2.10.1, where they only consider a decoupled version (note that their notion of “decoupled” differs from ours). There they claim that $\tilde D$ should be set so that $C\tilde D=1$. 
That is, $\tilde D=n^{-m}$ for vector-like weights, and $\tilde D = n^{1-m}$ for matrix-like weights.

One can verify that this choice of $A,B,C,\tilde D, E$ is asymptotically equivalent to what we state in Table~\ref{tab:mup_scaling_main}, except that we add support for vanilla weight decay (controlled by $D$) and explicitly distinguish $\nin$ and $\nout$ for matrix-like weights.

\paragraph{Comparison with $\mu$P in Table~8 of \citet{yang2022tensor}.}
The version of $\mu$P stated in \citet{yang2022tensor} does not consider a general entrywise optimizer; instead, it gives parametrizations specifically for SGD and Adam. They do not explicitly discuss weight decay, though it is included in the implementation (we will discuss this later in Appendix~\ref{appen:exp-verification-equivalence}). They also do not discuss scaling for $\varepsilon$. 
So we focus on the comparison of $A,B,C$.

For vector-like input weights, the first column of Table~8 of \citet{yang2022tensor} translates to
\[
A=1,\quad B=\nin^{-1},\quad
C=\begin{cases}
    \nout & \text{for SGD},\\
    1 & \text{for Adam}.
\end{cases}
\]
Notice that here $\nin$ is constant, while $\nout \to \infty$ (comparable to $n$ in our convention). This is asymptotically equivalent to what we state in the ``Vector-like'' row of Table~\ref{tab:mup_scaling_main} (setting $m=1$ for SGD and $m=0$ for Adam).

For vector-like output weights, the second column of Table~8 of~\cite{yang2022tensor} translates to
\[
A=1,\quad B=1,\quad
C=\begin{cases}
    \nin & \text{for SGD},\\
    1 & \text{for Adam},
\end{cases}
\]
with an additional $\nin^{-1}$ output multiplier. In this case $\nin \to \infty$ (comparable to $n$ in our convention). This is exactly equivalent to what we state in the ``Vector-like'' row of Table~\ref{tab:mup_scaling_main}.

For matrix-like weights, the third column of Table~8 of \citet{yang2022tensor} translates to
\[
A=1,\quad B=\nin^{-1},\quad
C=\begin{cases}
    1 & \text{for SGD},\\
    \nin^{-1} & \text{for Adam}.
\end{cases}
\]
This is exactly equivalent to what we state in the ``Matrix-like'' row of Table~\ref{tab:mup_scaling_main}.

\if\isArxiv 1
\subsection{Experimental verification of equivalence}
\else
\subsection{Experimental Verification of Equivalence}
\fi
\label{appen:exp-verification-equivalence}
To verify Theorem~\ref{thm:equivalence}, we perform experiments on MLPs, ResNets, and Transformers with variants of SGD, Adam, and AdamW. 
For each model, we first train a base model from scratch for several epochs under $\mu$P, then construct a widened model whose weights are obtained using the rule in Table~\ref{tab:widening}. 
We next create an optimizer for the widened model and transfer the internal state from the base model to the widened model (explained in Appendix~\ref{appen:implementation-upscaling}). 
Finally, we train both the base and widened models under $\mu$P using the same base hyperparameters and compare their equivalence after each training step.

All models pass this verification.
We directly use the $\mu$P package introduced in \citet{yang2022tensor}, with slight modifications to recover our specific scaling in Table~\ref{tab:mup_scaling_main} that is essential for exact equivalence.
We explain these modifications below.

\paragraph{Modification of the \texorpdfstring{$\mu$P}{muP} package}
We summarize in Table~\ref{tab:mup_practice} the exact scaling implemented in the $\mu$P package and our minor modifications (highlighted in blue).
Specifically, we add support for the default weight decay in Adam, whereas originally only the decoupled weight decay (same as in AdamW) was implemented.
Moreover, we implement the correct scaling for $\varepsilon$ for Adam and AdamW.

\begin{remark}\label{rmk:mup-implementation}
    Below are some further details about this implementation:
    \begin{enumerate}
    \item Table~8 of \citet{yang2022tensor} claims to document the scalings used in the $\mu$P implementation, but it specifies only the entries for $A,B,C$. The scalings $\tilde{D}$ for Adam and $D$ for SGD were implemented but not explicitly specified in that paper. We document their explicit forms in the $\mu$P implementation and list them in Table~\ref{tab:mup_practice} (in black and purple). 
    Table~8 of \citet{yang2022tensor} also states the SGD learning-rate scaling for hidden (matrix) weights to be $1$, but the actual implementation uses $\nout/\nin$, which we therefore reflect here.
    \item The $\mu$P implementation assumes \texttt{decoupled\_weight\_decay=True} in PyTorch for Adam, in which case Adam is equivalent to AdamW. Consequently, it implements only the $\tilde{D}$ scaling and not $D$. We modify \texttt{MuAdam} to also implement the $D$ scaling corresponding to the default, coupled weight decay in Adam.
    \item The original implementation does not scale $\varepsilon$, even though later theoretical analysis in \citet{yang2023tensor} recommends scaling $\varepsilon$; see Remark~2.2.6 of \citet{yang2023tensor}. We modify \texttt{MuAdam} to implement this scaling as in Table~\ref{tab:mup_scaling_main}.
    \item The $\mu$P implementation requires a specified \emph{base model width}, and all scalings are adjusted so that the base model behaves identically to the standard parametrization. Concretely, this is equivalent to multiplying an additional constant on top of what is shown in Table~\ref{tab:mup_practice}.
\end{enumerate}
\end{remark}

\begin{table}[ht]
\if \isArxiv 0
 \caption{$\mu$P scaling in its implementation. {\color{purplered}Purple text} highlights the key differences between $\mu$P scaling and the standard parametrization that is the default in \texttt{PyTorch} (written in parentheses in black when different). {\color{lightblue}Blue text} highlights our modifications on top of the original implementation, where these scalings were not considered.}
    \label{tab:mup_practice}
\fi
    \centering
    \footnotesize
    \renewcommand{\arraystretch}{1.5} %
    \setlength{\tabcolsep}{6pt} %
    \begin{tabular}{c c c c}
        \toprule
         &Input vector (inf $\times$ fin) & Matrix (inf $\times $inf) & Output vector (fin $\times $inf)\\
        \toprule
        \makecell{Multiplier of weight\\ $W=A\cdot \overline{W}$} & $1$ & $1$ & {\color{purplered}$\frac{1}{\nin}$}\hspace{1em} $(1)$ \\
        \makecell{Init variances \\$\overline{W}_{\alpha\beta}\sim \dnorm{0,B\cdot \overline\sigma^2}$} & $\frac{1}{\nin}$ & $\frac{1}{\nin}$ & {\color{purplered}$1$} \hspace{1em} ($\frac{1}{\nin}$) \\
        \makecell{SGD learning rate scaling \\ $\gamma = C\cdot \overline{\gamma}$}
        & {\color{purplered}$\nout$} \hspace{1em} (1)  & {\color{purplered}$\frac{\nout}{\nin}$} \hspace{1em}(1) & {\color{purplered}$\nin$} \hspace{1em} (1) \\
        \makecell{SGD weight decay scaling \\$\lambda = D\cdot\overline{\lambda}$} & {\color{purplered}$\frac{1}{\nout}$} \hspace{1em} (1) & {\color{purplered}$\frac{\nin}{\nout}$} \hspace{1em} (1) & {\color{purplered}$\frac{1}{\nin}$} \hspace{1em} (1)\\
        \makecell{Adam learning rate scaling \\ $\gamma = C\cdot \overline{\gamma}$} & $1$ & {\color{purplered}$\frac{1}{\nin}$} \hspace{1em} (1) & $1$\\
        \makecell{Adam weight decay scaling \\ $\lambda =  D\cdot\overline{\lambda}$} & {\color{lightblue} $\frac{1}{\nout}$} \hspace{1em} (1) &  {\color{lightblue} $\frac{\nin}{\nout}$} \hspace{1em} (1) & {\color{lightblue} $\frac{1}{\nin}$} \hspace{1em} (1)\\
        \makecell{AdamW weight decay scaling \\ $\lambda = \tilde D\cdot\overline{\lambda}$} & $1$ & $\nin$ \hspace{1em} (1) & $1$\\
        \makecell{Adam, AdamW $\varepsilon$ scaling \\ $\varepsilon=E\cdot \overline\varepsilon$} &{\color{lightblue} $\frac{1}{\nout}$} \hspace{1em} (1) & {\color{lightblue} $\frac{1}{\nout}$} \hspace{1em} (1) & {\color{lightblue} $\frac{1}{\nin}$} \hspace{1em} (1)
    \end{tabular}
\if \isArxiv 1
 \caption{$\mu$P scaling in its implementation. {\color{purplered}Purple text} highlights the key differences between $\mu$P scaling and the standard parametrization that is the default in \texttt{PyTorch} (written in parentheses in black when different). {\color{lightblue}Blue text} highlights our modifications on top of the original implementation, where these scalings were not considered.}
    \label{tab:mup_practice}
\fi
\end{table}

\if\isArxiv 1
\section{Details of the upscaling algorithm}
\else
\section{Details of the Upscaling Algorithm}
\fi
\label{appen:implementation-upscaling}
We explain the omitted details of the upscaling algorithm presented in Meta-algorithm~\ref{alg:upscaling}.

\paragraph{Simple example: MLP trained with SGD.}
For concreteness, we detail the algorithm for the simple example of an MLP trained with vanilla SGD used in Section~\ref{sec:equivalence}. 
Note that vanilla SGD does not maintain any internal optimizer state (e.g., momentum), so there is nothing to duplicate or rescale in the optimizer’s checkpoint in Step~\ref{line:optimizer} of Meta-algorithm~\ref{alg:upscaling}.
\begin{algorithm}[ht]
\caption{Upscaling for MLPs under SGD}
\label{alg:upscaling-mlp}
\begin{algorithmic}
\REQUIRE Base MLP weights $(\Wl \in \RR^{n_\ell \times n_{\ell-1}})_{\ell=1}^L$; expansion multipliers $(k_\ell \in \NN)_{\ell=1}^{L-1}$.
\ENSURE Trained upscaled MLP $(\Wlup \in \RR^{N_\ell \times N_{\ell-1}})_{\ell=1}^L$.
\SET Tune $\overline{\sigma_\Delta}$ and $\overline{\gamma^\uparrow}$ by running the steps below on a small system (widths $n_0 \to k_\ell n_0$ with $n_0 \ll n_1$), selecting the values that minimize the terminal training loss.
\SET $k_0 \gets 1,\; k_L\gets 1$; $N_\ell \gets k_\ell n_\ell$ for all $\ell=0,\dots, L$.
\SET $\sigma_{\Delta}^{(1)}\gets \overline{\sigma_\Delta}$, $\sigma_{\Delta}^{(L)}\gets \overline{\sigma_\Delta}$; $\sigma_{\Delta}^{(\ell)}\gets N_{\ell-1}^{-1/2}\,\overline{\sigma_\Delta}$ for all $\ell=2,\dots, L-1$.
\FOR{$\ell = 1$ to $L$}
  \STATE Sample ${\Delta}^{(\ell)} \in \RR^{N_\ell \times N_{\ell-1}}$ with ${\Delta}^{(\ell)}_{ij}\sim \dnorm{0,{\sigma_{\Delta}^{(\ell)}}^2}$.
  \STATE $\Wlup \gets k_{\ell-1}^{-1}\,\Wl \otimes \one_{k_\ell}\one_{k_{\ell-1}}^\top \;+\; {\Delta}^{(\ell)}$.
\ENDFOR
\STATE Under $\mu$P, run SGD on the upscaled MLP initialized with $(\Wlup)$, using a $\mu$P-scaled learning rate based on $\overline{\gamma^\uparrow}$.
\end{algorithmic}
\end{algorithm}

\paragraph{Transfer weights: implementation of Step~\ref{line:weight}.}
With the modified $\mu$P package in place, we implement weight transfer from the base to the widened model via a generic routine that works for standard architectures that we consider in our experiments. 
Concretely, the routine operates on both \texttt{named\_parameters} (model weights) and, when present, \texttt{named\_buffers} (e.g., BatchNorm running statistics).

For each tensor in \texttt{named\_parameters} saved in the checkpoint, we query its $\mu$P \texttt{infshape} to classify it as scalar-like, vector-like, or matrix-like, and then duplicate/rescale according to the rules in the ``Widening operation'' column of Table~\ref{tab:widening}. 

For \texttt{named\_buffers}, usually simple duplication and copy-over is sufficient. 
In particular, BatchNorm \texttt{running\_mean} and \texttt{running\_var} are vector-like and are duplicated without rescaling, while scalar buffers such as \texttt{num\_batches\_tracked} are copied as-is. Typically no matrix-like buffers occur, so no rescaling is needed.

\paragraph{Transfer optimizer’s internal state: implementation of Step~\ref{line:optimizer}.}
One detail not elaborated in Meta-algorithm~\ref{alg:upscaling} is how to transfer the optimizer’s internal state, as in Step~\ref{line:optimizer}.
In Theorem~\ref{thm:equivalence}, for simplicity of analysis, we construct the widened model before any training and show that the widened and base models undergo equivalent updates when the hyperparameters are set appropriately.
However, in practical upscaling, we perform widening mid-training.
To ensure that the widened model (without adding any noise) is updated equivalently to the base model, as if no upscaling had occurred, it is necessary to transfer the optimizer’s internal state.

We implemented a generic routine for transferring optimizer, which works for SGD, Adam, and AdamW (and their variants). 
Extension to other optimizers is possible but may require additional adjustments.
The procedure mirrors the transfer of weights.
For each tensor in \texttt{named\_parameters} saved in the optimizer’s checkpoint, we query its $\mu$P \texttt{infshape} to classify it as scalar-like, vector-like, or matrix-like, and then duplicate and rescale accordingly.
Scalar-like tensors are copied as is.
For vector-like and matrix-like tensors, in SGD the relevant state is the saved \texttt{momentum}, and in Adam/AdamW this is the saved \texttt{exp\_avg} or \texttt{exp\_avg\_sq}, for the corresponding vector-like or matrix-like weight.
Recall from the proof of Proposition~\ref{prop:MLP-lr-equivalent-updates} that the gradients of the equivalent widened model satisfy
\[
dW^\uparrow = \kout^{-1} dW \otimes \one_{\kout}\one_{\kin}^\top \quad \text{ if $W$ is matrix-like};
\]
\[
dW^\uparrow = k^{-1} dW \otimes \one_{k} \quad \text{ if $W$ is vector-like}.
\]
Saved \texttt{momentum} and \texttt{exp\_avg} should be transferred in the same way as the gradient $dW$.
We treat the \texttt{exp\_avg\_sq} case separately, as it requires a distinct rescaling: the scaling factors $\kout^{-1}$ or $k^{-1}$ should be squared, consistent with second-moment accumulation.

\paragraph{Practical considerations for weight, buffer, and optimizer-state transfer.}
Our algorithm transfers model weights, \texttt{named\_buffers}, and the optimizer’s internal state.
Transferring weights is the most essential step, as it determines the function computed by the upscaled model.
Transferring \texttt{named\_buffers} and optimizer state further guarantees that, with zero added noise, training continues exactly as if no upscaling had occurred; we expect this to stabilize early training by preserving momentum and second-moment estimates from pretraining.
When optimizer checkpoints are unavailable (e.g., when upscaling a published model), it is acceptable to omit this transfer.

\paragraph{Alternative algorithm: signal-normalized noise.}\label{par:rescaled-noise}
In Meta-algorithm~\ref{alg:upscaling}, Step~\ref{line:noise} adds noise whose standard deviation is controlled by a single base constant $\overline{\sigma_\Delta}$ shared across all layers.
However, in architectures with heterogeneous hidden dimensions (e.g., ResNet, where the number of channels varies across stages), the widened weights at different layers can have substantially different spectral norms, so a single $\overline{\sigma_\Delta}$ may inject too much noise in some layers and too little in others.

To address this, we introduce an alternative noise injection scheme that normalizes the noise relative to the signal on a per-layer basis.
Given any architecture and optimizer, the modification applies to Steps~\ref{line:hp_transfer} and~\ref{line:noise} of Meta-algorithm~\ref{alg:upscaling}; all other steps (weight transfer, optimizer state transfer, and training) remain unchanged.

\emph{Modification to Step~\ref{line:noise} (noise injection).}
For each weight tensor $W$ in the model, let $\widetilde{W}$ denote its widened version constructed according to Table~\ref{tab:widening}.
Instead of adding noise with a globally controlled standard deviation, we set the upscaled weight to
\[
W^{\uparrow} \colonequals \widetilde{W} + t \, \frac{\bigl\|\widetilde{W}\bigr\|}{\bigl\|{\Delta}\bigr\|} \, {\Delta},
\]
where $\|\cdot\|$ denotes the spectral norm, ${\Delta}$ is a random tensor of the same shape as $\widetilde{W}$ sampled with the same width-dependent scaling that $\mu$P prescribes for random initialization (with $\overline{\sigma_\Delta} = 1$), and $t \geq 0$ is a hyperparameter controlling the noise level relative to the signal.
The rescaling ensures that the signal and noise terms have equal spectral norms when $t = 1$; sweeping $t$ modulates the relative noise level.
A practical benefit of this formulation is that the natural sweep range for $t$ is $[0, 1]$: the value $t = 0$ recovers exact weight transfer (no noise), while $t = 1$ adds noise whose spectral norm equals that of the signal, which is already a substantial perturbation.
In contrast, the original algorithm requires prior experience or preliminary experiments to determine the appropriate sweep range for $\overline{\sigma_\Delta}$, since its effect depends on the (architecture- and training-dependent) scale of the widened weights.

\emph{Modification to Step~\ref{line:hp_transfer} (hyperparameter tuning).}
The hyperparameter $t$ does not necessarily transfer reliably across widths.
Instead, after tuning $t$ on a small system, one computes the effective per-weight noise base constant $\overline{\sigma_{\Delta,W}} = t \, \bigl\|\widetilde{W}\bigr\| / \bigl\|{\Delta}\bigr\|$ for each weight tensor $W$.
Since $\overline{\sigma_{\Delta,W}}$ is expected to transfer across widths, these values are recorded and used in lieu of a single $\overline{\sigma_\Delta}$ when performing the actual upscaling at the target width.

For the specific case of an MLP with SGD (cf.\ Algorithm~\ref{alg:upscaling-mlp}), the modified noise injection step becomes
\[
\Wlup \colonequals k_{\ell-1}^{-1}\,\Wl \otimes \one_{k_\ell}\one_{k_{\ell-1}}^\top \;+\; t\,\frac{\bigl\|k_{\ell-1}^{-1}\,\Wl\otimes\one_{k_\ell}\one_{k_{\ell-1}}^\top\bigr\|}{\bigl\|{\Delta}^{(\ell)}\bigr\|}\,{\Delta}^{(\ell)} \in \RR^{N_\ell \times N_{\ell-1}},
\]
where ${\Delta}^{(\ell)} \in \RR^{N_\ell \times N_{\ell-1}}$ has i.i.d.\ entries ${\Delta}^{(\ell)}_{ij}\sim \dnorm{0,{\sigma_{\Delta}^{(\ell)}}^2}$ with $\sigma_{\Delta}^{(1)} = \sigma_{\Delta}^{(L)} = 1$ and $\sigma_{\Delta}^{(\ell)} = N_{\ell-1}^{-1/2}$ for $\ell = 2, \dots, L-1$.
The corresponding per-layer effective noise constant is
\[
\overline{\sigma_{\Delta,W^{(\ell)}}} = t\,\frac{\bigl\|k_{\ell-1}^{-1}\,\Wl\otimes\one_{k_\ell}\one_{k_{\ell-1}}^\top\bigr\|}{\bigl\|{\Delta}^{(\ell)}\bigr\|} = t\,\sqrt{\frac{k_\ell}{k_{\ell-1}}}\,\frac{\bigl\|\Wl\bigr\|}{\bigl\|{\Delta}^{(\ell)}\bigr\|}.
\]

\if\isArxiv 1
\section{Infinite-width training dynamics for upscaling}
\else
\section{Infinite-Width Training Dynamics for Upscaling}
\fi
\label{appen:inf-width-limit}
We present in Appendix~\ref{appen:modified-TP} a modified \nesort program tailored to analyzing infinite-width limits under mid-training upscaling. 
In Appendix~\ref{appen:modified-tp-expressed-as-original-tp}, we prove its correctness. 
The proof strategy is to show that the modified program can be expressed as compositions of operations in the original \nesort program, thereby reducing the analysis to the standard framework and enabling the application of Tensor Program techniques. 
Finally, to illustrate the usefulness of our modified \nesort program for analyzing infinite-width limits with upscaling, we work through two simple examples in Appendix~\ref{appen:eg1} and Appendix~\ref{appen:eg2}, from which we derive several insights.

\if\isArxiv 1
\subsection{Main result: Modified Tensor Program}
\else
\subsection{Main Result: Modified Tensor Program}
\fi
\label{appen:modified-TP}
\subsubsection*{A short recap of the original \nesort program}
The original \nesort program, introduced in \citet{yang2023tensor}, captures infinite-width training dynamics for standard neural network architectures.
First, one writes all neural computations during training in the \nesort language, including both forward and backward passes at each training step.
Given this \nesort program, one then defines the ``ket'' construction recursively: 
each vector appearing in the program (for example, layer preactivations) is represented as a random variable (called a ``ket'') that encodes its distribution as the width $n\to\infty$, and the limiting model outputs or losses at each step are expressed as expectations of functions of these random variables.
This reduction yields a deterministic system of ``limit equations'' describing the evolution of activation and weight distributions over time.
Finally, the Master Theorem provides the theoretical foundation, rigorously characterizing the sense in which finite-width dynamics converge to the limiting object defined by the ket construction.
We now present our modified \nesort program, which retains this structure.

\subsubsection*{Modified \nesort program language}
We characterize the infinite-width limit $n\to\infty$ for the following training-and-upscaling procedure:
Consider a network architecture specified in the \nesort language with hidden width $n$.
After training for $T$ epochs at width $n$, we upscale to a wider model with hidden width $N=kn$, where $k$ is a fixed constant, and then continue training for an additional $T'$ epochs.
(Because we focus on asymptotic training dynamics, it suffices to take all hidden layers to have the same width, as in the original \nesort construction; this contrasts with Appendix \ref{appen:equivalence-general}, where we allow heterogeneous hidden widths and expansion factors to study exact finite-width equivalence.)

Rather than extending the language to directly implement the upscaling operation, we seek the minimal modification of \nesort that preserves its structure as a composition of matrix multiplications and elementwise nonlinearities while enabling the expression of upscaling.
By the analysis in Section~\ref{sec:equivalence}, the training dynamics at width $n$ over the first $T$ epochs can be represented equivalently at width $kn$.
Consequently, instead of changing dimension mid-training, we model the entire procedure at the higher dimension $kn$ by lifting the pre-upscaling stage to that dimension.
We therefore define a modified \nesort program whose only deviation from Definition 2.6.1 of \citet{yang2023tensor} is that all hidden-layer dimensions are $nk$, where $n$ is the pre-upscaled hidden width and $k$ is the multiplier. 
\begin{definition}\label{def: modified-TP}
     The modified \nesort program generates a sequence $\bm{x}$ of $\RR^{nk}$-vectors and a sequence $\bm{c}$ of $\RR$-scalars inductively defined via one of the following ways from an initial set $\bm{c}^0 \subseteq \bm{c}$ of random scalars, an initial set $\bm{x}^0 \subseteq \bm{x}$ of random $\RR^{nk}$ vectors, and an initial set $\mathcal{W}$ of random $\RR^{(nk) \times (nk)}$ matrices.

     With a slight abuse of notation, we sometimes treat $\bm{x}, \bm{c}$ as sets, and at other times regard of $\bm{c}$ as a vector and $\bm{x}$ as a matrix with the $\RR^{nk}$ vectors as columns; then $\bm{c}^0$ is just a subvector of $\bm{c}$ and $\bm{x}^0$ is a submatrix of $\bm x$. At each step of the program, one can:

    \texttt{Avg}: Choose a vector $x \in \bm{x}$ (think of $x$ as a column in $\bm{x} \in \RR^{nk \times |\bm{x}|}$) and append to $\bm{c}$ a scalar
    \begin{equation}\label{eq:our_avg}
    \langle x_\alpha\rangle_\alpha=\frac{1}{nk} \sum_{\alpha=1}^{nk} x_\alpha \in \RR. 
    \end{equation}
    
    \texttt{MatMul}: Choose a matrix $W \in \mathcal{W}$ and vector $x \in \bm{x}$, and append to $\bm{x}$ the vector
    \[
    Wx \in \RR^{nk} \quad \text{or} \quad W^\top x \in \RR^{nk}.
    \]
    
    \texttt{OuterNonlin}: Choose an integer $r \geq 0$ and a function $\psi : \RR^{|\bm{x}|(r+1)+l} \to \RR$, and append to $\bm{x}$ the vector
    \begin{equation}\label{eq:our_outernonlin}
    y \in \RR^{nk}, \quad y_\alpha = \langle \psi({\bm{x}}_\alpha; {\bm{x}}_{\beta_1}; \dots; {\bm{x}}_{\beta_r}; \bm{c} )\rangle_{\beta_1, \dots, \beta_r} = \frac{1}{(nk)^r} \sum_{\beta_1,\ldots,\beta_r=1}^{nk} \psi(\bm x_\alpha; \bm x_{\beta_1}; \ldots; \bm x_{\beta_r}; \bm c) 
    \end{equation}
    where $\bm x_\gamma \in \RR^{|\bm{x}|}$ is the $\gamma$-th row in $\bm x$ as a matrix and $|\bm x|$ is the number of vectors in $\bm x$. 
\end{definition}

We consider two canonical initialization regimes that encompass both Gaussian and non-Gaussian cases.
In contrast to the original setup in \citet{yang2023tensor}, we allow two classes of random matrices.
The first class consists of matrices of size $nk$ with i.i.d.\ entries, which model the noise injected during upscaling.
The second class consists of matrices of size $nk$ obtained by duplicating a size-$n$ random matrix with i.i.d.\ entries, which model (an already-widened version of) the weights at the start of training.
Analogously, the random vectors at initialization are partitioned into the same two classes.
It is straightforward to verify that any network architecture expressible in the original \nesort program admits an upscaling procedure that can be formulated within the modified \nesort framework defined here.

\begin{assumption}[Gaussian]\label{assmp:Gaussian} The following three hold.
\begin{enumerate}
     \item The initial set of random matrices is defined as $\mathcal{W} = \mathcal{W}_1 \cup \mathcal{W}_2$. Every entry of each $W \in \mathcal{W}_1$ is sampled i.i.d. from $\dnorm{0, \frac{1}{n}}$. Each matrix $W' \in \mathcal{W}_2$ is formed as $W' = W \otimes \one_k \one_k^\top$, where $W$ has i.i.d. entries sampled from $\dnorm{0, \frac{1}{n}}$.
    \item The initial set of vectors is defined as $\bm{x}^0 = \bm{x}^{0,1}\cup \bm{x}^{0,2}$. Every entry of each $x\in \bm{x}^{0,1}$ is sampled i.i.d. from $\dnorm{0,1}$. Each vector $x' \in \bm{x}^{0,2}$ is formed by $x' = x \otimes \one_k$, where $x$ has i.i.d. entries sampled from $\dnorm{0, 1}$.
    \item The initial scalars $c^0$ converge almost surely to $0$.
    \item All functions $\psi$ used in \texttt{OuterNonlin} are pseudo-Lipschitz.
\end{enumerate}
\end{assumption}
\begin{assumption}[Non-Gaussian]\label{assmp:non-Gaussian} Suppose the same items as Assumption~\ref{assmp:Gaussian} but with 1) and 4) replaced by the following two.
\begin{enumerate}
    \item[1*.] There exists a sequence $\nu_3, \nu_4, \dots > 0$ such that (1) each $W\in \mathcal{W}_1$ has independent entries drawn from distributions with zero mean, variance $\frac{1}{n}$ and all higher $t$-th moment bounded by $\nu_t n^{-t/2}$; (2) each $W'\in\mathcal{W}_2$ is formed by $W'=W\otimes \one_k\one_k^\top$, where $W$ has independent entries drawn from distributions with zero mean, variance $\frac{1}{n}$, and all higher $t$-th moment bounded by $\nu_t n^{-t/2}$.
    \item[4*.] All functions $\psi$ used in \texttt{OuterNonlin} are polynomially smooth.
\end{enumerate}
We further require initial scalars $\bm{c}^0$ to have moments of all orders bounded in $n$.
\end{assumption}

\subsubsection*{The ``ket'' construction}

In the infinite-width limit ($n\to\infty$), the training dynamics of a neural network can be analyzed using a calculus of random variables.
Without upscaling, in the original \nesort program, the preactivation vector in each layer is distributed like $n$ i.i.d. random variables.
Given a \nesort program, \citet{yang2023tensor} constructs the associated limit objects using the ket notation, which are random variables that track these preactivations at every step of training.
We proceed analogously with upscaling; the difference is that, in this case, each layer’s preactivations behave like $n$ blocks consisting of random vectors of dimension $k$, where these small vectors may have depending entries, but different vectors are i.i.d.
Hence, we need to keep track of random $k$-vectors rather than just scalar random variables.

We first clarify the notation, which is based on but slightly deviates from that in \citet{yang2023tensor}.
\paragraph{Notation} 
\begin{itemize}
    \item For a vector $x\in \RR^{nk}$, for $i\in [k]$ we write 
    \begin{equation}\label{eq:x^{(i)}}
        x_{(i)} = (x_i, x_{i+k},x_{i+2k}\dots)\in \RR^n,
    \end{equation}
    We are concerned with the situation where $x$ has $n$ i.i.d. $k$-blocks; equivalently, each $x_{(i)}$ has i.i.d. entries.

    Similarly, for a matrix $W\in \RR^{nk\times nk}$, for $i,j\in[k]$ we write
    \[
    W_{(i,j)} := \big(W_{\,i+(r-1)k,\; j+(c-1)k}\big)_{r,c=1}^n \in \RR^{n\times n}.
    \]
    This $W_{(i,j)}$ collects, in an interleaved manner, the $(i,j)$-th entries from each $k\times k$ block of $W$ across the $n\times n$ grid of blocks.
    We are concerned with the situation where $W$ has $n\times n$ i.i.d.\ $k\times k$ blocks; equivalently, each $W_{(i,j)}$ has i.i.d.\ entries.
    \item For a vector $x\in \RR^{nk}$ with $n$ i.i.d. $k$-blocks, we write $\ket{x}\in \RR^k$ for a random vector such that $x$ look like $n$ i.i.d. samples from $\ket{x}$. We write $\braket{x\mid y}$ for the $k\times k$ matrix where
    \begin{equation}\label{eq:braket}
        \braket{x\mid y}_{ij}\colonequals \Ex[\ket{x}_i \ket{y}_j]
    \end{equation}
    
    We are concerned with the situation where $x,y$ look like $n$ i.i.d. samples from $\ket{x}, \ket{y}$ respectively, in the sense that, for any two such vectors $x,y\in \RR^{nk}$, we have 
    \[\lim_{n\to\infty}\frac{(x_{(i)})^\top (y_{(j)})}{n}=\braket{x\mid y}_{ij}.\]
    This implies that 
    \[\lim_{n\to\infty} \frac{x^\top y}{nk}=\frac{\Tr \braket{x\mid y}}{k}.\]
    
    Our notation differs from that of the original \nesort program: there, $\ket{x}$ denotes a scalar random variable, and $\braket{x\mid y}\colonequals \Ex\,\ket{x}\ket{y}\in \RR$ captures the limit of $x^\top y$.
    In that setting, one considers the simpler case where $x\in \RR^n$ consists of $n$ i.i.d.\ samples from $\ket{x}$.
    We further note that our use of $\braket{x\mid y}_{ij}$ departs somewhat from the conventional bra–ket notation in physics; we retain this form to remain as close as possible to the original \nesort program.

    In later parts of the appendix, we will slightly overload the bra–ket notation and use $\braket{\cdot \mid \cdot}$ uniformly in both the original and modified \nesort. 
    The intended meaning should be clear from context---namely, whether $x,y \in \RR^n$ or $\RR^{nk}$ and whether $\ket{x}, \ket{y}$ are scalars or $k$-vectors.
    \item Recall that $\bm{x}$ represents a matrix with $\RR^{nk}$-vectors as columns, collecting all the random vectors that are recursively generated from the modified \nesort. We write ${x}^{i}, i=1,\dots,|\bm{x}|$ as its columns (the vectors in the collection), and $\bm{x}_\alpha, \alpha=1\dots, {nk}$ as its rows (the $\alpha$-th coordinate of all the vectors in the collection).
    \item For the current set of vectors $\bm{x}=(x^1,\dots, x^{|\bm{x}|})\in \RR^{nk\times |\bm{x}|}$, we write 
    \begin{equation}\label{eq:ket-bm-x}
        \ket{\bm{x}}=\left(\ket{x^1},\dots, \ket{x^{|\bm{x}|}}\right)\in\RR^{k\times |\bm{x}|}.
    \end{equation}
    We write $\ket{\bm{x}}_\alpha\in \RR^{|\bm{x}|}$ for the $\alpha$-th row of $\ket{\bm{x}}$.
\end{itemize}

With the notation in place, we now introduce the ket-based construction for our modified \nesort program.
This construction tracks the distribution of preactivations at each layer at every training step in the infinite-width limit.
\begin{definition}[Ket Construction]\label{def:our-kets}
We recursively define the random $k$-vector $\ket{x}\in\RR^k$ (a multi-vector \emph{ket}) for each vector $x \in \RR^{nk}$ and deterministic number $\mathring{\theta}$ for each scalar $\theta$ in the program. For a vector $Wx$ produced by $\texttt{MatMul}$, we also define random $k$-vectors $\hatket{Wx}$ and $\dotket{Wx}$ (called \emph{hat-ket} and \emph{dot-ket} respectively) such that $\ket{Wx} = \hatket{Wx} +\dotket{Wx}$. Their recursive definitions are given below.

\begin{itemize}
    \item \texttt{Init}: For a vector $x \in \bm{x}^{0}$, define the ket 
    \begin{equation}\label{eq:init-ket}
        \ket{x} \colonequals 
    \begin{cases}
        \dnorm{0, I_k} \in \RR^k & \text{if } x\in\bm{x}^{0,1},\\
        \dnorm{0, 1} \otimes \one_k \in \RR^k & \text{if }x \in \bm{x}^{0,2}.
    \end{cases}
    \end{equation}
    Also, for the scalars, let $\mathring{\bm{c}}^0 \colonequals 0 \in \RR^{|\bm{c}^0|}$.
    
    \item \texttt{Avg}: If $\theta$ is generated by $\texttt{Avg}$ as in \eqref{eq:our_avg}, then 
    \begin{equation}\label{eq:avg-ket}
        \mathring{\theta} \colonequals \frac{1}{k}\sum_{i=1}^k\Ex \ket{x}_i = \frac{1}{k}\Tr\braket{x\mid \one},
    \end{equation}
    where the equality follows from our notation of $\braket{\bullet \mid \bullet }$ defined in \eqref{eq:braket}.
    \item \texttt{OuterNonlin}: If $y$ is generated by \texttt{OuterNonlin} as in \eqref{eq:our_outernonlin}, then for $\alpha\in[k]$,
    \begin{equation}\label{eq:outernonlin-ket}
        \ket{y}_\alpha \colonequals f(\ket{\bm{x}}_\alpha) \quad \text{where} \quad f: \RR^{|\bm{x}|} \to \RR, \quad f(\by) \colonequals \frac{1}{k^r}\sum_{\beta_1,\dots, \beta_r=1}^k \mathbb{E} \left[ \psi(\by; {\ket{\bm{x}}}^{\myfbox{1}}_{\beta_1}; \cdots; {\ket{\bm{x}}}^{\myfbox{r}}_{\beta_r}; \mathring{\bm{c}}) \right].
    \end{equation}
    
    Here, as we have mentioned in the notation part, $\ket{\bm{x}}_{\alpha}$ refers to the $\alpha$-th row of the random matrix $\ket{\bm{x}} \in \RR^{k \times |\bm{x}|}$ that collects the current set of multi-vector kets. Also, $\ket{\bm{x}}^{\myfbox{1}}, \ldots, \ket{\bm{x}}^{\myfbox{r}}$ represents $r$ i.i.d. copies of $\ket{\bm{x}}$.
    
    \item \texttt{Hat}: All hat-kets are jointly Gaussian with zero-mean and covariance
    \begin{equation}\label{eq:hatket}
         \cov{\hatket{Wx}, \hatket{Uy}} =
        \begin{cases}
            \mathbb{I}(W=U)\Tr \braket{x\mid y} I_k  & \text{if }W\in \mathcal{W}_1,\\
            \mathbb{I}(W=U)(\one_k^\top \braket{x\mid y} \one_k)\one_k\one_k^\top &\text{if } W\in \mathcal{W}_2.\\
        \end{cases}
    \end{equation}
    Here $\mathbb{I}(W=U)=1$ if and only if $W$ and $U$ are the same matrix as symbol in the program and $0$ otherwise.
    \item \texttt{Dot}: 
    By the construction presented here, each entry of any ket, $\ket{x}_\alpha$, for $\alpha \in [k]$, is always a deterministic function of the set of hat-ket entries $\hatket{y}_{\beta}$, where $\beta \in [k]$. As such, $\frac{\partial \ket{x}_\alpha}{\partial \hatket{\bullet}_{\alpha}}$ can be defined symbolically. Moreover, it should be independent of $\alpha$ due to permutation symmetry. Hence, we can use $\frac{\partial \ket{x}}{\partial \hatket{\bullet}}$ to denote this value.

    Then, every dot-ket can be expressed as a linear combination of previous kets, represented by the following equation:
    \begin{equation}\label{eq:dotket}
    \dotket{Wx} \colonequals 
    \begin{cases}
        k \sum_{y \in \bm{x}} \ket{y} \Ex\left[\frac{\partial \ket{x}}{\partial \hatket{W^\top y}}\right] & \text{if } W \in \mathcal{W}_1, \\ 
        k \sum_{y \in \bm{x}} \sum_{j \in [k]} \ket{y}_j \one_k \Ex\left[\frac{\partial \ket{x}}{\partial \hatket{W^\top y}}\right] & \text{if } W \in \mathcal{W}_2.
    \end{cases}
    \end{equation}
    \end{itemize}
\end{definition}

\subsubsection*{The Master Theorem}
Finally, we state the Master theorem for the modified \nesort, which is practically the same as in \citet{yang2023tensor}.
\begin{theorem}[Master theorem]\label{thm:master-thm}
    Consider modified \nesort with Gaussian or non-Gaussian set-up. Then, as $n \to \infty$, its scalars $c$ satisfy
    \[ \bm{c} \asto \mathring{\bm{c}}. \]
    In the non-Gaussian set-up, this convergence also happens in $L^p$ for every $p \in [1, \infty)$. In either setup, if the initial scalars are all $\tilde{O}(n^{-1/2})$, then
    $$ \bm{c} - \mathring{\bm{c}} = \tilde{O}(n^{-1/2}) $$
    as well.
    
    Here, for a random sequence $\bm{c} = \{\bm{c}(n)\}_{n \ge 1}$ of fixed-sized vectors, we write $\bm{c} = \tilde{O}(n^h)$ if $n^{-h-\varepsilon} {\bm{c}} \asto 0$ for every $\varepsilon > 0$.
\end{theorem}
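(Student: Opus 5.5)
The plan is to reduce the modified \nesort program of Definition~\ref{def: modified-TP} to an ordinary \nesort program of width $n$ in the sense of \citet{yang2023tensor}, and then invoke their Master Theorem. The translation follows the MLP rewriting in Section~\ref{sec:choose-hyperparam-upscaling}, with every object expressed through its partition blocks.

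\emph{Vectors.} Each $\RR^{nk}$-vector $x$ is replaced by the $k$ width-$n$ vectors $x_{(1)},\dots,x_{(k)}$ from \eqref{eq:x^{(i)}}.

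\emph{Matrices in $\mathcal{W}_1$.} Each $W\in\mathcal{W}_1$ is replaced by its $k^2$ blocks $W_{(i,j)}$. These have i.i.d.\ $\dnorm{0,1/n}$ entries and are mutually independent, so they form legitimate initial matrices.

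\emph{Matrices in $\mathcal{W}_2$.} For $W'=W\otimes\one_k\one_k^\top$, every block $W'_{(i,j)}$ equals the same matrix $W$, which is one initial matrix.

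\emph{Initial vectors.} Vectors in $\bm{x}^{0,1}$ give $k$ independent standard Gaussian vectors. Vectors in $\bm{x}^{0,2}$ give $k$ copies of one vector, and these copies can be produced from a single initial vector by an identity \texttt{OuterNonlin}.

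Next, I would translate each instruction.

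\texttt{MatMul}: $(Wx)_{(i)}=\sum_j W_{(i,j)}x_{(j)}$. This is a sum of original \texttt{MatMul}s, combined through a linear \texttt{OuterNonlin}. The transpose case is analogous, using $(W^\top)_{(i,j)}=W_{(j,i)}^\top$.

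\texttt{Avg}: $\frac1{nk}\sum_\alpha x_\alpha=\frac1k\sum_i\langle (x_{(i)})_\alpha\rangle_\alpha$, which is $k$ original \texttt{Avg}s followed by an average.

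\texttt{OuterNonlin}: The sum over $\beta_r\in[nk]$ splits as a sum over block index $b_r\in[k]$ and within-block index in $[n]$. Hence $y_{(a)}$ is $k^{-r}\sum_{b_1,\dots,b_r}$ of original \texttt{OuterNonlin}s with nonlinearity $\psi$ applied to the rows of the collection $\{x_{(b)}\}$, selecting columns $a,b_1,\dots,b_r$. A finite sum of pseudo-Lipschitz (respectively polynomially smooth) functions keeps that regularity, so Assumptions~\ref{assmp:Gaussian}/\ref{assmp:non-Gaussian} transfer to the hypotheses of \citet{yang2023tensor}. The scalar sequence of the translated program coincides exactly with $\bm{c}$. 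Therefore Yang's Master Theorem yields $\bm{c}\asto\mathring{\bm{c}}^{\mathrm{orig}}$, along with $L^p$ convergence and the $\tilde O(n^{-1/2})$ rate.

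The main obstacle is showing $\mathring{\bm{c}}^{\mathrm{orig}}=\mathring{\bm{c}}$, i.e.\ that the original ket construction applied to the translated program reproduces Definition~\ref{def:our-kets} under the identification $\ket{x}=(\ket{x_{(1)}},\dots,\ket{x_{(k)}})$. I would prove this by induction over program steps.

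For \texttt{Init} and \texttt{Avg} the identification is immediate, giving \eqref{eq:init-ket} and \eqref{eq:avg-ket}.

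For \texttt{OuterNonlin}, the original rule takes expectations over independent copies for each $b_r$, which yields exactly the $k^{-r}\sum_\beta$ of i.i.d.\ copies in \eqref{eq:outernonlin-ket}.

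For \texttt{Hat}, the hat-kets satisfy $\hatket{(Wx)_{(i)}}=\sum_j\hatket{W_{(i,j)}x_{(j)}}$.
\begin{itemize}
\item For $\mathcal{W}_1$, independence of the blocks gives covariance $\delta_{ii'}\sum_j\braket{x\mid y}_{jj}$, which is $\Tr\braket{x\mid y}\,I_k$.
\item For $\mathcal{W}_2$, all blocks are the same symbol, giving $\sum_{j,j'}\braket{x\mid y}_{jj'}$ for every pair $(i,i')$. This matches \eqref{eq:hatket}.
\end{itemize}

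For \texttt{Dot}, I would compute $\dotket{W_{(i,j)}x_{(j)}}=\sum_{z}\ket{z}\,\Ex[\partial\ket{x_{(j)}}/\partial\hatket{W_{(i,j)}^\top z}]$ and sum over $j$. Then I would use the permutation symmetry of blocks to rewrite the partial derivatives with respect to the component kets as the single symbol in \eqref{eq:dotket}. The factor $k$ comes from the $1/(nk)$ versus $1/n$ normalization of the transposed variables. In the $\mathcal{W}_2$ case, the shared symbol collapses the sum over $j$ into $\sum_j\ket{y}_j\one_k$.

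This derivative bookkeeping, especially pinning down the normalization factor $k$, is where I expect errors. I would verify it against the MLP example of Appendix~\ref{appen:eg1}.
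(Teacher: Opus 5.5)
Your proposal is correct and follows essentially the same route as the paper: rewrite the modified program as an original \nesort program on the blocks $x_{(i)}$ and $W_{(i,j)}$, verify by induction over instructions that the scalars and the ket constructions coincide (the paper's properties P1 and P2), and then invoke the original Master Theorem. The differences are cosmetic. For $\mathcal{W}_2$ the paper first forms $\sum_j x_{(j)}$ and performs a single \texttt{MatMul}, rather than summing $k$ products with the shared block; the two are equivalent by linearity of the hat- and dot-kets. In the paper, the factor $k$ in \eqref{eq:dotket} comes directly from summing $k$ identical expectations over $j$ by block exchangeability, which is the same factor as your $1/n$-versus-$1/(nk)$ variance observation, so it should not be inserted a second time.
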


\if\isArxiv 1
\subsection{Proof of the master theorem for the modified \texorpdfstring{\nesort}{Tensor Program}}
\else
\subsection{Proof of the Master Theorem for the Modified \texorpdfstring{\nesort}{Tensor Program}}
\fi
\label{appen:modified-tp-expressed-as-original-tp}
We now prove the main results stated in the previous section.
Our strategy is to construct an original \nesort program that represents the modified \nesort program (Definition~\ref{def: modified-TP}) under either Assumption~\ref{assmp:Gaussian} or Assumption~\ref{assmp:non-Gaussian}.
This construction provides an explicit correspondence between the two formulations, enabling a direct transfer of the analytical framework.
Consequently, the ket construction for the modified \nesort (Definition~\ref{def:our-kets}) follows directly from the ket construction of the original \nesort.
\begin{proof}[Proof of Theorem~\ref{thm:master-thm}]
    We begin with a program written in the modified \nesort. 
    Our objective is to iteratively construct an original \nesort program \cite{yang2023tensor} that expresses the same operations and generates equivalent ket construction. 
    We will use $\mathcal{W}, \bm{x}, \bm{c}$ to denote the sets of matrices, vectors, and scalars in the modified \nesort; and use $\mathcal{W}', \bm{x}', \bm{c}'$ to denote the sets in the constructed original \nesort.

    For simplicity, we introduce two additional operations in the original \nesort: (1) removing a vector from the set $\bm{x}'$ and (2) removing a scalar from the set $\bm{c}'$.
    In the ket construction, the corresponding limiting objects are also removed.
    These modifications do not affect the expressivity of the original \nesort; their primary purpose is to eliminate intermediate quantities and maintain clean sets.
    
    We now consider the cases of different operations of the modified \nesort to iteratively reconstruct the original \nesort program. 
    In each step of the construction, we will prove the following property:
    \begin{enumerate}
        \item[P1] The generated scalars and vectors match: $\bm{c}' = \bm{c}$, and $\bm{x}'$ consists of vectors $x_{(i)}\in\RR^n$ for $i\in[k], x\in\bm{x}$. (Recall the notation $x_{(i)}$ defined in \eqref{eq:x^{(i)}}.) 
        Alternatively as matrices (where we assume appropriate permutation of columns), $\bm{x}'\in \RR^{n\times k|\bm{x}|}$ is a reshape of $\bm{x}\in \RR^{nk\times |\bm{x}|}$.
        \item[P2] Moreover, the ket constructions match. 
        Firstly, $\mathring{\bm{c}} = \mathring{\bm{c}'}$; 
        Secondly, for vector $x\in \bm{x}$, its ket $\ket{x}\in \RR^k$ in the modified \nesort as defined in Definition~\ref{def:our-kets} satisfies $\ket{x}_i=\ket{x_{(i)}}$, where $x_{(i)}\in \bm{x}'$ with $\ket{x_{(i)}}$ being the corresponding ket in the original \nesort.
        Alternatively as matrices (where we assume appropriate permutation of columns), $\ket{\bm{x}}\in \RR^{k\times |\bm{x}|}$ is a reshape of the vector $\ket{\bm{x}'}\in \RR^{k|\bm{x}|}$.
    \end{enumerate}
    The Master Theorem then follows immediately from the Master Theorem of the original \nesort.
        
    \begin{itemize}
     \item \texttt{Init}: Given initial sets $\bm{x}^{0} \subseteq \RR^{nk}$ in the program written in modified \nesort, we construct an initial set ${\bm{x}^{0}}'$ for the original \nesort in the following way:
        \begin{itemize}
            \item For each vector $x \in \bm{x}^{0,1} \subseteq \RR^{nk}$, where $x$ has i.i.d. entries drawn from $\mathcal{N}(0, 1)$, we add the vectors $x_{(1)}, \ldots, x_{(k)} \in \RR^n$ to the initial set ${\bm{x}^{0}}'$.
            \item For each vector $x \in \bm{x}^{0,2} \subseteq \RR^{nk}$, which consists of constant $k$-blocks (i.e., $x_{(1)} = \ldots = x_{(k)}$), we add only $x_{(1)}$ to the initial set ${\bm{x}^{0}}'$. Later, after the initialization is complete, we will use \texttt{OuterNonlin} operations to add $k-1$ identical copies of $x_{(1)}$ to ensure that all of $x_{(1)}, \ldots, x_{(k)}$ are included in $\bm{x}'$.
        \end{itemize}
        
        Given the initial set of matrices $\mathcal{W}$ in the program written in our \nesort, we construct an initial set $\mathcal{W}'$ for the original \nesort as follows:
        \begin{itemize}
            \item For each matrix $W \in \mathcal{W}_1 \subseteq \RR^{nk \times nk}$, which has i.i.d. entries drawn from $\mathcal{N}(0, 1)$, we add the matrices $W_{(1,1)}, \ldots, W_{(k,k)} \in \RR^{n \times n}$ to the initial set $\mathcal{W}'$.
            \item For each matrix $W \in \mathcal{W}_2 \subseteq \RR^{nk \times nk}$, which has constant $k \times k$ blocks (i.e., $W_{(1,1)} = \ldots = W_{(k,k)}$), we simply add $W_{(1,1)}$ to the initial set $\mathcal{W}'$.
        \end{itemize}

        Finally, for initial scalars $\bm{c}^0$ in the program written in modified \nesort, we also use it for the initial scalars for the original $\nesort$. That is, let ${\bm{c}^0}' \colonequals \bm{c}^0$.
        
        After this step of construction, Property P1 immediately satisfies. For each $x\in\bm{x}^{0,1}$ in modified \nesort, its ket construction is $\ket{x} \sim \mathcal{N}(0,I_k)$ as defined in \eqref{eq:init-ket}.
        This coincides with $(\ket{x_{(1)}},\ldots, \ket{x_{(k)}})$, the corresponding ket construction in the constructed original \nesort. 
        For each $x\in\bm{x}^{0,2}$ in modified \nesort, its ket construction is $\ket{x} \sim \dnorm{0,1} \otimes \one_k$ as defined in \eqref{eq:init-ket}.
        This again aligns with $(\ket{x_{(1)}},\ldots, \ket{x_{(k)}})$ from the constructed original \nesort. 
        Hence Property P2 is also satisfied.

        \item \texttt{Avg}: The \texttt{Avg} operation in modified \nesort can be expressed in the original \nesort as follows: 
        By the inductive hypothesis P1, the chosen vector $x \in \bm{x}$ has corresponding vectors $x_{(1)}, \ldots, x_{(k)}$ in $\bm{x}'$. We perform $k$ \texttt{Avg} operations on each of these vectors, which appends $k$ scalars to $\bm{c}'$. 
        Subsequently, we use an additional \texttt{OuterNonlin} operation followed by another \texttt{Avg} operation to compute the average of the $k$ scalars and append the result to $\bm{c}'$. Finally, we remove all the intermediate vectors and scalars generated, and leave only the final result in $\bm{c}'$. 
        This ensure that after this step of construction, Property P1 satisfies.
    
        The ket construction of the constructed original \nesort generates deterministic values $\Ex\ket{x_{(1)}}, \ldots, \Ex\ket{x_{(k)}}$ during the $k$ \texttt{Avg} operations. It then adds another deterministic value $\frac{1}{k}\sum_{i=1}^k \Ex\ket{x_{(i)}}$ from the final \texttt{Avg} operation. 
        The removals ensure that only the last scalar is added to $\mathring{\bm{c}'}$.
        Meanwhile, the ket construction for \texttt{Avg} in the modified \nesort generates $\frac{1}{k}\sum_{i=1}^k \Ex\ket{x}_i$ as defined in \eqref{eq:avg-ket}. 
        By the inductive hypothesis, $\ket{x}_i=\ket{x_{(i)}}$ for all $i$. 
        Hence, P2 also satisfies.
    
        \item \texttt{OuterNonlin}: For each $i \in [k]$, let $\bm{x}_{(i)}$ denote the set of $\RR^n$ vectors formed by taking $x_{(i)}$ for all $x \in \bm{x}$. 
        Equivalently, consider this as a matrix $\bm{x}_{(i)} \in \RR^{n \times |\bm{x}|}$ formed by the $i$-th, $(i+k)$-th, $\ldots$ rows of $\bm{x}\in \RR^{nk\times |\bm{x}|}$. 
        By the inductive hypothesis P1, $\bm{x}'$ consists of vectors $x_{(i)}, i\in[k], x\in\bm{x}$. 
        Let $\bm{x}'_{(i)}$ denote the subset of $\bm{x}'$ consisting of $\{x_{(i)}:x\in\bm{x}\}$.
        As matrices, $\bm{x}'_{(i)}=\bm{x}_{(i)}$.
        
        The \texttt{OuterNonlin} operation \eqref{eq:our_outernonlin} can be expressed as
        \begin{align*}
            y_{(i)} \in \RR^{n}, \quad 
            (y_{(i)})_\alpha &= \frac{1}{(nk)^r} \sum_{i_1,\ldots, i_r=1}^k \sum_{\beta_1,\ldots, \beta_r=1}^{n} \psi\left((\bm{x}_{(i)})_\alpha; (\bm{x}_{(i_1)})_{\beta_1}; \ldots; (\bm{x}_{(i_r)})_{\beta_r}; \bm{c}\right) \\
            &= \frac{1}{n^r} \sum_{\beta_1,\ldots,\beta_r=1}^{n} \phi^{(i)}\left(\bm{x}'_\alpha; \bm{x}'_{\beta_1}; \ldots; \bm{x}'_{\beta_r}; \bm{c}'\right), \quad i=1,\ldots, k
        \end{align*}
        where the function
        \begin{align*}
            &\phi^{(i)}\colon \RR^{|\bm{x}|k(r+1)+l} \to \RR \quad \text{is given by}\\
            &\phi^{(i)}(\bm{y}_0;\dots; \bm{y}_r;\bm{c}) = \frac{1}{k^r} \sum_{i_1, \ldots, i_r=1}^k \psi\left(\xi_i(\bm{y}_0); \xi_{i_1}(\bm{y}_1); \ldots; \xi_{i_r}(\bm{y}_{r}); \bm{c}'\right).
        \end{align*}
        Here $\xi_{i}: \RR^{k|\bm{x}|}\to \RR^{|\bm{x}|}$ denotes the operation that reshapes the input vector into a matrix in $\RR^{k \times |\bm{x}|}$ and returns its $i$-th row.
        This shows that, \texttt{OuterNonlin} in our modified \nesort can be expressed as $k$ \texttt{OuterNonlin} operations in the original \nesort.
        Moreover, Property P1 is satisfied after this step of construction.
        
        The original \nesort generates $k$ kets: for $i\in[k]$, $\ket{y_{(i)}} \colonequals f^{(i)}(\ket{\bm{x}'})$ where the function $f^{(i)}\colon \RR^{k|\bm{x}|} \to \RR$ is given by
        \begin{align*}
            f^{(i)}(\bm{y}) &= \Ex \phi^{(i)} \left( \bm{y}; \ket{\bm{x}'}^{\myfbox{1}}; \ldots; \ket{\bm{x}'}^{\myfbox{r}}; \mathring{\bm{c}'} \right) \\ 
            &= \frac{1}{k^r} \sum_{i_1, \ldots, i_r=1}^k \Ex \psi \left( \xi_i(\bm{y}); {\xi_{i_1}(\ket{\bm{x}'}^{\myfbox{1}})}; \ldots; \xi_{i_r}(\ket{\bm{x}'}^{\myfbox{r}}); \mathring{\bm{c}'} \right)
            \intertext{By inductive hypothesis P2, we have $\xi_i(\ket{\bm{x'}})=\ket{\bm{x}'_{(i)}} = \ket{\bm{x}}_i$, the $i$-th row of $\ket{\bm{x}}\in \RR^{k\times |\bm{x}|}$. Moreover, $\mathring{\bm{c}'}=\mathring{\bm{c}}$. Hence,}
            &= \frac{1}{k^r} \sum_{i_1, \ldots, i_r=1}^k \Ex \psi \left( \xi_i(\bm{y}); \ket{\bm{x}}^{\myfbox{1}}_{i_1}; \ldots; {\ket{\bm{x}}}^{\myfbox{r}}_{i_r}; \mathring{\bm{c}} \right).
        \end{align*}
        On the other hand, the modified \nesort generates ket $\ket{y}$ as defined in \eqref{eq:outernonlin-ket}, with $\ket{y}_i$ exactly matches $\ket{y_{(i)}}$ from above. 
        Hence, P2 still satisfies after this step of construction.

       \item \texttt{MatMul:} Without loss of generality, consider the matrix multiplication between $ W \in \mathcal{W} \subseteq \RR^{nk \times nk} $ and $ x \in \bm{x} \subseteq \RR^{nk} $. 
    (The case of $ W^\top x $ follows in exactly the same way.) 
    Then, for any $ i \in [k] $ and $ t \in [n] $,
    \begin{align*}
        \left((Wx)_{(i)}\right)_{t} &= (Wx)_{(t-1)n+i} = \sum_{s \in [nk]} W_{(t-1)n+i, s} x_s \\
        &= \sum_{j \in [k]} \sum_{s \in [n]} (W_{(i,j)})_{t,s} (x_{(j)})_s = \sum_{j \in [k]} (W_{(i,j)} x_{(j)})_{t}.
    \end{align*}
    
    This implies that for all $ i \in [k] $,
    \[
    (Wx)_{(i)} = \sum_{j \in [k]} W_{(i,j)} x_{(j)}.
    \]
    
    \emph{Case 1: $ W \in \mathcal{W}_1 $.} 
    
    In this case, $ W_{(i,j)} $ are i.i.d. matrices for $ i,j \in [k] $. 
    \texttt{MatMul} in our modified \nesort can be expressed as $ k^2 $ \texttt{MatMul} operations in the original \nesort:
    \[
    W_{(i,j)} x_{(j)}, \quad i,j \in [k].
    \]
    This is followed by $ k $ \texttt{OuterNonlin} operations to calculate $
    \sum_{j \in [k]} W_{(i,j)} x_{(j)}$ for each $ i \in [k] $, 
    and to delete the previously appended $ k^2 $ vectors $\{W_{(i,j)} x_{(j)}:i,j \in [k]\}$ from $ \bm{x}' $. This ensures that property P1 continues to be satisfied after this step of construction. 
    
    The original \nesort generated the following limiting objects: first, the \texttt{MatMul} operations generate $ k^2 $ hat-kets, $\left\{\hatket{W_{(i,j)} x_{(j)}}:i,j \in [k]\right\}$ and $ k^2 $ dot-kets, $\left\{\dotket{W_{(i,j)} x_{(j)}}:i,j \in [k]\right\}$. 
    For any $Uz $ in our modified \nesort, and for all $ i,j,s,t \in [k] $,
    \[
    \cov{\hatket{W_{(i,j)}x_{(j)}}, \hatket{U_{(s,t)}z_{(t)}}} = \mathbb{I}(W=U)\one(i=s,j=t) \mathbb{E} \ket{x_{(j)}} \ket{z_{(j)}}.
    \]
    Moreover,
    \begin{align*}
        \dotket{W_{(i,j)}x_{(j)}}
        &\colonequals \sum_{y' \in \bm{x}'} \ket{y'} \mathbb{E} \frac{\partial \ket{x_{(j)}}}{\partial{\hatket{(W_{(i,j)})^\top y'}}}\\
        &= \sum_{y' \in \bm{x}'} \ket{y'} \mathbb{E} \frac{\partial \ket{x_{(j)}}}{\partial{\hatket{(W^\top)_{(j,i)} y'}}}\\
        &= \sum_{y \in \bm{x}} \ket{y_{(i)}} \mathbb{E} \frac{\partial \ket{x_{(j)}}}{\partial \hatket{(W^\top)_{(j,i)} y_{(i)}}}
        \intertext{since each $ y' \in \bm{x'} $ comes from $ y' = y_{(i)} $ for some $ y \in \bm{x}, i \in [k] $, and $ (W^\top)_{(j,i)} $ only interacts with $ y_{(i)} $. }
    \end{align*}
    Following the \texttt{MatMul}, the \texttt{OuterNonlin} operations generate, for each $ i \in [k] $,
    \[
    \ket{\textstyle\sum_{j \in [k]} W_{(i,j)} x_{(j)}} = \sum_{j \in [k]} \hatket{W_{(i,j)} x_{(j)}} + \sum_{j \in [k]} \dotket{W_{(i,j)} x_{(j)}}.
    \]
    
    To show Property P2 is still satisfied after this step of construction, we need to check that the kets generated in the modified \nesort, $ \hatket{Wx}, \dotket{Wx} $ as defined in \eqref{eq:hatket} and \eqref{eq:dotket} matches the kets generated in the original \nesort:
    \begin{align*}
        \hatket{Wx} = \left(\sum_{j \in [k]} \hatket{W_{(1,j)}x_{(j)}}, \dots, \sum_{j \in [k]} \hatket{W_{(k,j)}x_{(j)}}\right),\\
        \dotket{Wx} = \left(\sum_{j \in [k]} \dotket{W_{(1,j)}x_{(j)}}, \dots, \sum_{j \in [k]} \dotket{W_{(k,j)}x_{(j)}}\right).
    \end{align*}
    
    First, consider the hat part. 
    For any $Wz$ used in the modified \nesort, the corresponding $k\times k$ covariance matrix between the Gaussian vectors on the RHS has the $(i,j)$-th entry being
    \begin{align*}
        \cov{\sum_{t \in [k]} \hatket{W_{(i,t)}x_{(t)}}, \sum_{s \in [k]} \hatket{W_{(j,s)}z_{(s)}}} = \one(i=j) \sum_{t \in [k]} \mathbb{E}\ket{x_{(t)}}\ket{z_{(t)}} = \one(i=j)\Tr\braket{x\mid z}.
    \end{align*}
    This exactly matches our definition of $ \cov{\hatket{Wx}, \hatket{Wz}}_{ij} $ in \eqref{eq:hatket} for $ W \in \mathcal{W}_1 $.
    
    Second, consider the dot part.
    The $i$-th entry of the vector on the RHS is equal to
    \begin{align*}
        \sum_{j \in [k]} \dotket{W_{(i,j)}x_{(j)}}
        &= \sum_{y \in \bm{x}} \ket{y_{(i)}} \sum_{j \in [k]} \mathbb{E} \frac{\partial \ket{x_{(j)}}}{\partial \hatket{(W^\top)_{(j,i)}y_{(i)}}}.
        \intertext{By induction hypothesis, $ \hatket{W^\top y}_j = \sum_{i \in [k]} \hatket{(W^\top)_{(j,i)}y_{(i)}} , \ket{x}_j = \ket{x_{(j)}} $, and $\ket{y}_i = \ket{y_{(i)}}$ so}
        &= \sum_{y \in \bm{x}} \ket{y}_i \sum_{j \in [k]} \mathbb{E} \frac{\partial \ket{x}_j}{\partial \hatket{W^\top y}_j}\\
        &= k \sum_{y \in \bm{x}} \ket{y}_i \mathbb{E} \frac{\partial \ket{x}_1}{\partial \hatket{W^\top y}_1}.
    \end{align*}

    This is exactly the $i$-th entry of $\dotket{Wx}$ in the modified \nesort by our definition in \eqref{eq:dotket} for $ W \in \mathcal{W}_1$.
        
      \emph{Case 2: $ W \in \mathcal{W}_2 $.} 

    In this case, $ W_{(i,j)} $ are identical for $ i,j \in [k] $. 
    \texttt{MatMul} in our modified \nesort can be expressed as, first an \texttt{OuterNonlin} operation to calculate,
    $\sum_{j \in [k]} x_{(j)}$,
    followed by \texttt{MatMul} operations in the original \nesort:
    $W_{(1,1)} \sum_{j \in [k]} x_{(j)}$,
    and finally delete the previously appended vector. 
    Finally, we use one more \texttt{OuterNonlin} operation to add $ k-1 $ identical copies of the final vector. 
    This ensures that property P1 continues to be satisfied after this step of construction. 

    The original \nesort generated the following limiting objects: 
    First, the \texttt{OuterNonlin} operation generates ket $\ket{\sum_{j \in [k]} x_{(j)}} = \sum_{j\in[k]}\ket{x_{(j)}}$;
    then, \texttt{MatMul} operations generate hat-kets, $\hatket{W_{(1,1)} \sum_jx_{(j)}} $, and dot-kets, $\dotket{W_{(1,1)}\sum_j x_{(j)}}$. 
    For any $ Uz $ in our modified \nesort, we have
    \[
    \cov{\hatket{W_{(1,1)}\textstyle\sum_i x_{(i)}}, \hatket{U_{(1,1)}\textstyle\sum_j z_{(j)}}} = \mathbb{I}(W=U)\sum_{i,j}\braket{x_{(i)} \mid z_{(j)}}.
    \]
    Moreover,
    \begin{align*}
        \dotket{W_{(1,1)}\textstyle\sum_i x_{(i)}}
        &\colonequals \sum_{y' \in \bm{x'}} \sum_{i\in[k]}\ket{y'} \mathbb{E} \frac{\partial \ket{x_{(i)}}}{\partial{\hatket{(W_{(1,1)})^\top y'}}}\\
         &= \sum_{y \in \bm{x}} \sum_{j \in [k]}\sum_{i\in[k]} \ket{y_{(j)}} \mathbb{E} \frac{\partial \ket{x_{(i)}}}{\partial \hatket{(W^\top)_{(1,1)} y_{(j)}}}.
         \intertext{since each $ y' \in \bm{x'} $ comes from $ y' = y_{(j)} $ for some $ y \in \bm{x}, j \in [k] $, and $ (W^\top)_{(1,1)} $ interacts with every $ y_{(j)} $. }
    \end{align*}
    Following the \texttt{MatMul}, the \texttt{OuterNonlin} operations generate a random variable
    \[
     \ket{W_{(1,1)}\textstyle\sum_i x_{(i)}} = \hatket{W_{(1,1)} \textstyle\sum_i x_{(i)}} + \dotket{W_{(1,1)} \textstyle\sum_i x_{(i)}}.
    \]
    To show Property P2 is still satisfied after this step of construction, we need to check that the kets generated in the modified \nesort, $ \hatket{Wx}, \dotket{Wx} $ as defined in \eqref{eq:hatket} and \eqref{eq:dotket} matches the kets generated in the original \nesort:
    \begin{align*}
        \hatket{Wx} &=  \hatket{W_{(1,1)} \textstyle\sum_i x_{(i)}} \one_k, \qquad \text{and} \qquad
        \dotket{Wx} =  \dotket{W_{(1,1)} \textstyle\sum_i x_{(i)}} \one_k.
    \end{align*}
    First, let us consider the hat ket. For any $Wz$ used in the modified \nesort, the corresponding $k\times k$ covariance matrix between the Gaussian vectors on the RHS has constant entries which are given by
    \begin{align*}
         \cov{\hatket{W_{(1,1)} \textstyle\sum_i x_{(i)}}, \hatket{W_{(1,1)}\textstyle\sum_j z_{(j)}}}= \sum_{i,j} \braket{x_{(i)} \mid z_{(j)}} = \one_k^\top \braket{x\mid z}\one_k
    \end{align*} 
    This exactly matches our definition of $\cov{\hatket{Wx}, \hatket{Wz}}$ in \eqref{eq:hatket} for $ W \in \mathcal{W}_2 $.

    Second, let us consider the dot-ket. The vector on the RHS has constant entry, which is 
    \begin{align*}
        \dotket{W_{(1,1)}\textstyle\sum_i x_{(i)}}
        &= \sum_{y \in \bm{x}} \sum_{j \in [k]}\sum_{i\in[k]} \ket{y_{(j)}} \mathbb{E} \frac{\partial \ket{x_{(i)}}}{\partial \hatket{(W^\top)_{(1,1)} y_{(j)}}}\\
        &= \sum_{y \in \bm{x}} \sum_{j \in [k]} \ket{y}_j \sum_{i \in [k]} \mathbb{E} \frac{\partial \ket{x}_i}{\partial \hatket{W^\top y}_i}\\
        \intertext{since by induction hypothesis,  $ \hatket{W^\top y}_j =\hatket{(W^\top)_{(1,1)}\sum_i y_{(i)}} ,\ket{x}_j = \ket{x_{(j)}} $ and $\ket{y}_j=\ket{y_{(j)}}$. Continuing,}
        &= k \sum_{y \in \bm{x}} \sum_{j \in [k]} \ket{y}_j \mathbb{E} \frac{\partial \ket{x}_1}{\partial \hatket{W^\top y}_1}.
    \end{align*}
    This is exactly the entries of $\dotket{Wx}$ in the modified \nesort \eqref{eq:dotket} for $ W \in \mathcal{W}_2$. \qedhere
    \end{itemize}
\end{proof}

\if\isArxiv 1
\subsection{Example: 3-layer MLP without nonlinearities and with input and output weights frozen}
\else
\subsection{Example: 3-Layer MLP Without Nonlinearities and with Input and Output Weights Frozen}
\fi
\label{appen:eg1}
\paragraph{Set-up.}
Consider a 3-layer MLP without nonlinearity initialized and trained under $\mu$P in Table~\ref{tab:mup_scaling_main}.
The forward pass is given by
\[h^{(1)}=Cx, \quad h^{(2)}=Bh^{(1)}, \quad y=\frac{1}{n}Ah^{(2)},\]
where $A\in \RR^{1\times n}, B\in \RR^{n\times n}, C\in\RR^{n\times 1}$ are the weights. 
The random initialization at $t=0$ is
\[(A_0)_{ij}\sim \dnorm{0,\overline\sigma^2},
\quad (B_0)_{ij}\sim\dnorm{0,n^{-1} \overline\sigma^2}, 
\quad (C_0)_{ij}\sim \dnorm{0,\overline\sigma^2},\]
We fix the input and output weights $A,C$ and only train the hidden weight $B$.
For simplicity of notation, we assume full gradient descent on a dataset of size $1$, but the same works for any dataset of fixed size; it can also be extended SGD on mini-batches (see Setup~2.3.1 in \citet{yang2023tensor}).

We first train this MLP using vanilla SGD for training steps $t<T$.
We write $A_t$, $B_t$, $C_t$, $h_t^{(1)}$, $h_t^{(2)}$, and $y_t$ for the corresponding quantities at step $t$. 
We write $A_t^\uparrow, B_t^\uparrow, C_t^\uparrow, {h^\uparrow_t}^{(1)}, {h^\uparrow_t}^{(2)}$ for the equivalent widened version of width $N=nk$, as specified in Table~\ref{tab:widening}. 
That is, 
\[A_t^\uparrow \colonequals A_t\otimes \one_{k}^\top, \quad 
B_t^\uparrow \colonequals k^{-1}B_t\otimes \one_k\one_k^\top, \quad 
C_t^\uparrow \colonequals C_t\otimes \one_k, \quad
{h^\uparrow_t}^{(1)}\colonequals h_t^{(1)}\otimes \one_k, \quad
{h^\uparrow_t}^{(2)}\colonequals h_t^{(2)}\otimes \one_k.\]

In the backpropagation, the gradient is computed by
\[dB_t = dh_t^{(2)} (h_t^{(1)})^\top = n^{-1}\mathcal{L'}(y_t)x A_0^\top C_0^\top,\]
The weight is updated by $B_{t+1} =B_t - \overline\gamma dB_t$.
Other weights $A,C$ are not trained, so we take $A_t=A_0, C_t=C_0$ for all $t<T$.

At step $T$, we apply upscaling. Specifically, we set 
\[A^\uparrow_T \colonequals  A^\uparrow_{T-1} + \Delta_{A}, 
\quad B^\uparrow_T \colonequals B^\uparrow_{T-1} + \Delta_{B},
\quad C^\uparrow_T \colonequals C^\uparrow_{T-1}+\Delta_C,\]
where matrices $\Delta_A \in \RR^{1\times N}, \Delta_B\in\RR^{N\times N}, \Delta_C \in \RR^{N\times 1}$ have i.i.d. entries 
\[(\Delta_A)_{ij}\sim \dnorm{0,\overline{\sigma_{\Delta,A}}^2}, 
\quad (\Delta_{B})_{ij}\sim \dnorm{0,N^{-1}\overline{\sigma_{\Delta,B}}^2}, 
\quad (\Delta_{C})_{ij}\sim\dnorm{0,\overline{\sigma_{\Delta,C}}^2}.\]
To facilitate theoretical analysis of the role of noise, here we assume that the weights $A$, $B$, and $C$ are perturbed by additive noise with potentially different magnitudes.

Finally, we train the upscaled model for training step $t\geq T$. 
We apply gradient descent on $B_t^\uparrow$. 
\[B^\uparrow_{t+1} = B^\uparrow_{t} - \overline{\gamma^\uparrow} dB^\uparrow _{t},\]
where the gradient
\[dB^\uparrow_t = {dh^\uparrow_t}^{(2)} ({h^\uparrow_t}^{(1)})^\top = N^{-1}\mathcal{L'}(y_t)x{A^\uparrow_T}^\top {C^\uparrow_T}^\top.\]
Meanwhile, take $A_t= A_T, C_t=C_T$ for all $t>T$. 

Note that the size-$n$ quantities stop to be tracked at step $t=T$. 
All the size-$N$ quantities are all denoted with $^\uparrow$, and they exist for all training steps.

\paragraph{Before upscaling.}
We apply the original \nesort from \citet{yang2023tensor} to analyze the infinite-width training dynamics prior to upscaling.
At each training step $t<T$, we have
\[
h_t^{(1)} = C_0 x, \quad h_t^{(2)}= \left(B_0-\overline{\gamma} n^{-1} x \sum_{s=0}^{t-1}\mathcal{L}'(y_s) A_0^\top C_0^\top \right) h_t^{(1)}, \quad y_t = \frac{1}{n} A_0 h_t^{(2)}.
\]
The first pre-activation $h_t^{(1)}$ remains constant for all $t$, and its distribution is tracked by
\[
\ket{h_t^{(1)}} = x \ket{C_0} \sim \dnorm{0, x^2 \overline\sigma^2}.
\]
It follows that
\[
h_t^{(2)} = B_0 h_0^{(1)} - \overline{\gamma} x^2 \sum_{s=0}^{t-1}\mathcal{L}'(y_s) \frac{C_0^\top C_0}{n} A_0^\top.
\]
Hence, the distribution of the second pre-activation $h_t^{(2)}$ is tracked by
\[
\ket{h_t^{(2)}} = \ket{B_0 h_0^{(1)}} - \overline\gamma x^2 \sum_{s=0}^{t-1}\mathcal{L}'(\mathring{y}_s)\underbrace{\braket{C_0\mid C_0}}_{=\overline\sigma^2} \ket{A_0},
\]
where $\ket{B_0 h_0^{(1)}} \sim \dnorm{0, x^2 \overline\sigma^4}$ and $\ket{A_0} \sim \dnorm{0, \overline{\sigma^2}}$ are independent.
The output $y_t$ converges to the deterministic scalar
\begin{equation}\label{eq:simple-eg-recursion-bf-upscaling}
    \mathring{y_t} = \braket{A_0 \mid h_t^{(2)}} = -\overline{\gamma} x^2 \overline{\sigma}^4 \sum_{s=0}^{t-1}\mathcal{L}'(\mathring{y}_s).
\end{equation}
In this simple model, the infinite-width training dynamics is fully characterized by the recursion \eqref{eq:simple-eg-recursion-bf-upscaling}, and no intermediate distributions need to be tracked.

\paragraph{After upscaling.}
We analyze the post-upscaling regime using our modified \nesort.
At each training step $t \geq T$, we have
\begin{align*}
    {h^\uparrow_t}^{(1)} &= (C_0^\uparrow + \Delta_C) x, \\
    {h^\uparrow_t}^{(2)} &= \begin{multlined}[t]
        \Big( B_0^\uparrow - \overline{\gamma} N^{-1} x \sum_{s=0}^{T-2} \mathcal{L}'(y_s) {A_0^\uparrow}^\top {C_0^\uparrow}^\top  \\ 
        + \Delta_B - \overline{\gamma^\uparrow} N^{-1} x \sum_{s=T}^{t-1} \mathcal{L}'(y_s) ({A_0^\uparrow}+\Delta_A)^\top {(C_0^\uparrow+\Delta_C)}^\top \Big) {h^\uparrow_t}^{(1)}, 
    \end{multlined} \\
    y_t &= \frac{1}{N} (A_0^\uparrow + \Delta_A) {h^\uparrow_t}^{(2)}.
\end{align*}
The first pre-activation ${h^\uparrow_t}^{(1)}$ remains constant for all $t \geq T$, and its distribution is tracked by
\[
\ket{{h^\uparrow_t}^{(1)}} = x \ket{C^\uparrow_0} + x \ket{\Delta_C},
\]
where
\[
\ket{C^\uparrow_0} \sim \dnorm{0, \overline\sigma^2} \one_k, \quad \ket{\Delta_C} \sim \dnorm{0, \overline{\sigma_{\Delta,C}}^2 I_k}.
\]
It follows that
\begin{align*}
{h^\uparrow_t}^{(2)} &= B_0^\uparrow {h^\uparrow_T}^{(1)} - \overline{\gamma} x \sum_{s=0}^{T-2} \mathcal{L}'(y_s) {A_0^\uparrow}^\top \frac{{C_0^\uparrow}^\top {h^\uparrow_T}^{(1)}}{N} \\
&\quad + \Delta_B {h^\uparrow_T}^{(1)} - \overline{\gamma^\uparrow} x \sum_{s=T}^{t-1} \mathcal{L}'(y_s) ({A_0^\uparrow}+\Delta_A)^\top \frac{{(C_0^\uparrow+\Delta_C)}^\top {h^\uparrow_T}^{(1)}}{N}.
\end{align*}
Since $\frac{1}{k}\Tr\braket{{C_0^\uparrow} \mid  {h^\uparrow_T}^{(1)}}=x^2\overline\sigma^2$ and 
 $\frac{1}{k}\Tr\braket{C_0^\uparrow+\Delta_C \mid  {h^\uparrow_T}^{(1)}}=x^2 (\overline{\sigma}^2 + \overline{\sigma_{\Delta,C}}^2)$, the distribution of the second pre-activation ${h^\uparrow_t}^{(2)}$ is tracked by
\begin{align*}
\ket{{h^\uparrow_t}^{(2)}} &= \ket{B_0^\uparrow {h^\uparrow_T}^{(1)}}
- \overline\gamma x^2 \overline{\sigma}^2 \sum_{s=0}^{T-2} \mathcal{L}'(\mathring{y}_s) \ket{A_0^\uparrow}\\
&\quad + \ket{\Delta_B {h^\uparrow_T}^{(1)}} - \overline{\gamma^\uparrow} x^2 (\overline{\sigma}^2 + \overline{\sigma_{\Delta,C}}^2) \sum_{s=T}^{t-1} \mathcal{L}'(\mathring{y}_s) \left(\ket{A_0^\uparrow} + \ket{\Delta_A}\right),
\end{align*}
where $\ket{B_0^\uparrow {h^\uparrow_T}^{(1)}}$, $\ket{\Delta_B {h^\uparrow_T}^{(1)}}$, $\ket{A^\uparrow_0}$, and $\ket{\Delta_A}$ are independent Gaussian vectors.
The output $y_t$ converges to the deterministic scalar
\begin{equation}\label{eq:simple-eg-recursion-af-upscaling}
    \mathring{y_t} = \braket{A^\uparrow_0 \mid {h^\uparrow_t}^{(2)}} + \braket{\Delta_A \mid {h^\uparrow_t}^{(2)}} = -\overline{\gamma} x^2 \overline{\sigma}^4 \sum_{s=0}^{T-2} \mathcal{L}'(\mathring{y}_s) - \overline{\gamma^\uparrow} x^2 (\overline{\sigma}^2 + \overline{\sigma_{\Delta,C}}^2) (\overline{\sigma}^2 + \overline{\sigma_{\Delta,A}}^2) \sum_{s=T}^{t-1} \mathcal{L}'(\mathring{y}_s).
\end{equation}

\paragraph{Conclusion.}
Comparing \eqref{eq:simple-eg-recursion-bf-upscaling} and \eqref{eq:simple-eg-recursion-af-upscaling} shows that the infinite-width training dynamics before and after upscaling coincide if we have the following relation among the hyperparameters:
\[
\overline{\gamma}\, \overline{\sigma}^4  =  \overline{\gamma^\uparrow} (\overline{\sigma}^2 + \overline{\sigma_{\Delta,C}}^2) (\overline{\sigma}^2 + \overline{\sigma_{\Delta,A}}^2).
\]
In particular, if $\overline{\sigma_{\Delta,A}} = \overline{\sigma_{\Delta,C}} = 0$ (i.e., no noise added to the frozen weights $A$ or $C$, with noise added only to $B$), then for any $\overline{\sigma_{\Delta,B}}$, choosing $\overline{\gamma^\uparrow} = \overline{\gamma}$ (the same learning-rate base constant after upscaling) yields equivalent dynamics.
Further, even if we add a substantial amount of noise, it is possible to adjust the learning rates accordingly so that the infinite-width limit is preserved.
We note that this equivalence does not imply that upscaling is inconsequential, because the above equivalence only pertains to infinite-width limits, and we expect increased width to reduce finite-width errors and thus yield performance closer to those infinite-width limits.

\if\isArxiv 1
\subsection{Example: 4-layer MLP without nonlinearities and with input and output weights frozen}
\else
\subsection{Example: 4-Layer MLP Without Nonlinearities and with Input and Output Weights Frozen}
\fi
\label{appen:eg2}
\paragraph{Set-up.} We now consider a slightly more complex MLP that adds one additional trainable hidden layer relative to the previous architecture.
The forward pass at training step $t$ is
\[h_t^{(1)}=Dx, \quad h_t^{(2)}=Ch_t^{(1)}, \quad h_t^{(3)} =Bh_t^{(2)}, \quad y_t=n^{-1}Ah_t^{(3)},\]
where $A\in \RR^{1\times n}$, $B\in \RR^{n\times n}$, $C\in \RR^{n\times n}$, and $D\in \RR^{n\times 1}$ denote the layer weights.
At $t=0$, the random initialization is
\[(A_0)_{ij}\sim \dnorm{0,\overline\sigma^2},
\quad (B_0)_{ij}\sim\dnorm{0,n^{-1} \overline\sigma^2}, 
\quad (C_0)_{ij}\sim\dnorm{0,n^{-1}\overline\sigma^2},
\quad (D_0)_{ij}\sim \dnorm{0,\overline\sigma^2}.\]
We fix the input and output weights $A$ and $D$ and train only the hidden-layer weights $B$ and $C$.

For steps $t<T$, we optimize the width-$n$ model using vanilla SGD.
The gradients are computed via backpropagation as
\[
dh_t^{(3)} = n^{-1} \mathcal{L}'(y_t) A_t^\top, \quad 
dh_t^{(2)} = B_t^\top dh_t^{(3)}, \quad 
dh_t^{(1)} = C_t^\top dh_t^{(2)},\]
\[dB_t=dh_t^{(3)}(h_t^{(2)})^\top, \quad 
dC_t = dh_t^{(2)}(h_t^{(1)})^\top ,\]
and the weights are updated with learning rate $\overline\gamma$ according to
\[B_{t+1}=B_t - \overline\gamma dB_t, \quad C_{t+1}=C_t-\overline\gamma dC_t.\]
We write $^\uparrow$ for the equivalent widened version of width $N=nk$, as specified in Table~\ref{tab:widening}. 
That is, for $t\leq T$,
\[A_t^\uparrow \colonequals A_t\otimes \one_{k}^\top, \quad 
B_t^\uparrow \colonequals k^{-1}B_t\otimes \one_k\one_k^\top, \quad 
C_t^\uparrow \colonequals k^{-1}C_t\otimes \one_k\one_k^\top, \quad
D_t^\uparrow \colonequals D_t\otimes \one_k;
\]
\[
{h^\uparrow_t}^{(j)}\colonequals h_t^{(j)}\otimes \one_k ,\quad
{dh^\uparrow_t}^{(j)} \colonequals k^{-1} dh_t^{(j)}\otimes \one_k \quad \text{ for $j=1,2,3$;}
\]
\[
dB^\uparrow _t\colonequals k^{-1}dB_t\otimes \one_k\one_k^\top, \quad
dC^\uparrow _t\colonequals k^{-1}dC_t\otimes \one_k\one_k^\top.
\]

At step $T$, we apply upscaling to width $N$.
Specifically, we set
\[A^\uparrow_T \colonequals  A^\uparrow_{T-1} + \Delta_{A}, 
\quad B^\uparrow_T \colonequals B^\uparrow_{T-1} + \Delta_{B},
\quad C^\uparrow_T \colonequals C^\uparrow_{T-1}+\Delta_C,
\quad D^\uparrow_T \colonequals D^\uparrow_{T-1}+\Delta_D,\]
where the increment matrices $\Delta_A \in \RR^{1\times N}$, $\Delta_B\in\RR^{N\times N}$, $\Delta_C \in \RR^{N\times N}$, and $\Delta_D \in \RR^{N\times 1}$ have i.i.d. entries distributed as
\[(\Delta_A)_{ij}\sim \dnorm{0,\overline{\sigma_{\Delta,A}}^2}, 
\quad (\Delta_{B})_{ij}\sim \dnorm{0,N^{-1}\overline{\sigma_{\Delta,B}}^2}, \]
\[ (\Delta_{C})_{ij}\sim \dnorm{0,N^{-1}\overline{\sigma_{\Delta,C}}^2},
\quad (\Delta_{D})_{ij}\sim \dnorm{0,\overline{\sigma_{\Delta,D}}^2}.\]

Finally, for all $t\geq T$, we train the upscaled model.
The gradients are computed by 
\[
{dh^\uparrow_t}^{(3)} = N^{-1} \mathcal{L}'(y_t) {A^\uparrow_t}^\top, \quad 
{dh^\uparrow_t}^{(2)} = {B^\uparrow_t}^\top {dh^\uparrow_t}^{(3)}, \quad 
{dh^\uparrow_t}^{(1)} = {C^\uparrow_t}^\top {dh^\uparrow_t}^{(2)},\]
\[dB_t^\uparrow = {dh^\uparrow_t}^{(3)}({h^\uparrow_t}^{(2)})^\top,\quad
dC_t^\uparrow = {dh^\uparrow_t}^{(2)}({h^\uparrow_t}^{(1)})^\top ,\]
and the weights are updated with learning rate $\overline{\gamma^\uparrow}$ as
\[B^\uparrow_{t+1}=B^\uparrow_t - \overline{\gamma^\uparrow} dB^\uparrow_t, \quad C^\uparrow_{t+1}=C^\uparrow_t-\overline{\gamma^\uparrow} dC^\uparrow_t.\]
\paragraph{Before upscaling.} We apply the original \nesort of \citet{yang2023tensor} to characterize the infinite-width training dynamics prior to upscaling.
For each training step $t<T$, the following forward- and backward-propagation relations hold.
\[
\begin{aligned}
h_t^{(1)}&=D_0 x, \\
h_t^{(2)}&=\bigl(C_0-  \overline\gamma \sum_{s=0}^{t-1} dh_s^{(2)}(h_s^{(1)})^\top\bigr) h_t^{(1)},\\
h_t^{(3)}&=\bigl(B_0 -  \overline\gamma \sum_{s=0}^{t-1}  dh_s^{(3)}(h_s^{(2)})^\top\bigr) h_t^{(2)},\\
y_t&= n^{-1} A_0 h_t^{(3)},\\
dh_t^{(3)} &= n^{-1}\mathcal{L}'(y_t)A_t^\top,\\
dh_t^{(2)} &= \bigl(B_0 -  \overline\gamma \sum_{s=0}^{t-1}  dh_s^{(3)}(h_s^{(2)})^\top\bigr)^\top dh_t^{(3)},  \\
dh_t^{(1)} &= \bigl(C_0-  \overline\gamma \sum_{s=0}^{t-1} dh_s^{(2)}(h_s^{(1)})^\top\bigr)^\top dh_t^{(2)}.
\end{aligned}
\]
We track the distributions of these vectors using kets:
\footnote{Here $\ket{dh_t^{(j)}}$ tracks the distribution of $n\,dh_t^{(j)}/\mathcal{L}'(y_t)$ for $j=1,2,3$.}
\begin{align*}
     \ket{h_t^{(1)}} &= x \ket{D_0}\sim \dnorm{0,x^2\overline\sigma^2},\\
     \ket{h_t^{(2)}} &= {\hatket{C_0 h_0^{(1)}}}
    - \overline\gamma\sum_{s=0}^{t-1}\mathcal{L}'(\mathring y_s)
    \underbrace{\braket{h_s^{(1)}\mid h_t^{(1)}}}_{ = x^2\overline\sigma^2}\ket{dh_s^{(2)}},\\
    \ket{h_t^{(3)}} &= \hatket{B_0 h_t^{(2)}} + \dotket{B_0 h_t^{(2)}} - \overline\gamma\sum_{s=0}^{t-1}\mathcal{L}'(\mathring y_s)\braket{h_s^{(2)}\mid h_t^{(2)}}\underbrace{\ket{dh_s^{(3)}}}_{=\ket{A_0}},\\
    \mathring{y_t} &= \braket{A_0 \mid h_t^{(3)}},\\
    \ket{dh_t^{(3)}} &=\ket{A_0},\\
    \ket{dh_t^{(2)}} &= {\hatket{B_0^\top dh_0^{(3)}}} - \overline\gamma\sum_{s=0}^{t-1} \mathcal{L}'(\mathring y_s) \underbrace{\braket{dh_s^{(3)}\mid dh_t^{(3)}}}_{=\overline\sigma^2}\ket{h_s^{(2)}},\\
    \ket{dh_{t}^{(1)}} &= \hatket{C_0^\top dh_t^{(2)}} + \dotket{C_0^\top dh_t^{(2)}} - \overline\gamma\sum_{s=0}^{t-1}\mathcal{L}'(\mathring y_s) \braket{dh_s^{(2)}\mid {dh_t^{(2)}}} \underbrace{\ket{h_s^{(1)}}}_{=x\ket{D_0}}.
\end{align*}
By substitution, $\ket{h_t^{(2)}}$ satisfies the recursion
\[
\ket{h_t^{(2)}} = \hatket{C_0 h_0^{(1)}} - \overline\gamma x^2 \overline{\sigma}^2 \sum_{s=0}^{t-1}\mathcal{L}'(\mathring y_s)\left(\hatket{B_0^\top dh_0^{(3)}} - \overline\gamma \,\overline\sigma^2 \sum_{\ell=0}^{s-1} \mathcal{L}'(\mathring y_\ell)\ket{h_\ell^{(2)}}\right).
\]
Therefore, $\ket{h_t^{(2)}}$ admits the decomposition
\begin{equation}\label{eq:h_t^(2)}
\ket{h_t^{(2)}} = M_t \hatket{C_0 h_0^{(1)}} + N_t \hatket{B_0^\top dh_0^{(3)}}.
\end{equation}
The coefficients $M_t$ and $N_t$ satisfy the recursion
\begin{equation}\label{eq:MN_bf_upscaling}
    M_t = 1 + \overline\gamma^2 x^2 \overline\sigma^4 \sum_{s=1}^{t-1}\mathcal{L}'(\mathring y_s)\sum_{\ell=0}^{s-1} \mathcal{L}'(\mathring y_\ell) M_\ell,
\quad N_t = - \overline\gamma x^2 \overline\sigma^2 \sum_{s=0}^{t-1}\mathcal{L}'(\mathring y_s) + \overline\gamma^2 x^2 \overline\sigma^4  \sum_{s=1}^{t-1}\mathcal{L}'(\mathring y_s)\sum_{\ell=0}^{s-1} \mathcal{L}'(\mathring y_\ell) N_\ell.
\end{equation}

Using the definition of $\dotket{\bullet}$, since $B_0/\overline\sigma$ has i.i.d. entries in $\dnorm{0,n^{-1}}$, we have
\[
\dotket{B_0 h_t^{(2)}} = \overline\sigma \Ex\left[\frac{\partial \ket{h_t^{(2)}}}{\partial \ket{\overline{\sigma}^{-1}B_0^\top dh_0^{(3)}}}\right]\ket{dh_0^{(3)}}  = \overline\sigma^2 N_t \ket{A_0}.
\]
Moreover, the bra-ket evaluates to
\[
\braket{h_s^{(2)}\mid h_t^{(2)}} = M_s M_t \overline\sigma^4  x^2 + N_s N_t \overline\sigma^4 \equalscolon \clabel{1}.
\]
Finally, we obtain
\begin{equation}\label{eq:final-recursion-bf-upscaling}
\begin{aligned}
    \mathring{y_t} &=  \overline\sigma^2N_t \braket{A_0 \mid A_0} -\overline\gamma \sum_{s=0}^{t-1}\mathcal{L}'(\mathring y_s)
    \braket{h_s^{(2)} \mid {h_t^{(2)}}}\braket{A_0\mid A_0}\\
    &= \overline\sigma^4 N_t -\overline\gamma \,\overline\sigma^2 \sum_{s=0}^{t-1} \mathcal{L}'(\mathring y_s)\clabel{1}.
\end{aligned}
\end{equation}

\paragraph{After upscaling.} We apply the modified \nesort to characterize the infinite-width training dynamics after upscaling.
For $t \geq T$, the forward- and backward-propagation relations are
\begin{align*}
{h^\uparrow_t}^{(1)}&=(D^\uparrow_0+\Delta_D) x, \\
{h^\uparrow_t}^{(2)}&=\Big(C^\uparrow_0-  \overline\gamma \sum_{s=0}^{T-2} {dh^\uparrow_s}^{(2)}({h^\uparrow_s}^{(1)})^\top+ \Delta_C - \overline{\gamma^\uparrow}\sum_{s=T}^{t-1} {dh^\uparrow_s}^{(2)}({h^\uparrow_s}^{(1)})^\top\Big) {h^\uparrow_t}^{(1)},\\
{h^\uparrow_t}^{(3)}&=\Big(B^\uparrow_0 -  \overline\gamma \sum_{s=0}^{T-2}  {dh^\uparrow_s}^{(3)}({h^\uparrow_s}^{(2)})^\top+\Delta_B - \overline{\gamma^\uparrow}\sum_{s=T}^{t-1} {dh^\uparrow_s}^{(3)}({h^\uparrow_s}^{(2)})^\top\Big) {h^\uparrow_t}^{(2)},\\
y_t&= N^{-1} (A^\uparrow_0 + \Delta_A) {h^\uparrow_t}^{(3)},\\
{dh^\uparrow_t}^{(3)} &= N^{-1}\mathcal{L}'(y_t)(A^\uparrow_0 + \Delta_A)^\top,\\
{dh^\uparrow_t}^{(2)} &= \Big(B^\uparrow_0 -  \overline\gamma \sum_{s=0}^{T-2}  {dh^\uparrow_s}^{(3)}({h^\uparrow_s}^{(2)})^\top+\Delta_B - \overline{\gamma^\uparrow}\sum_{s=T}^{t-1} {dh^\uparrow_s}^{(3)}({h^\uparrow_s}^{(2)})^\top\Big)^\top {dh^\uparrow_T}^{(3)},  \\
{dh^\uparrow_t}^{(1)} &= \Big(C^\uparrow_0-  \overline\gamma \sum_{s=0}^{T-2} {dh^\uparrow_s}^{(2)}({h^\uparrow_s}^{(1)})^\top+ \Delta_C - \overline{\gamma^\uparrow}\sum_{s=T}^{t-1} {dh^\uparrow_s}^{(2)}({h^\uparrow_s}^{(1)})^\top\Big)^\top {dh^\uparrow_t}^{(2)}.
\end{align*}
We track the distributions of the preactivations and backpropagated signals using the multi-vector kets:
\footnote{As before, $\ket{{dh^\uparrow_t}^{(j)}}$ tracks the distribution of $N\,{dh^\uparrow_t}^{(j)}/\mathcal{L}'(y_t)$ for $j=1,2,3$.}
\begin{align*}
    \ket{{h^\uparrow_t}^{(1)}} & = x\left(\ket{D^\uparrow_0} + \ket{\Delta_D}\right),\\
    \ket{{h^\uparrow_t}^{(2)}} &= \hatket{C^\uparrow_0 {h^\uparrow_T}^{(1)}} 
    - \overline\gamma \sum_{s=0}^{T-2} \mathcal{L}'(\mathring y_s) \underbrace{\frac{\Tr\braket{{h^\uparrow_s}^{(1)}\mid {h^\uparrow_T}^{(1)}}}{k}}_{ = x^2\overline\sigma^2} \ket{{dh^\uparrow_s}^{(2)}} \\
    &\quad + \hatket{\Delta_C {h^\uparrow_T}^{(1)}} - \overline{\gamma^\uparrow} \sum_{s=T}^{t-1} \mathcal{L}'(\mathring y_s) \underbrace{\frac{\Tr\braket{{h^\uparrow_s}^{(1)}\mid {h^\uparrow_T}^{(1)}}}{k}}_{ = x^2(\overline\sigma^2+\overline{\sigma_{\Delta,D}}^2)} \ket{{dh^\uparrow_s}^{(2)}},\\
    \ket{{h^\uparrow_t}^{(3)}} &= \hatket{B^\uparrow_0 {h^\uparrow_t}^{(2)}} + \dotket{B^\uparrow_0 {h^\uparrow_t}^{(2)}} - \overline\gamma \sum_{s=0}^{T-2}\mathcal{L}'(\mathring y_s)\frac{\Tr\braket{{h^\uparrow_s}^{(2)} \mid {h^\uparrow_t}^{(2)}}}{k}\underbrace{\ket{{dh^\uparrow_s}^{(3)}}}_{=\ket{A^\uparrow_0}} \\
    &\quad + \hatket{\Delta_B {h^\uparrow_t}^{(2)}} + \dotket{\Delta_B {h^\uparrow_t}^{(2)}} - \overline{\gamma^\uparrow} \sum_{s=T}^{t-1} \mathcal{L}'(\mathring y_s) \frac{\Tr \braket{{h^\uparrow_s}^{(2)} \mid {h^\uparrow_t}^{(2)}}}{k}\underbrace{\ket{{dh^\uparrow_s}^{(3)}}}_{=\ket{A^\uparrow_0} + \ket{\Delta_A}}, \\
    \mathring{y_t} &= \frac{\Tr\braket{{A_0^\uparrow}\mid {h^\uparrow_t}^{(3)}} + \Tr\braket{\Delta_A \mid {h^\uparrow_t}^{(3)}}}{k},\\
    \ket{{dh^\uparrow_t}^{(3)}} & = \ket{A^\uparrow_0} + \ket{\Delta_A}, \\
    \ket{{dh^\uparrow_t}^{(2)}} &= \hatket{{B^\uparrow_0}^\top {dh^\uparrow_T}^{(3)}} - \overline\gamma \sum_{s=0}^{T-2} \mathcal{L}'(\mathring y_s) \underbrace{\frac{\Tr\braket{{dh^\uparrow_s}^{(3)} \mid {dh^\uparrow_t}^{(3)}}}{k} }_{=\overline\sigma^2} \ket{{h^\uparrow_s}^{(2)}} \\
    &\quad + \ket{\Delta_B^\top {dh^\uparrow_T}^{(3)}} - \overline{\gamma^\uparrow} \sum_{s=T}^{t-1} \mathcal{L}'(\mathring y_s) \underbrace{\frac{\Tr\braket{{dh^\uparrow_s}^{(3)} \mid {dh^\uparrow_t}^{(3)}}}{k}}_{=\overline\sigma^2 + \overline{\sigma_{\Delta,A}}^2} \ket{{h^\uparrow_s}^{(2)}},\\
    \ket{{dh^\uparrow_{t}}^{(1)}} &= \hatket{{C^\uparrow_0}^\top 
    {dh^\uparrow_t}^{(2)}} + \dotket{{C^\uparrow_0}^\top {dh^\uparrow_t}^{(2)}} - \overline\gamma \sum_{s=0}^{T-2}\mathcal{L}'(\mathring y_s) \frac{\Tr\braket{{dh^\uparrow_s}^{(2)} \mid {dh^\uparrow_t}^{(2)}}}{k} \underbrace{\ket{{h^\uparrow_s}^{(1)}}}_{=x\ket{D^\uparrow_0}} \\
    &\quad + \hatket{\Delta_C^\top {dh^\uparrow_t}^{(2)}} + \dotket{\Delta_C^\top {dh^\uparrow_t}^{(2)}} - \overline{\gamma^\uparrow} \sum_{s=T}^{t-1} \mathcal{L}'(\mathring y_s) \frac{\Tr\braket{{dh^\uparrow_s}^{(2)} \mid {dh^\uparrow_t}^{(2)}}}{k} \underbrace{\ket{{h^\uparrow_s}^{(1)}}}_{=x(\ket{D^\uparrow_0}+\ket{\Delta_D})}.
\end{align*}
By substitution, $\ket{{h^\uparrow_t}^{(2)}}$ satisfies the recursion
\begin{align*}
    \ket{{h^\uparrow_t}^{(2)}} &= \hatket{C^\uparrow_0 {h^\uparrow_T}^{(1)}} - \overline\gamma x^2 \overline\sigma^2 
    \sum_{s=0}^{T-2} \mathcal{L}'(\mathring y_s) \left(\hatket{{B^\uparrow_0}^\top {dh_0^\uparrow}^{(3)}} - \overline\gamma \,\overline\sigma^2 \sum_{\ell=0}^{s-1}\mathcal{L}'(\mathring y_\ell) \ket{{h^\uparrow_\ell}^{(2)}}\right) + \hatket{\Delta_C {h^\uparrow_T}^{(1)}}\\
    &\quad - \overline{\gamma^\uparrow} x^2(\overline\sigma^2+\overline{\sigma_{\Delta,D}}^2) \sum_{s=T}^{t-1} \mathcal{L}'(\mathring y_s) \left(\hatket{{B^\uparrow_0}^\top {dh_T^\uparrow}^{(3)}} - \overline\gamma \,\overline\sigma^2 \sum_{\ell=0}^{T-2}\mathcal{L}'(\mathring y_\ell) \ket{{h^\uparrow_\ell}^{(2)}} + \hatket{\Delta_B^\top {dh^\uparrow_T}^{(3)}} \right.\\
    &\hspace{6cm}\left.- \overline{\gamma^\uparrow}(\overline\sigma^2 + \overline{\sigma_{\Delta,A}}^2)\sum_{\ell=T}^{s-1}\mathcal{L}'(\mathring y_\ell) \ket{{h^\uparrow_\ell}^{(2)}}\right).
\end{align*}
Recall from \eqref{eq:h_t^(2)} that for $t\leq T-2$ we have
\[
\ket{h_t^{(2)}} = M_t \hatket{C_0 h_0^{(1)}} + N_t \hatket{B_0^\top dh_0^{(3)}},
\]
which translates to
\[
\ket{{h^\uparrow_t}^{(2)}} = M_t \hatket{C^\uparrow_0 {h^\uparrow_0}^{(1)}} + N_t \hatket{{B_0^\uparrow}^\top {dh_0^\uparrow}^{(3)}}.
\]
Therefore, for $t\geq T$, we obtain the decomposition
\begin{align*}
\ket{{h^\uparrow_t}^{(2)}} &= M_t \hatket{C^\uparrow_0 {h^\uparrow_0}^{(1)}} + M_t' \hatket{C^\uparrow_0 {h^\uparrow_T}^{(1)}} + M_t' \hatket{\Delta_C {h^\uparrow_T}^{(1)}} \\
&\quad + N_t \hatket{{B^\uparrow_0}^\top {dh^\uparrow_0}^{(3)}} + N_t' \hatket{{B_0^\uparrow}^\top {dh^\uparrow_T}^{(3)}} + N_t' \hatket{\Delta_B^\top {dh^\uparrow_T}^{(3)}}.
\end{align*}
The coefficients satisfy
\begin{equation}\label{eq:MN_after_upscaling}
    \begin{aligned}
    M_t &= \overline\gamma^2 x^2 \overline\sigma^4\sum_{s=1}^{T-2} \mathcal{L}'(\mathring y_s) \sum_{\ell=0}^{s-1}\mathcal{L}'(\mathring y_\ell) M_\ell \\
    &\quad + \overline{\gamma^\uparrow} x^2 (\overline\sigma^2 + \overline{\sigma_{\Delta,D}}^2) \sum_{s=T}^{t-1} \mathcal{L}'(\mathring y_s) \left(\overline\gamma \,\overline\sigma^2 \sum_{\ell=0}^{T-2}\mathcal{L}'(\mathring y_\ell) M_\ell + \overline{\gamma^\uparrow} (\overline\sigma^2+\overline{\sigma_{\Delta,A}}^2) \sum_{\ell=T}^{s-1}\mathcal{L}'(\mathring y_\ell) M_\ell\right),\\
    M_t' &= 1 + \overline{\gamma^\uparrow}^2 x^2 (\overline\sigma^2 + \overline{\sigma_{\Delta,D}}^2)(\overline\sigma^2 + \overline{\sigma_{\Delta,A}}^2)  \sum_{s=T}^{t-1} \mathcal{L}'(\mathring y_s) \sum_{\ell=T}^{s-1}\mathcal{L}'(\mathring y_\ell) M'_\ell,\\
    N_t &= - \overline\gamma x^2 \overline\sigma^2 \sum_{s=0}^{T-2} \mathcal{L}'(\mathring y_s) + \overline\gamma^2 x^2 \overline\sigma^4\sum_{s=1}^{T-2} \mathcal{L}'(\mathring y_s) \sum_{\ell=0}^{s-1}\mathcal{L}'(\mathring y_\ell) N_\ell \\
    &\quad + \overline{\gamma^\uparrow} x^2 (\overline\sigma^2 + \overline{\sigma_{\Delta,D}}^2) \sum_{s=T}^{t-1} \mathcal{L}'(\mathring y_s) \left(\overline\gamma \overline\sigma^2 \sum_{\ell=0}^{T-2}\mathcal{L}'(\mathring y_\ell) N_\ell + \overline{\gamma^\uparrow} (\overline\sigma^2+\overline{\sigma_{\Delta,A}}^2) \sum_{\ell=T}^{s-1}\mathcal{L}'(\mathring y_\ell) N_\ell\right),\\
    N_t' &= - \overline{\gamma^\uparrow} x^2 (\overline\sigma^2+\overline{\sigma_{\Delta,D}}^2) \sum_{s=T}^{t-1} \mathcal{L}'(\mathring y_s) \left(1-\overline{\gamma^\uparrow} (\overline\sigma^2+\overline{\sigma_{\Delta,A}}^2) \sum_{\ell=T}^{s-1}\mathcal{L}'(\mathring y_\ell) N'_\ell\right).
    \end{aligned}
\end{equation}

Using the definition of $\dotket{\bullet}$ in \eqref{eq:dotket}, and since $kB^\uparrow_0/\overline{\sigma}\in \mathcal{W}_2$ and $\sqrt{k}\,\Delta_B/\overline{\sigma_{\Delta,B}}\in \mathcal{W}_1$, we have
\[
\begin{aligned}
    \dotket{B^\uparrow_0 {h^\uparrow_t}^{(2)}} 
    &= k^{-1}\overline\sigma k\,\Ex\left[\frac{\partial \ket{{h^\uparrow_t}^{(2)}}}{\partial \hatket{k\overline\sigma^{-1}{B^\uparrow_0}^\top {dh^\uparrow_0}^{(3)}}}\right] \sum_j\ket{{dh^\uparrow_0}^{(3)}}_j\one_k \\
    &\quad + k^{-1}\overline\sigma k\,\Ex\left[\frac{\partial \ket{{h^\uparrow_t}^{(2)}}}{\partial \hatket{k\overline\sigma^{-1}{B^\uparrow_0}^\top {dh^\uparrow_T}^{(3)}}}\right] \sum_j\ket{{dh^\uparrow_T}^{(3)}}_j\one_k  \\
    &= \overline\sigma \left(k^{-1}\overline\sigma N_t \sum_j \ket{{dh_0^\uparrow}^{(3)}}_j \one_k + 
    k^{-1}\overline\sigma N'_t \sum_j \ket{{dh^\uparrow_T}^{(3)}}_j \one_k\right) \\
    &=\overline\sigma^2 N_t \ket{A^\uparrow_0} +\overline\sigma^2  N_t' \left(\ket{A^\uparrow_0} + \frac{\sum_j \ket{\Delta_A}_j}{k} \one_k\right),\\
    \dotket{\Delta_B {h^\uparrow_t}^{(2)}} &= \overline{\sigma_{\Delta,B}} k^{-1/2}k\,\Ex\left[\frac{\partial{\ket{h^\uparrow_t}^{(2)}}}{\partial{\ket{k^{1/2}\overline{\sigma_{\Delta,B}}^{-1}\Delta_B^\top {dh^\uparrow_T}^{(3)}}}}\right]\ket{{dh_T^\uparrow}^{(3)}}\\
    &= \overline{\sigma_{\Delta,B}} k^{-1/2}k N_t' k^{-1/2}\overline{\sigma_{\Delta,B}} \left(\ket{A^\uparrow_0} + \ket{\Delta_A}\right)\\
    &=  \overline{\sigma_{\Delta,B}}^2 N_t' \left(\ket{A^\uparrow_0} + \ket{\Delta_A}\right).
\end{aligned}
\]
Moreover, for $s\leq T-2$ and $t\geq T$, we have
\[
\begin{aligned}
    &\braket{{h^\uparrow_s}^{(2)}\mid {h^\uparrow_t}^{(2)}}\\
    &= M_sM_t \overline\sigma^2 k^{-2} \left(\one_k^\top \braket{{h_0^\uparrow}^{(1)}\mid {h_0^\uparrow}^{(1)}}\one_k\right)\one_k\one_k^\top 
    +M_s M_t' \overline\sigma^2 k^{-2} \left(\one_k^\top \braket{{h_0^\uparrow}^{(1)}\mid {h_T^\uparrow}^{(1)}}\one_k\right)\one_k\one_k^\top \\
    &\quad + N_sN_t\overline\sigma^2 k^{-2} \left(\one_k^\top \braket{{dh_0^\uparrow}^{(3)}\mid {dh_0^\uparrow}^{(3)}}\one_k\right)\one_k\one_k^\top
    + N_sN'_t\overline\sigma^2 k^{-2} \left(\one_k^\top \braket{{dh_0^\uparrow}^{(3)}\mid {dh_T^\uparrow}^{(3)}}\one_k\right)\one_k\one_k^\top\\
    &=\left(M_s(M_t+M_t') \overline\sigma^4 x^2 + N_s(N_t + N_t')\overline\sigma^4\right)\one_k\one_k^\top.
\end{aligned}
\]
Hence,
\[
\frac{\Tr \braket{{h^\uparrow_s}^{(2)}\mid {h^\uparrow_t}^{(2)}}}{k}= M_s(M_t+M_t') \overline\sigma^4 x^2 + N_s(N_t + N_t')\overline\sigma^4\equalscolon \clabel{1'}.
\]
Similarly, for $s,t \geq T$, we have
\[
\begin{aligned}
    &\braket{{h^\uparrow_s}^{(2)}\mid {h^\uparrow_t}^{(2)}}\\
    &= M_sM_t \overline\sigma^2 k^{-2} \left(\one_k^\top \braket{{h_0^\uparrow}^{(1)}\mid {h_0^\uparrow}^{(1)}}\one_k\right)\one_k\one_k^\top 
    + N_sN_t\overline\sigma^2 k^{-2} \left(\one_k^\top \braket{{dh_0^\uparrow}^{(3)}\mid {dh_0^\uparrow}^{(3)}}\one_k\right)\one_k\one_k^\top\\
    &\quad + M_s' M_t' \overline\sigma^2 k^{-2} \left(\one_k^\top \braket{{h_T^\uparrow}^{(1)}\mid {h_T^\uparrow}^{(1)}}\one_k\right)\one_k\one_k^\top 
    +N_s'N_t'\overline\sigma^2 k^{-2}\left(\one_k^\top \braket{{dh_T^\uparrow}^{(3)}\mid {dh_T^\uparrow}^{(3)}}\one_k\right)\one_k\one_k^\top
    \\
    &\quad + M_s'M_t' \overline{\sigma_{\Delta,C}}^2 k^{-1} \Tr \braket{{h_T^\uparrow}^{(1)}\mid {h_T^\uparrow}^{(1)}}I_k
    + N_s'N_t' \overline{\sigma_{\Delta,B}}^2 k^{-1} \Tr \braket{{dh_T^\uparrow}^{(3)}\mid {dh_T^\uparrow}^{(3)}}I_k
    \\
    &\quad +(M_s M_t' +M_s' M_t)\overline\sigma^2 k^{-2} \left(\one_k^\top \braket{{h_0^\uparrow}^{(1)}\mid {h_T^\uparrow}^{(1)}}\one_k\right)\one_k\one_k^\top \\    
    &\quad 
    + (N_sN'_t+N_s'N_t)\overline\sigma^2 k^{-2} \left(\one_k^\top \braket{{dh_0^\uparrow}^{(3)}\mid {dh_T^\uparrow}^{(3)}}\one_k\right)\one_k\one_k^\top\\
    &=\left((M_sM_t+M_sM_t'+M_s'M_t) \overline\sigma^4 x^2 + (N_sN_t + N_sN_t'+N_s'N_t)\overline\sigma^4\right.\\
    &\left.\quad +M_s' M_t' \overline\sigma^2(\overline\sigma^2 + k^{-1}\overline{\sigma_{\Delta, D}}^2) x^2 
    + N_s'N_t' \overline\sigma^2(\overline\sigma^2 + k^{-1}\overline{\sigma_{\Delta, A}}^2) \right)\one_k\one_k^\top \\
    &\quad + \left(M_s'M_t'\overline{\sigma_{\Delta, C}}^2(\overline\sigma^2 + \overline{\sigma_{\Delta, D}}^2)x^2 +
    N_s'N_t'\overline{\sigma_{\Delta, B}}^2 (\overline\sigma^2 + \overline{\sigma_{\Delta, A}}^2)\right)I_k.
\end{aligned}
\]
Consequently,
\[
\begin{aligned}
    \frac{\Tr\braket{h_s^{(2)} \mid h_t^{(2)}}}{k}
    &= x^2 \overline\sigma^4(M_s + M_s')(M_t + M_t') + \overline\sigma^4(N_s + N_s')(N_t + N_t') \\
    &\quad + M_s' M_t' \left(k^{-1}\overline\sigma^2 \overline{\sigma_{\Delta,D}}^2 + \overline\sigma^2 \overline{\sigma_{\Delta,C}}^2 + \overline{\sigma_{\Delta,C}}^2\overline{\sigma_{\Delta,D}}^2\right) x^2 \\
    &\quad +  N_s' N_t' \left(k^{-1}\overline\sigma^2 \overline{\sigma_{\Delta,A}}^2 + \overline\sigma^2 \overline{\sigma_{\Delta,B}}^2 +\overline{\sigma_{\Delta,A}}^2 \overline{\sigma_{\Delta,B}}^2\right)  \\
    &\equalscolon \clabel{2'}.
\end{aligned}
\]
Finally, by independence of various hat-kets, we obtain the post-upscaling readout
\begin{equation}\label{eq:final_recursion_af_upscaling}
\begin{aligned}
    \mathring{y_t} &= \frac{\Tr\braket{{A_0^\uparrow}\mid {h^\uparrow_t}^{(3)}} + \Tr\braket{\Delta_A \mid {h^\uparrow_t}^{(3)}}}{k}\\
    &=k^{-1}
    \Tr\left( \left(- \overline\gamma \sum_{s=0}^{T-2} \mathcal{L}'(\mathring y_s)\clabel{1'}
    - \overline{\gamma^\uparrow} \sum_{s=T}^{t-1} \mathcal{L}'(\mathring y_s) \clabel{2'} +
\overline\sigma^2 (N_t+N_t') 
    + {\overline{\sigma_{\Delta,B}}}^2 N_t' \right) \braket{{A^\uparrow_0} \mid A^\uparrow_0}\right)\\
    &\quad + k^{-1}\Tr\left(\left(- \overline{\gamma^\uparrow} \sum_{s=T}^{t-1} \mathcal{L}'(\mathring y_s) \clabel{2'} + {\overline{\sigma_{\Delta,B}}}^2 N_t'\right) \braket{{\Delta_A} \mid {\Delta_A}}
    + {\overline\sigma}^2N_t' \braket{k^{-1}\sum_j (\Delta_A)_j\one_k\mid \Delta_A} 
    \right)\\
    &=  \overline\sigma^2 \left(- \overline\gamma \sum_{s=0}^{T-2} \mathcal{L}'(\mathring y_s)\clabel{1'}
    - \overline{\gamma^\uparrow} \sum_{s=T}^{t-1} \mathcal{L}'(\mathring y_s) \clabel{2'} 
    +\overline\sigma^2(N_t + N_t') + \overline{\sigma_{\Delta,B}}^2 N_t'
    \right)\\
    &\quad + \overline{\sigma_{\Delta,A}}^2 \left(
    - \overline{\gamma^\uparrow} \sum_{s=T}^{t-1} \mathcal{L}'(\mathring y_s) \clabel{2'}
    + N_t' (k^{-1}\overline\sigma^2 + \overline{\sigma_{\Delta,B}}^2)
    \right)\\
    &= \overline\sigma^4\left(N_t + N_t'\right) 
    + (\overline\sigma^2  \overline{\sigma_{\Delta,B}}^2 +  k^{-1}\overline\sigma^2  \overline{\sigma_{\Delta,A}}^2  + \overline{\sigma_{\Delta,A}}^2\overline{\sigma_{\Delta,B}}^2)N'_t \\
    &\quad - \overline\sigma^2 \overline\gamma \sum_{s=0}^{T-2} \mathcal{L}'(\mathring y_s) \clabel{1'} - \left(\overline\sigma^2 + \overline{\sigma_{\Delta,A}}^2\right) \overline{\gamma^\uparrow} \sum_{s=T}^{t-1} \mathcal{L}'(\mathring y_s) \clabel{2'}.
\end{aligned}
\end{equation}

\paragraph{Conclusion.}
Adding one more hidden layer immediately complicates the computation of the infinite-width dynamics, but the system remains analyzable. 
Since in the previous section we found for a simpler model that, even when adding noise while upscaling, it is possible to exactly maintain the infinite-width limit of training dynamics, it is natural to ask whether that remains possible in this more complicated architecture.

Comparing the coefficient recursions before and after upscaling, \eqref{eq:MN_bf_upscaling} and \eqref{eq:MN_after_upscaling}, we observe that if
\[
\overline{\gamma^\uparrow}^2 \big(\overline\sigma^2 + \overline{\sigma_{\Delta,D}}^2\big)\big(\overline\sigma^2 + \overline{\sigma_{\Delta,A}}^2\big)
= \overline\gamma^2\,\overline\sigma^4,
\]
an analogous relation to that from the previous section, then the recursions for $(M_t + M_t')$ and $(N_t + N_t')$ coincide with their pre-upscaling counterparts (in particular, this holds when $\overline{\sigma_{\Delta,A}}=\overline{\sigma_{\Delta,D}}=0$ and $\overline{\gamma^\uparrow} = \overline\gamma$). 
Further, comparing the final recursions \eqref{eq:final-recursion-bf-upscaling} and \eqref{eq:final_recursion_af_upscaling}, full agreement of infinite-width limits requires also  $\overline{\sigma_{\Delta,B}}=\overline{\sigma_{\Delta,C}}=0$, in which case both of the terms labeled \textcircled{1'} and \textcircled{2'} above match \textcircled{1}.
In this (degenerate) case where no noise is added, the training dynamics after upscaling are identical to those before upscaling, which is expected given the equivalence we showed in Section~\ref{sec:equivalence}.
Otherwise, whenever we add noise during upscaling, the infinite-width training dynamics are expected to be altered after upscaling, and no choice of hyperparameters can exactly preserve the infinite-width limit.

One may view this as an illustration that tuning hyperparameters for upscaled training is a non-trivial task distinct from tuning for ordinary training.
Indeed, on the one hand and as we have discussed, we must use non-zero noise in upscaling in order for training after upscaling to yield models that are actually utilizing their increased width and learning a larger class of functions than those parametrized by narrow models.
On the other hand, we see from the above calculations that, even in quite simple architectures, once we use non-zero noise, we cannot hope for upscaled training to share the infinite-width limit of its training dynamics with ordinary training.
Thus, in particular, it seems that in order to tune hyperparameters for upscaled training we must, as we do in our proposed method, actually simulate upscaling itself on smaller models, rather than merely taking hyperparameters tuned on non-upscaled training and systematically modifying them in some way (unless one were to explicitly describe the limiting behavior of upscaled training in terms of that of non-upscaled training, which, even with the \nesort program tools, appears to be a prohibitively complicated mathematical task).

\section{Experiments}\label{appen:exp}

\subsection{MLPs}\label{appen:MLP-exp}
\paragraph{Dataset.}
We use the Forest Cover Type dataset~\cite{covertype_31} from the UCI Machine Learning Repository, a tabular dataset for multiclass classification into $7$ forest cover types based on attributes such as elevation, aspect, slope, hillshade, soil type, and additional environmental variables.
The dataset comprises $581{,}012$ samples with $54$ features, including both continuous and binary variables.
We apply standard preprocessing by performing a stratified $80$--$20$ train--test split and normalizing the continuous features, while leaving the binary features unchanged.
We select this dataset because MLPs achieve strong performance on tabular data of this type, and its relatively large sample size yields a sufficiently challenging task in which increasing MLP width and thus model capacity improves predictive accuracy.
Therefore, it is well suited for our exploration of model upscaling.

\paragraph{Model.}
We employ a standard MLP with bias terms and ReLU activations, comprising $4$ layers whose hidden width is shared and set to $n$. 
Specifically, this is MLP defined in \eqref{eq:forward_pass} with $L=4, n_1=n_2=n_3=n,$ and $\phi=\mathrm{ReLU}$.
The $\mu P$ package is used to configure weight initialization and learning-rate scaling to ensure width-consistent training.

\paragraph{Optimizer and training.}
We experiment with both SGD and AdamW, and we use the $\mu$P implementations of these optimizers to obtain the appropriate scaling with respect to network width.
For SGD, we apply weight decay with a base coefficient of $10^{-4}$ (with $\mu P$ scaling) to stabilize training.
For AdamW, we set $\beta=(0.9, 0.999)$, $\epsilon=10^{-8}$, and weight decay $10^{-4}$, each applied with the corresponding $\mu P$ scaling.
We tune the learning-rate base constant and the magnitude of added noise.
We use a batch size of $2000$ and train for $500$ epochs.

\paragraph{Experiment procedure.}
\begin{enumerate}[label=(\arabic*)]
    \item Under $\mu$P, we begin with Sweep~1 over learning rates $\overline{\gamma}$ at width $n_0=100$, selecting the best learning rate based on the training loss after $500$ epochs.
    \item We then train a base model of width $n=500$ from scratch for $500$ epochs using the best learning rate  $\overline{\gamma}$ identified in Sweep~1. 
    We also train a wide model of width $kn=2000$ from scratch using the same hyperparameters, to serve as a baseline for comparison with the upscaled model.
    \item Next, setting width multiplier $k=4$, we select the width-$n_0$ checkpoint achieving the lowest training loss in Sweep~1 and perform Sweep~2 for upscaling: we construct an upscaled model of width $k n_0=400$, vary the noise std base constant $\overline\sigma$ and learning rate $\overline{\gamma^\uparrow}$, and train for $500$ epochs.
    \item Finally, we apply the best noise level and learning rate  $\overline{\gamma^\uparrow}$ found in Sweep~2 to upscale the base model to width $k n=2000$ and train it for $500$ epochs.
\end{enumerate}

\paragraph{Results.}
Figure~\ref{fig:MLP-SGD-upscaling} shows training and validation curves for an MLP trained with SGD and weight decay, which are omitted from the main paper.
In this setting, Sweep 1 selects a learning-rate base constant of $\overline\gamma = 0.4$.
Sweep 2 selects a learning-rate base constant of $\overline\gamma = 0.1$ and a noise std base constant of $\overline\sigma = 5$.
\begin{figure}[h]
    \centering
    \begin{subfigure}[t]{0.31\textwidth}
        \centering
        \includegraphics[width=\linewidth]{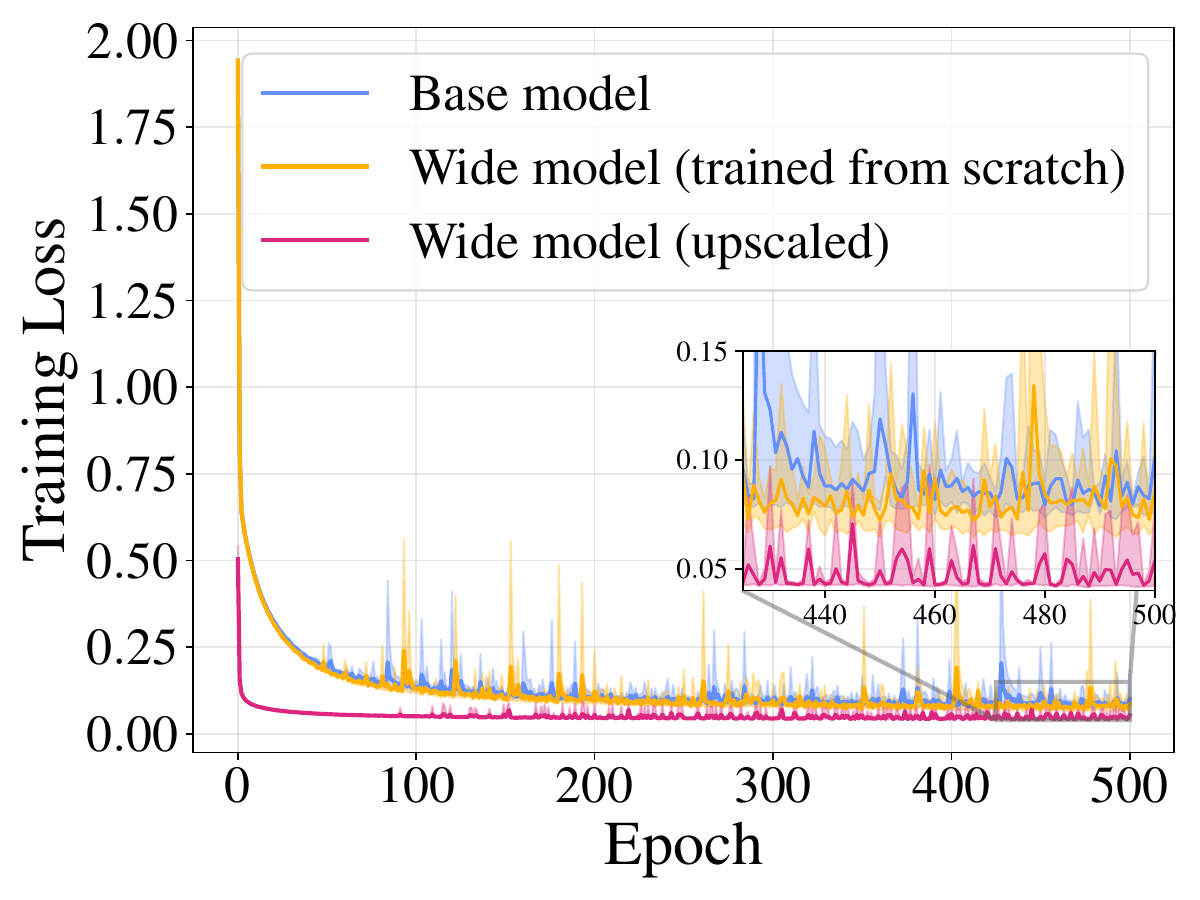}
        \caption{Training loss (zoomed)}
        \label{fig:mlp_sgd_train_loss}
    \end{subfigure}
    \hfill
    \begin{subfigure}[t]{0.31\textwidth}
        \centering
        \includegraphics[width=\linewidth]{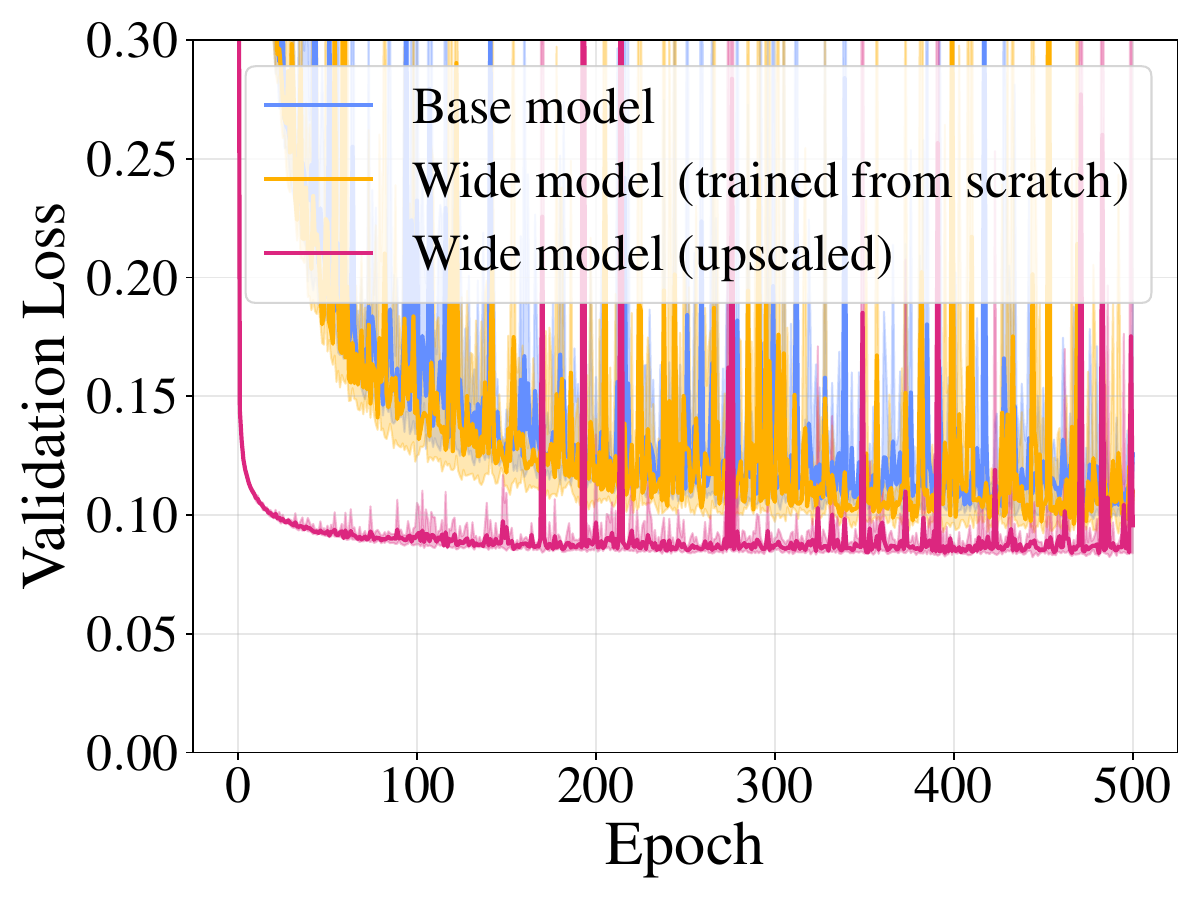}
        \caption{Validation loss}
        \label{fig:mlp_sgd_val_loss}
    \end{subfigure}
    \hfill
    \begin{subfigure}[t]{0.31\textwidth}
        \centering
        \includegraphics[width=\linewidth]{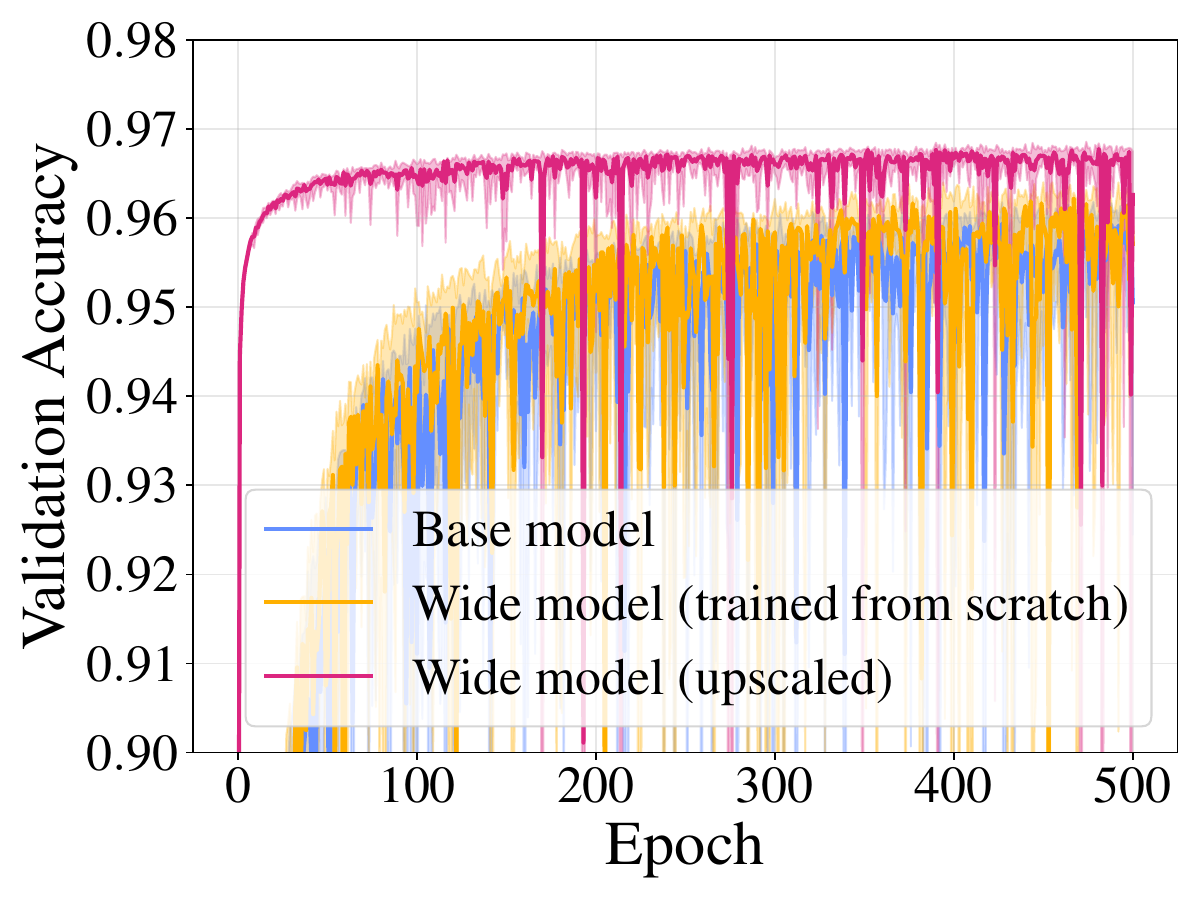}
        \caption{Validation accuracy}
        \label{fig:mlp_sgd_val_acc}
    \end{subfigure}
    \caption{Training and validation curves comparing upscaling to training from scratch for an MLP trained with SGD and weight decay.
    The wide models have width $k n = 2000$; the base model (width $n = 500$) is included for reference.
    Curves show the mean across five random runs, with ranges spanning the minimum to the maximum across runs.
    The training loss panel is zoomed in on the y-axis to highlight differences between the two curves.}
    \label{fig:MLP-SGD-upscaling}
\end{figure}

Figure~\ref{fig:MLP-SGD-sweep} visualizes the hyperparameter sensitivity of the upscaled model in Sweep~2.
The heatmap shows the minimum training loss achieved over a grid of learning rates and noise levels $\overline{\sigma_\Delta}$, with the optimal configuration marked by a red star.
The right panel shows training loss curves at the best learning rate, with each curve corresponding to a different noise level indicated by the colorbar.
Performance first improves as noise increases---starting from a worse initialization but converging to a better final value, as additional model capacity is exploited---and then degrades gracefully at higher noise levels.
This indicates the existence of an optimal noise magnitude whose selection meaningfully impacts performance, whereas prior work does not address this choice.
\begin{figure}[h]
    \centering
    \hfill
    \begin{subfigure}[t]{0.38\textwidth}
        \centering
        \includegraphics[width=\linewidth]{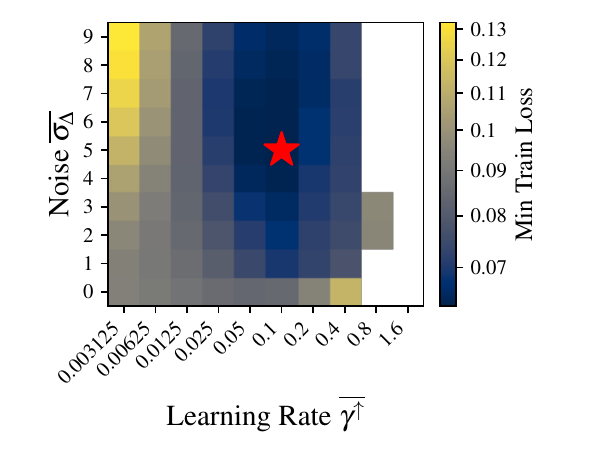}
        \caption{Sweep heatmap over learning rate $\overline{\gamma^\uparrow}$ and noise $\overline{\sigma_\Delta}$}
        \label{fig:mlp_sgd_sweep_heatmap}
    \end{subfigure}
    \hfill
    \begin{subfigure}[t]{0.38\textwidth}
        \centering
        \includegraphics[width=\linewidth]{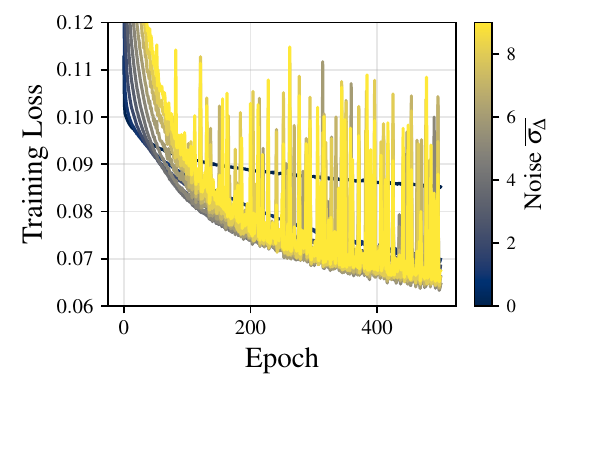}
        \caption{Training loss at best learning rate across noise levels}
        \label{fig:mlp_sgd_sweep_noise}
    \end{subfigure}
    \hfill
    \caption{Hyperparameter sweep for MLP with SGD.
    (a) Minimum training loss over a grid of learning rates $\overline{\gamma^\uparrow}$ and noise magnitudes $\overline{\sigma_\Delta}$; the red star marks the optimum and empty squares indicate unstable runs that diverged.
    (b) Training loss curves at the best learning rate for varying noise levels (colorbar).}
    \label{fig:MLP-SGD-sweep}
\end{figure}

Analogous results for AdamW are shown in Figure~\ref{fig:MLP-AdamW-upscaling}, with a subset reported in the main paper.
Here we show the entire training curve, with a zoomed-in view of the training loss to better highlight the differences between the upscaled and from-scratch curves.
In this setting, Sweep 1 selects a learning-rate base constant of $\overline\gamma = 0.1$.
Sweep 2 selects a learning-rate base constant of $\overline\gamma = 0.013$ and a noise std base constant of $\overline\sigma = 4$.
On a side note, AdamW is more stable than SGD and generally attains higher validation accuracy on this task. 
Meanwhile the validation loss of the upscaled model exhibits overfitting, suggesting that the model becomes increasingly overconfident over the course of training and may be miscalibrated.
Importantly, our theory of model upscaling focuses on training dynamics, specifically the behavior of the training curves.
Understanding of generalization under upscaling requires a separate analytical treatment.
\begin{figure}[h]
    \centering
    \begin{subfigure}[t]{0.31\textwidth}
        \centering
        \includegraphics[width=\linewidth]{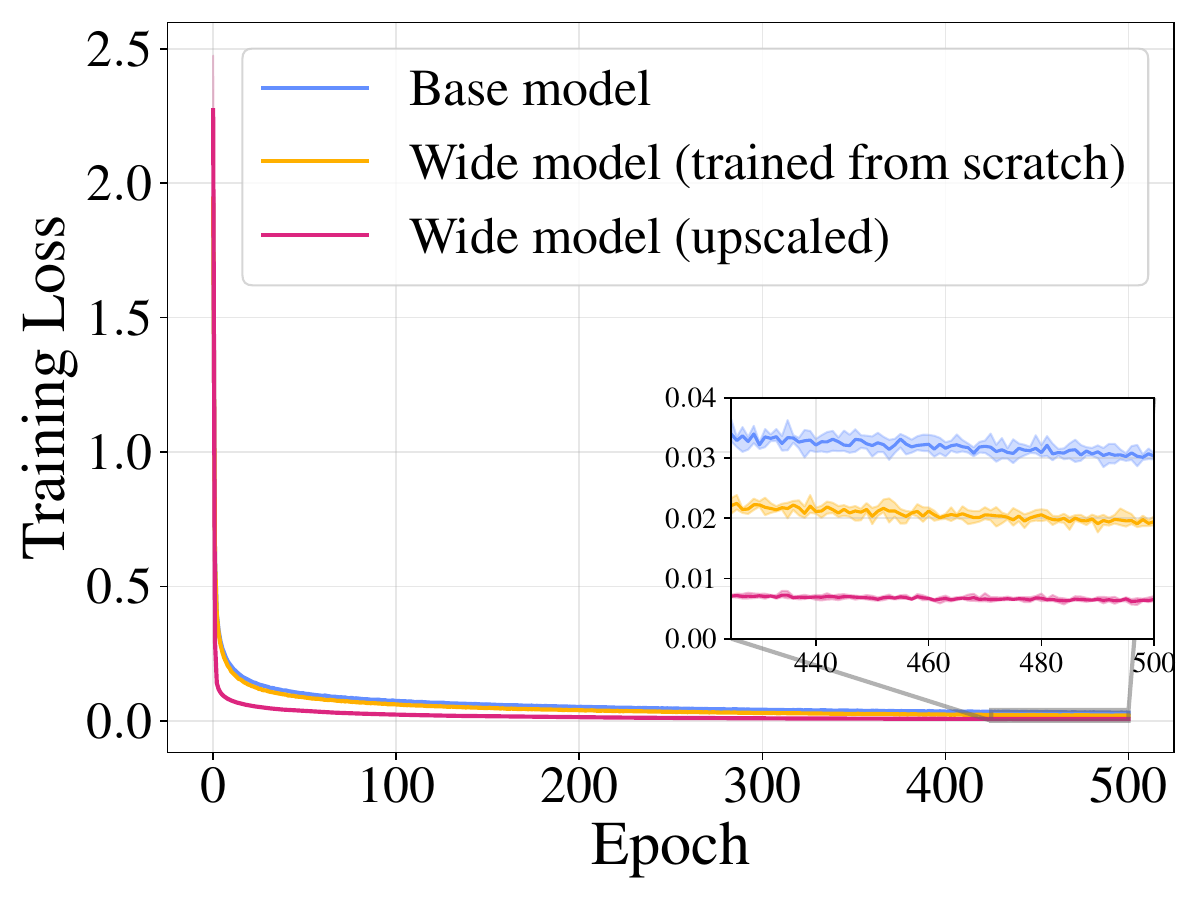}
        \caption{Training loss (zoomed)}
        \label{fig:mlp_adamw_train_loss}
    \end{subfigure}
    \hfill
    \begin{subfigure}[t]{0.31\textwidth}
        \centering
        \includegraphics[width=\linewidth]{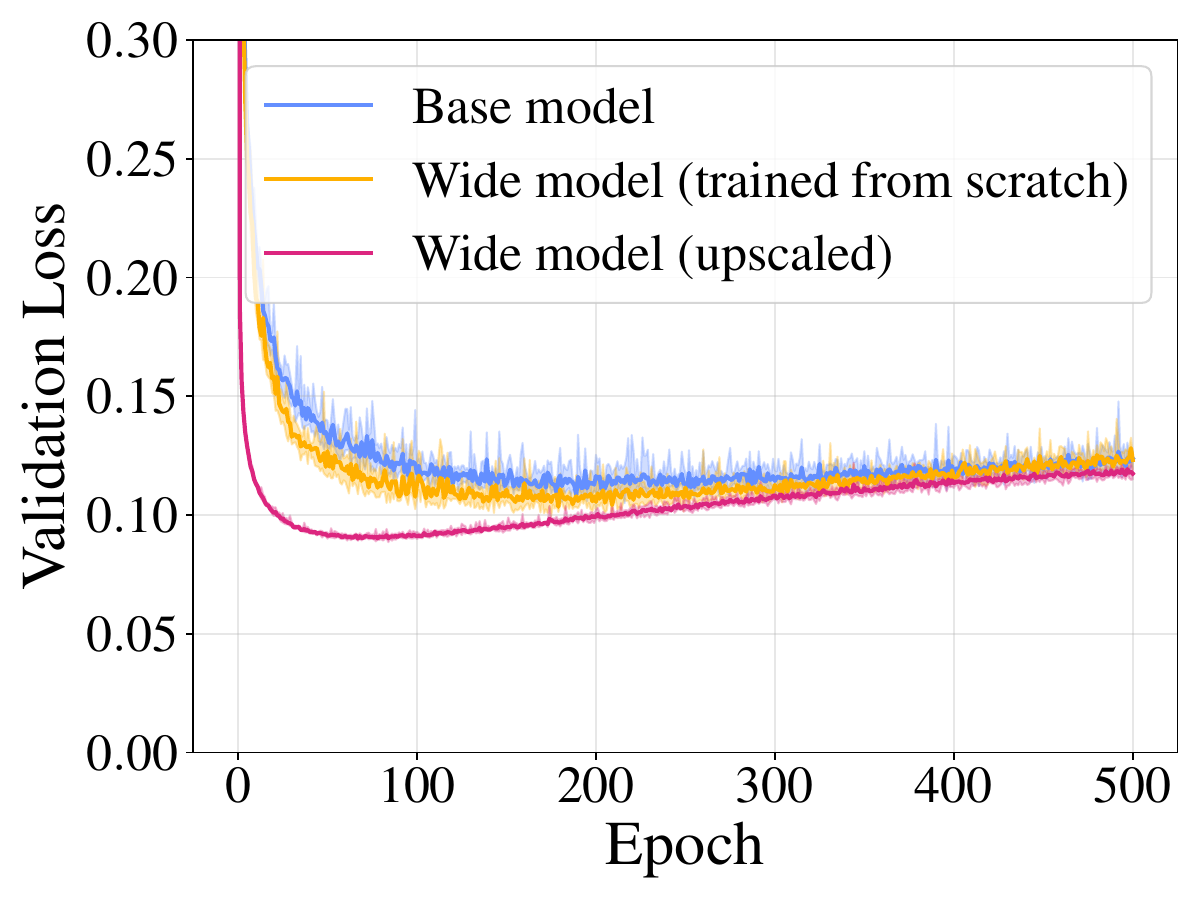}
        \caption{Validation loss}
        \label{fig:mlp_adamw_val_loss}
    \end{subfigure}
    \hfill
    \begin{subfigure}[t]{0.31\textwidth}
        \centering
        \includegraphics[width=\linewidth]{fig/MLP_AdamW_epoch_val_acc.pdf}
        \caption{Validation accuracy}
        \label{fig:mlp_adamw_val_acc}
    \end{subfigure}
    \caption{Training and validation curves comparing upscaling to training from scratch for an MLP trained with AdamW.
    The wide models have width $k n = 2000$; the base model (width $n = 500$) is included for reference.
    Curves show the mean across five random runs, with ranges spanning the minimum to the maximum across runs.
    The training loss panel is zoomed in on the y-axis to highlight differences between the two curves.}
    \label{fig:MLP-AdamW-upscaling}
\end{figure}

Figure~\ref{fig:MLP-AdamW-sweep} shows the analogous sweep for AdamW.
The same qualitative pattern is observed: an optimal noise level exists, with performance degrading at both lower and higher values.
\begin{figure}[h]
    \centering
    \hfill
    \begin{subfigure}[t]{0.38\textwidth}
        \centering
        \includegraphics[width=\linewidth]{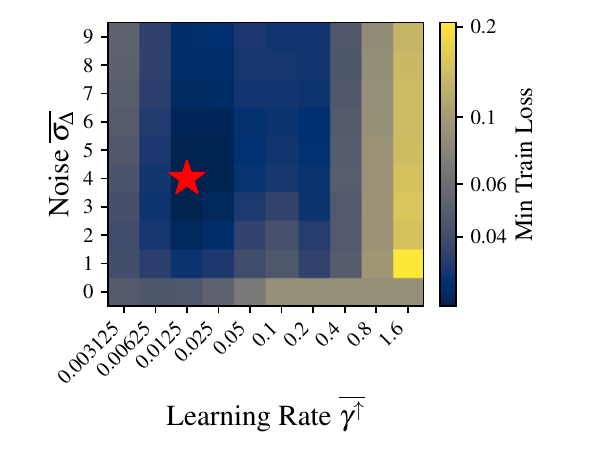}
        \caption{Sweep heatmap over learning rate $\overline{\gamma^\uparrow}$ and noise $\overline{\sigma_\Delta}$}
        \label{fig:mlp_adamw_sweep_heatmap}
    \end{subfigure}
    \hfill
    \begin{subfigure}[t]{0.38\textwidth}
        \centering
        \includegraphics[width=\linewidth]{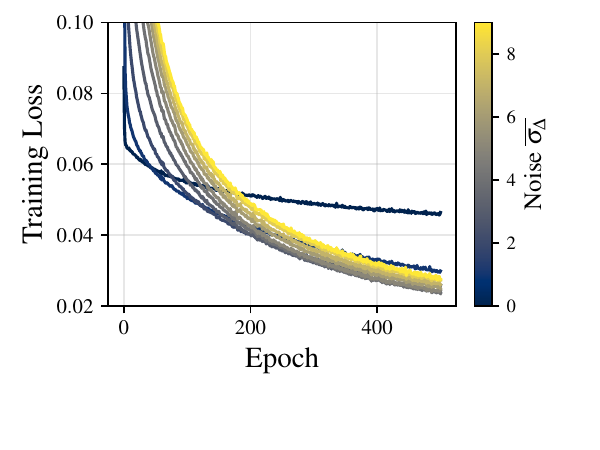}
        \caption{Training loss at best learning rate across noise levels}
        \label{fig:mlp_adamw_sweep_noise}
    \end{subfigure}
    \hfill
    \caption{Hyperparameter sweep for MLP with AdamW.
    (a) Minimum training loss over a grid of learning rates $\overline{\gamma^\uparrow}$ and noise magnitudes $\overline{\sigma_\Delta}$; the red star marks the optimum.
    (b) Training loss curves at the best learning rate for varying noise levels (colorbar).}
    \label{fig:MLP-AdamW-sweep}
\end{figure}

\subsection{ResNet}\label{appen:Resnet-exp}
\paragraph{Dataset.}
We evaluate on CIFAR-100 \cite{krizhevsky2009learning}, an image dataset of $32\times 32$ color images with $100$ classes.
For training, we apply standard data augmentation comprising a $4$-pixel padding followed by a random crop to $32\times 32$ and a random horizontal flip, after which we normalize using per-channel means and standard deviations computed on the training set.

\paragraph{Model.}
We adopt the standard $18$-layer ResNet from \citet{he2015deep} and vary its width, i.e., the number of feature channels per stage.
In the standard ResNet-18, the convolutional stem outputs $64$ channels, and the four residual stages use $64$, $128$, $256$, and $512$ channels, respectively.
We consider width-multiplier variants: for example, the $2\times$ model has all widths multiplying by $2$, i.e. it uses $128$, $256$, $512$, and $1024$ channels.
We use the $\mu$P library to configure weight initialization and learning-rate scaling to ensure width-consistent training dynamics.
In particular, we configure the implementation so that the standard ($1\times$) model exhibits identical behavior under $\mu$P and under the standard parametrization.
See, e.g., Remark~\ref{rmk:mup-implementation}.

\paragraph{Optimizer and training.}
Following \citet{he2015deep}, we train with SGD using momentum $0.9$ and weight decay $10^{-4}$.
We do not use Adam because it seems to have worse generalization behavior as compared to SGD.
We tune the learning rate base constant and the magnitude of injected noise, when applicable.
We use a batch size of $128$ and train for $100$ epochs.

\paragraph{Experiment procedure.}
Because ResNet-18 has heterogeneous hidden dimensions across stages ($64$, $128$, $256$, $512$ channels), adding the same amount of noise to every layer leads to suboptimal performance.
We therefore use the additive rescaled noise scheme described in Appendix~\ref{appen:implementation-upscaling}, which normalizes the injected noise relative to the spectral norm of the widened weights at each layer.
\begin{enumerate}[label=(\arabic*)]
    \item Under $\mu$P, we begin with Sweep~1 over learning rates $\overline{\gamma}$ using the $0.5\times$ models, selecting the best learning rate based on the training loss after $100$ epochs.
    \item We then train a $2\times$ base model from scratch for $100$ epochs using the best learning rate $\overline{\gamma}$ identified in Sweep~1.
    We also train a wide $4\times$ model from scratch using the same hyperparameters, to serve as a baseline for comparison with the upscaled model.
    \item Next, setting width multiplier $k=2$, we select the $0.5\times$ model checkpoint achieving the lowest training loss in Sweep~1 and perform Sweep~2 for upscaling: we construct an upscaled $1\times$ model, vary the relative noise level $t$ and learning rate $\overline{\gamma^\uparrow}$, and train for $100$ epochs.
    \item Finally, using the best $t$ found in Sweep~2, we compute the effective per-weight noise base constant $\overline{\sigma_{\Delta,W}}$ for each parameter (as described in Appendix~\ref{appen:implementation-upscaling}) and transfer these values to upscale the $2\times$ base model to $4\times$, training for $100$ epochs with learning rate $\overline{\gamma^\uparrow}$ from Sweep~2.
\end{enumerate}

\paragraph{Experiment results.}
Figure~\ref{fig:Resnet-SGD-upscaling} shows training and validation curves for ResNets trained with SGD (using weight decay and momentum), with a subset of results reported in the main paper.
Here we show the entire training curve, with a zoomed-in view of the training loss to better highlight the differences between the upscaled and from-scratch curves.
In this setting, Sweep 1 selects a learning-rate base constant of $\overline\gamma = 0.01$.
Sweep 2 selects a learning-rate base constant of $\overline{\gamma^\uparrow} = 0.003$ and a relative noise level of $t = 0.4$.
For training loss, the upscaled model converges faster and reaches lower terminal loss; however, in both validation loss and validation accuracy, the wide model trained from scratch outperforms the upscaled model.

Our theory addresses training dynamics and does not make predictions about generalization, so the weaker test performance does not contradict our theoretical framework.
The generalization gap observed here is a well-known property of overparameterized CNNs trained on limited data: CIFAR-100 has only $500$ images per class, placing models deep in the overparameterized regime where the difference in test performance is dominated by the generalization gap rather than by any difference in training dynamics.
Importantly, this phenomenon is likely not specific to our upscaling method but inherent to model upscaling in such settings: even with hyperparameters tuned directly at the target scale (i.e., following the Net2Net baseline), it is not certain that upscaling would yield better generalization in this regime, since the upscaled and from-scratch models differ substantially in both their training trajectories and the implicit regularization they induce.
Practitioners should therefore exercise caution when applying upscaling in settings where validation and training curves diverge substantially.
Understanding generalization under upscaling remains an interesting open question orthogonal to the hyperparameter transfer guarantees our paper provides.
\begin{figure}[h]
    \centering
    \begin{subfigure}[t]{0.31\textwidth}
        \centering
        \includegraphics[width=\linewidth]{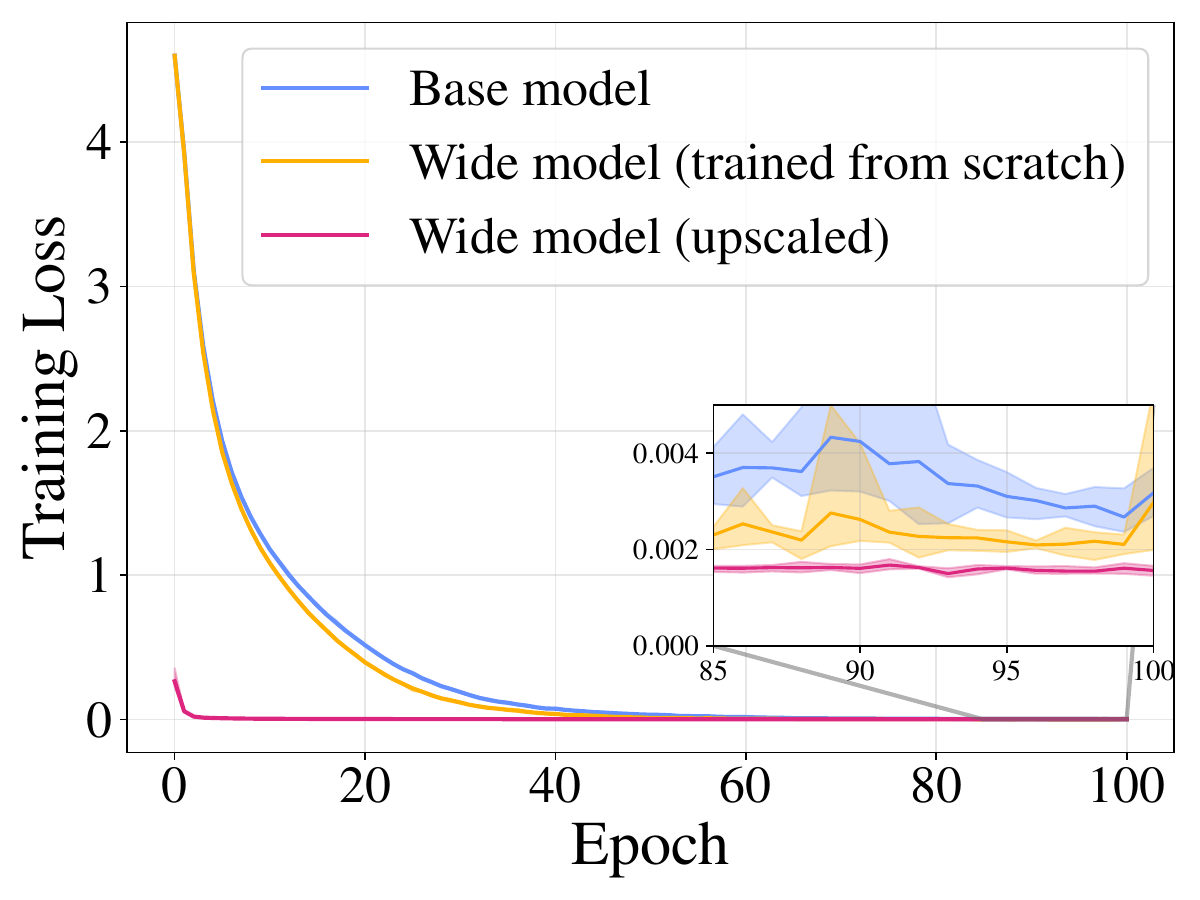}
        \caption{Training loss (zoomed)}
        \label{fig:resnet_sgd_train_loss}
    \end{subfigure}
    \hfill
    \begin{subfigure}[t]{0.31\textwidth}
        \centering
        \includegraphics[width=\linewidth]{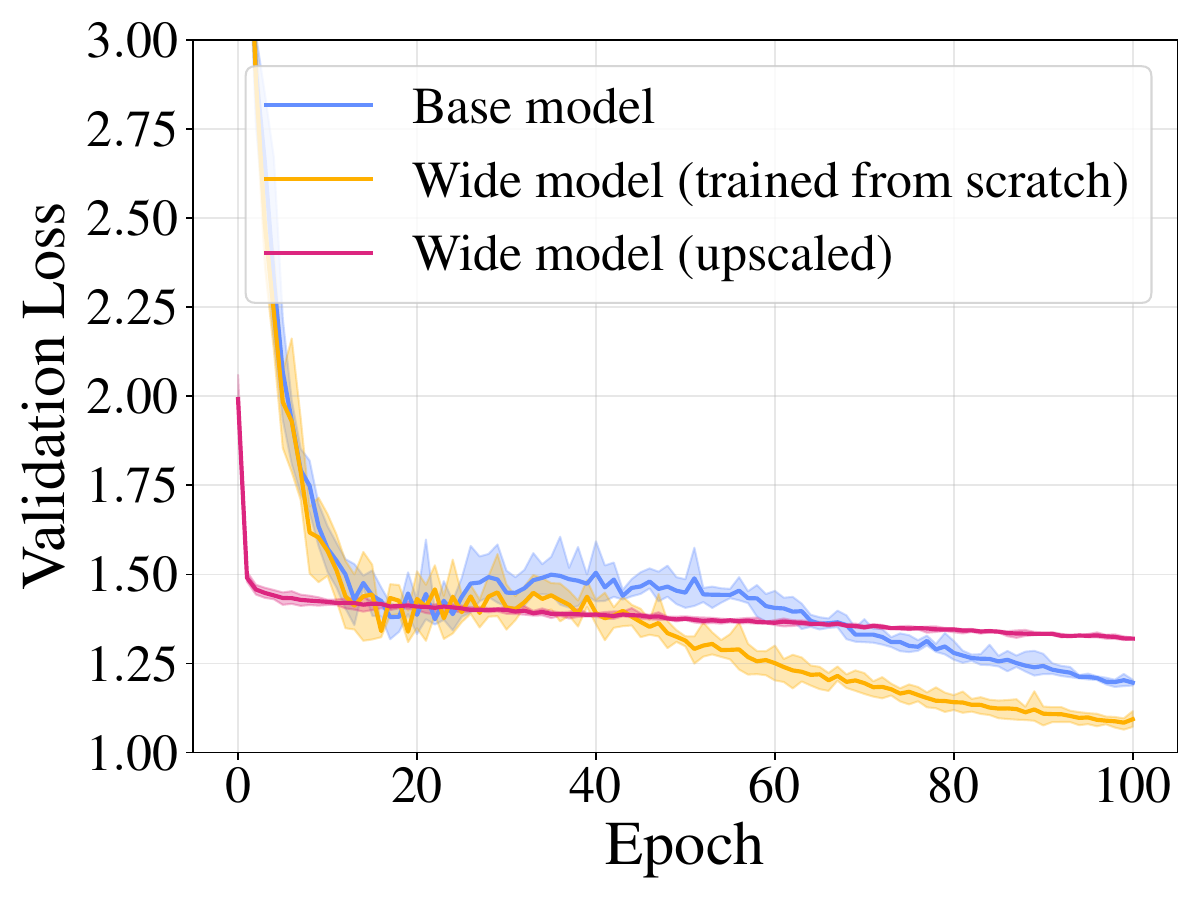}
        \caption{Validation loss}
        \label{fig:resnet_sgd_val_loss}
    \end{subfigure}
    \hfill
    \begin{subfigure}[t]{0.31\textwidth}
        \centering
        \includegraphics[width=\linewidth]{fig/Resnet_SGD_epoch_val_acc.pdf}
        \caption{Validation accuracy}
        \label{fig:resnet_sgd_val_acc}
    \end{subfigure}
    \caption{Training and validation curves comparing upscaling to training from scratch for ResNet trained with SGD.
    The wide models are $4\times$ ResNets; the base model ($2\times$) is included for reference.
    Curves show the mean across five random runs, with ranges spanning the minimum to the maximum across runs.
    The training loss panel is zoomed in on the y-axis to highlight differences between the two curves.}
    \label{fig:Resnet-SGD-upscaling}
\end{figure}

Figure~\ref{fig:Resnet-SGD-sweep} shows the analogous sweep for ResNet with SGD, using relative noise level $t$.
The same pattern holds: an optimal noise level exists, and the choice meaningfully impacts performance.
\begin{figure}[h]
    \centering
    \hfill
    \begin{subfigure}[t]{0.38\textwidth}
        \centering
        \includegraphics[width=\linewidth]{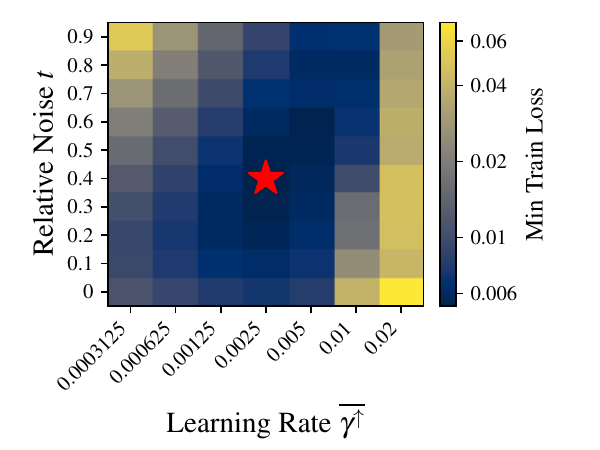}
        \caption{Sweep heatmap over learning rate $\overline{\gamma^\uparrow}$ and noise level $t$}
        \label{fig:resnet_sgd_sweep_heatmap}
    \end{subfigure}
    \hfill
    \begin{subfigure}[t]{0.38\textwidth}
        \centering
        \includegraphics[width=\linewidth]{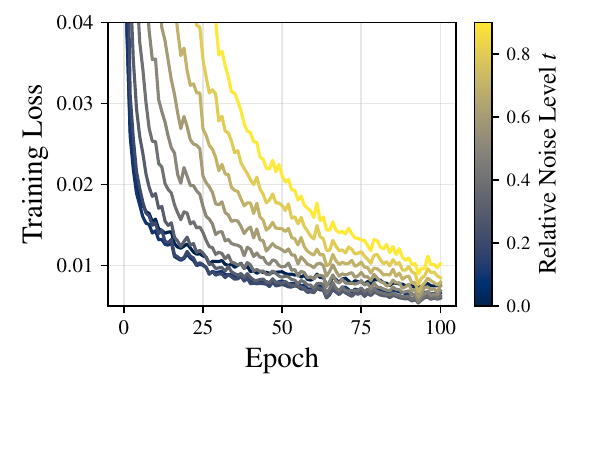}
        \caption{Training loss at best learning rate across noise levels}
        \label{fig:resnet_sgd_sweep_noise}
    \end{subfigure}
    \hfill
    \caption{Hyperparameter sweep for ResNet with SGD.
    (a) Minimum training loss over a grid of learning rates $\overline{\gamma^\uparrow}$ and relative noise levels $t$; the red star marks the optimum.
    (b) Training loss curves at the best learning rate for varying noise levels (colorbar).}
    \label{fig:Resnet-SGD-sweep}
\end{figure}

\subsection{GPT-2}\label{appen:GPT-exp}
\paragraph{Dataset.} We evaluate on the \texttt{CC-MAIN-2013-20} subset\footnote{\url{https://huggingface.co/datasets/HuggingFaceFW/fineweb-edu/viewer/CC-MAIN-2013-20}} of the FineWeb-Edu dataset~\cite{penedo2024fineweb}.
We leverage the official tokenizer of GPT-2, which has a vocabulary size of 50,257.
We construct a training split of $11.8$B tokens and a validation split of $5.5$M tokens.

\paragraph{Model.}
We adopt the standard GPT-2 architecture~\cite{radford2019language} and vary its hidden dimension.
Specifically, we use the GPT-2 small configuration with $12$ layers and $12$ attention heads.
We define the $20\times$ model as GPT-2 small with $320$ dimensions per head, and create the $10\times$ and $1\times$ models by scaling down the head dimension to $160$ and $16$, respectively.
All models share the same number of layers and attention heads.

\paragraph{Optimizer and training.}
We utilize the AdamW optimizer with $(\beta_1,\beta_2)=(0.90,0.95)$, using a weight decay of $0.1$ and gradient clipping at $1.0$.
We set an effective batch size of $0.5$M tokens per step and train the model for $10,000$ steps, totalling $5$B tokens.
A constant learning rate schedule is employed.
For efficiency, we adapt mixed-precision training (bf16 with TF32-enabled \texttt{matmul}).
All experiments are conducted on 2  NVIDIA H200 GPUs (141 GiB each) using PyTorch Distributed Data Parallel (DDP).

\paragraph{Experiment procedure.}
\begin{enumerate}[label=(\arabic*)]
    \item Under $\mu$P, we begin with Sweep~1 over learning rates $\overline{\gamma}$ using the $1\times$ models, selecting the best learning rate based on end-of-training validation negative log likelihood (NLL).
    \item We then train a $10\times$ base model from scratch for $10,000$ steps using the best learning rate  $\overline{\gamma}$ identified in Sweep~1.
    We also train a wide $20\times$ model from scratch using the same hyperparameters, to serve as a baseline for comparison with the upscaled model.
    \item Next, setting width multiplier $k=2$, we select the $1\times$ model checkpoint in Sweep~1 and perform Sweep~2 for upscaling: we construct an upscaled $2\times$ model, vary the noise std base constant $\overline\sigma$ and learning rate $\overline{\gamma^\uparrow}$, and train for $10,000$ steps.
    \item Finally, we apply the best noise level and learning rate  $\overline{\gamma^\uparrow}$ found in Sweep~2 to upscale the $10\times$ base model to $20\times$ and train it for $10,000$ steps.
\end{enumerate}

\paragraph{Results.}
Figure~\ref{fig:upscaling-exp} (last column) in the main paper shows training and validation loss for the GPT-2 experiment described above.
Figure~\ref{fig:gpt2_train_loss_zoom} shows the entire training curve, with a zoomed-in view to better highlight the differences between the upscaled and from-scratch curves.
In this setting, Sweep~1 selects a learning-rate base constant of $\overline\gamma = 0.0039$.
Sweep~2 selects a learning-rate base constant of $\overline{\gamma^\uparrow} = 0.0020$ and a noise std base constant of $\overline\sigma = 0.005$.
The upscaled model converges faster and achieves lower terminal training and validation loss than training from scratch.
\begin{figure}[h]
    \centering
    \includegraphics[width=0.35\linewidth]{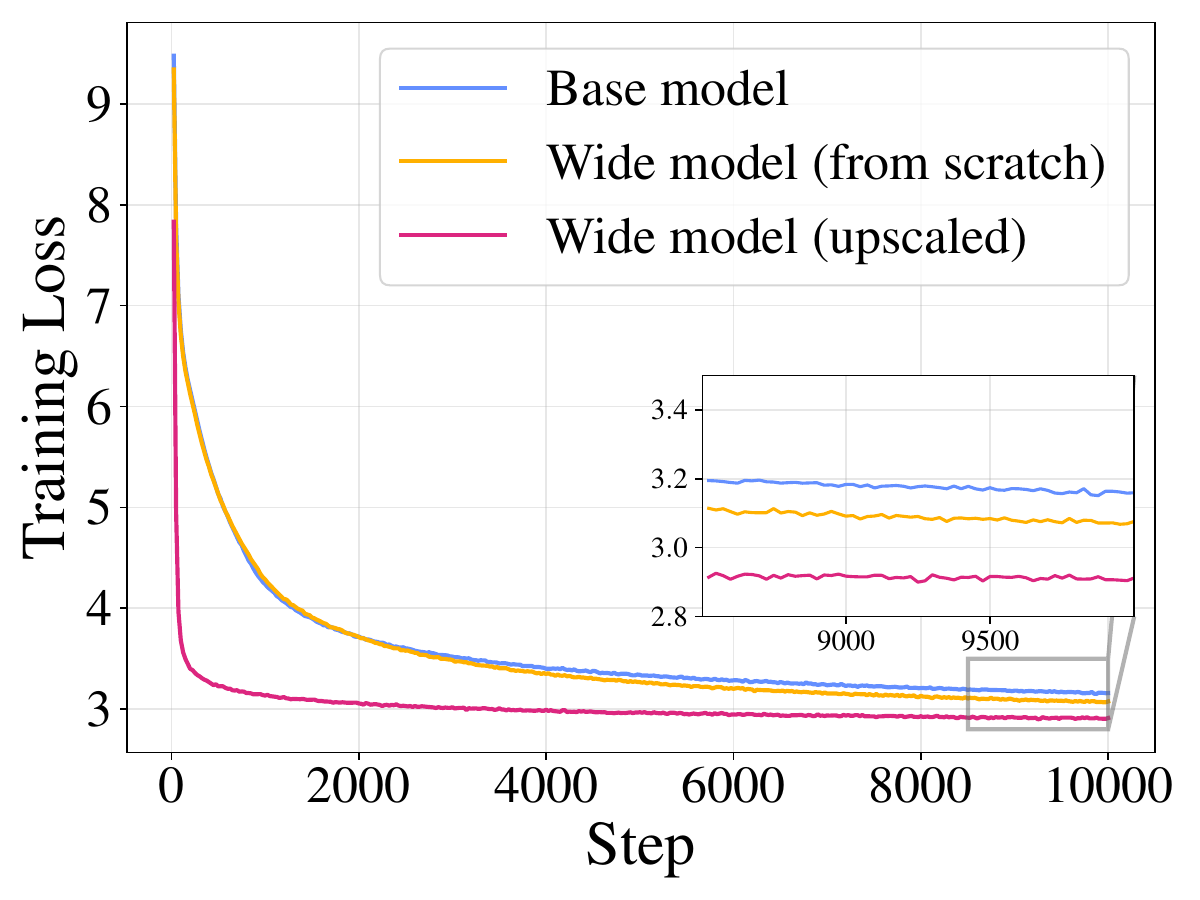}
    \caption{Training loss (zoomed) for GPT-2 trained with AdamW.
    The wide model has $320$ dimensions per head ($20\times$); the base model ($160$ dimensions per head, $10\times$) is included for reference.}
    \label{fig:gpt2_train_loss_zoom}
\end{figure}

Figure~\ref{fig:GPT2-AdamW-sweep} shows the hyperparameter sweep for GPT-2 with AdamW.
The same qualitative pattern is observed: an optimal noise level exists, with performance degrading at both lower and higher values.
\begin{figure}[h]
    \centering
    \hfill
    \begin{subfigure}[t]{0.38\textwidth}
        \centering
        \includegraphics[width=\linewidth]{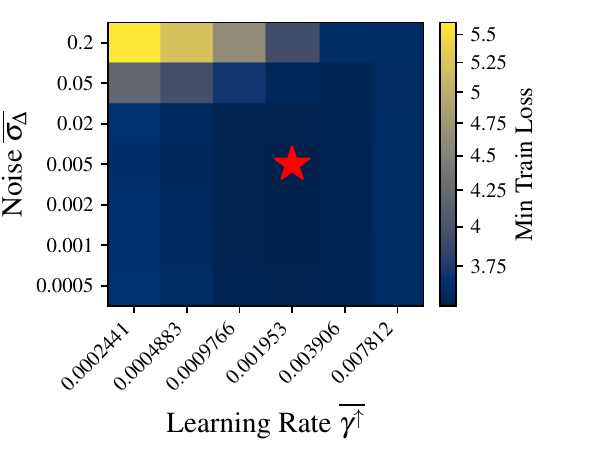}
        \caption{Sweep heatmap over learning rate $\overline{\gamma^\uparrow}$ and noise $\overline{\sigma_\Delta}$}
        \label{fig:gpt2_adamw_sweep_heatmap}
    \end{subfigure}
    \hfill
    \begin{subfigure}[t]{0.38\textwidth}
        \centering
        \includegraphics[width=\linewidth]{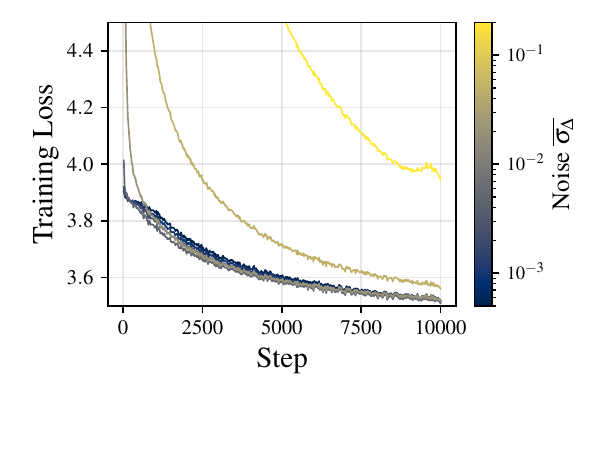}
        \caption{Training loss at best learning rate across noise levels}
        \label{fig:gpt2_adamw_sweep_noise}
    \end{subfigure}
    \hfill
    \caption{Hyperparameter sweep for GPT-2 with AdamW.
    (a) Minimum training loss over a grid of learning rates $\overline{\gamma^\uparrow}$ and noise magnitudes $\overline{\sigma_\Delta}$; the red star marks the optimum.
    (b) Training loss curves at the best learning rate for varying noise levels (colorbar).}
    \label{fig:GPT2-AdamW-sweep}
\end{figure}

\if\isArxiv 1
\subsection{Evaluation criteria}
\else
\subsection{Evaluation Criteria}
\fi
\label{appen:eval-criteria}

We evaluate our method along two axes: the cost of hyperparameter tuning (where we compare against Net2Net) and the total training cost of the upscaled model (where we compare against training from scratch).
Below, we first describe how FLOPs are estimated for each architecture, then detail each comparison and the assumptions involved.

\paragraph{FLOPs estimation.}
We estimate training FLOPs (forward plus backward pass) for each model using the standard $6N$ approximation~\citep{kaplan2020scaling}: the factor of six accounts for two multiply-accumulate operations in the forward pass and four in the backward pass (two for the input gradient and two for the weight gradient).

For the MLP, let $N$ denote the total number of weight-matrix parameters (excluding biases), $d_h$ the hidden dimension, and $L$ the number of layers.
The cost per sample is $\mathcal{F}_{\text{MLP}} = 6 \cdot (d_{\text{in}} \cdot d_h + (L-2) \cdot d_h^2 + d_h \cdot d_{\text{out}})$.
For ResNet-18, we sum the multiply-accumulate operations (MACs) over all convolutional and linear layers; each convolutional layer contributes $C_{\text{in}} \times C_{\text{out}} \times k^2 \times H_{\text{out}} \times W_{\text{out}}$ MACs, giving $\mathcal{F}_{\text{ResNet}} = 6 \cdot \sum_{\ell} \text{MACs}_\ell$ per sample.
For GPT-2, following~\citet{chowdhery2023palm}, the FLOPs per token are $\mathcal{F}_{\text{GPT-2}} = 6N + 12 L H Q T$, where the first term covers all weight-matrix multiplications and the second term accounts for the quadratic attention computation: each query of dimension $Q$ is dotted with $T$ keys across $H$ heads and $L$ layers.
Because attention operates on activations rather than stored parameters, its cost is not captured by the $6N$ term alone.

\paragraph{Tuning cost comparison with Net2Net.}
A key advantage of our method over Net2Net~\citep{chen2015net2net} is the ability to perform hyperparameter tuning at the small width $kn_0$ instead of the expensive target width $kn$.
Table~\ref{tab:flops-speedup} reports the resulting speedup for each experiment: an $S\times$ speedup means one can run $S\times$ as many tuning runs under the same compute budget, as measured by the approximate FLOPs per run.
Because transformer parameter counts scale as $\Theta(n^2)$ (due to width-squared dominance in the embedding and attention projection matrices), the speedup grows roughly as $(n/n_0)^2$; at practical ratios much larger than our experimental $n/n_0 = 10$, substantially greater savings are expected.

\begin{table}[h]
  \centering
  \begin{tabular}{lrrr}
  \toprule
   & \shortstack{\textbf{MLP}\\[2pt]\small$(n/n_0\!=\!5)$} & \shortstack{\textbf{ResNet}\\[2pt]\small$(n/n_0\!=\!4)$} & \shortstack{\textbf{GPT-2}\\[2pt]\small$(n/n_0\!=\!10)$} \\
  \midrule
  Net2Net (tune at target scale) & $1.0\times$ & $1.0\times$ & $1.0\times$ \\
  $\mu$pscaling (ours, tune at small scale) & $23.6\times$ & $16.0\times$ & $48.2\times$ \\
  \bottomrule
  \end{tabular}
  \caption{Tuning-cost FLOPs speedup of hyperparameter transfer vs.\ tuning the full-scale target model directly, at each model's experiment compression ratio.
  Speedup $= \mathcal{F}(kn)\,/\,\mathcal{F}(kn_0)$.}
  \label{tab:flops-speedup}
\end{table}

\paragraph{Accounting for tuning cost in the upscaled vs.\ from-scratch comparison.}
The preceding comparison quantifies the tuning cost \emph{relative to Net2Net}; we now address a separate question: should the tuning cost be charged to the upscaled model when comparing total training cost against training from scratch?
We note that the relative tuning cost decreases as $n/n_0$ grows, because tuning is performed at width $kn_0$ while the target trains at width $kn$.
When $n/n_0$ is sufficiently large, even fully charging the tuning cost to the upscaled model yields a total well below training from scratch.

We demonstrate this with our GPT-2 experiment ($n/n_0 = 10$), where the cost breakdown is as follows:
\begin{itemize}[nosep]
    \item Training from scratch: $37{,}929{,}809$ TFLOPs.
    \item Upscaled training (to reach the from-scratch terminal loss): $6{,}539{,}099$ TFLOPs.
    \item Hyperparameter tuning ($5 \times 5$ grid at width $kn_0 = 384$): $19{,}679{,}775$ TFLOPs.
    \item Total (upscaled + tuning): $26{,}218{,}874$ TFLOPs.
\end{itemize}
Even including the full hyperparameter tuning cost, the speedup is about $1.4\times$.
We emphasize that the $5 \times 5$ grid search is a conservative upper bound on the tuning cost: in practice, more efficient strategies such as random search or fewer trials would reduce this further.
At even larger scales ($n/n_0 \gg 10$), the tuning cost becomes negligible relative to the target training cost, making the savings even more pronounced.

\paragraph{Accounting for the base-model training cost in the upscaled vs.\ from-scratch comparison.}
Beyond hyperparameter tuning, one may also ask whether the base-model training cost should be attributed to the upscaled model.
In the main body, we evaluate in the setting where a trained base model is already available and its training cost is treated as sunk.
This is justified because model upscaling can be readily applied to open-source checkpoints from repositories such as HuggingFace, in which case the base-model training cost is naturally a sunk cost.
Accordingly, this evaluation criterion has been adopted in many prominent prior works on upscaling, including Net2Net~\citep{chen2015net2net}, bert2BERT~\citep{chen2022bert2bert}, LiGO~\citep{wang2023learning}, and Mango~\citep{pan2023reusing}.

Here, we additionally consider the other extreme: fully including the base-model training cost, i.e., comparing the cost of (base-model training $+$ upscaled-model training) against training the wide model from scratch.
We note that this pessimistic accounting is only valid if one trains the small model \emph{solely} for the purpose of upscaling and does so \emph{exactly once}.
In practice, the base model is often useful on its own (e.g., for serving smaller-scale inference) and can be reused for multiple rounds of upscaling, so the effective cost attributable to any single upscaling event is lower.
The actual benefit therefore depends on the specific application scenario and likely lies between these two extremes.

Even under this pessimistic accounting, upscaling remains more compute-efficient in all four settings:
\begin{itemize}[nosep]
    \item \textbf{MLP (SGD):} Base-model training: $740$ TFLOPs; upscaled model reaches from-scratch min loss at $598$ TFLOPs; total cost $= 1{,}338$ TFLOPs.
    Wide model from scratch: $11{,}326$ TFLOPs.
    Speedup: ${\approx}8.5\times$.
    \item \textbf{MLP (AdamW):} Base-model training: $740$ TFLOPs; upscaled model reaches from-scratch min loss at $3{,}031$ TFLOPs; total cost $= 3{,}771$ TFLOPs.
    Wide model from scratch: $11{,}326$ TFLOPs.
    Speedup: ${\approx}3.0\times$.
    \item \textbf{ResNet:} Base-model training: $66{,}547$ TFLOPs; upscaled model reaches from-scratch min train loss at $132{,}985$ TFLOPs; total cost $= 199{,}532$ TFLOPs.
    Wide model from scratch: $265{,}970$ TFLOPs.
    Speedup: ${\approx}1.3\times$.
    \item \textbf{GPT-2:} Base-model training: $10{,}615{,}488$ TFLOPs; upscaled model reaches from-scratch min loss at $6{,}539{,}099$ TFLOPs; total cost $= 17{,}154{,}587$ TFLOPs.
    Wide model from scratch: $37{,}929{,}809$ TFLOPs.
    Speedup: ${\approx}2.2\times$.
\end{itemize}

\if\isArxiv 1
\subsection{Verification of hyperparameter transfer}
\else
\subsection{Verification of Hyperparameter Transfer}
\fi
\label{appen:hp-transfer-exp}
We experimentally validate that, under Meta-algorithm~\ref{alg:upscaling}, hyperparameters transfer across model widths, consistent with the behavior reported in \citet{yang2022tensor} without upscaling.

\paragraph{MLP.}
\if \isArxiv 1
We use the same MLP architecture, dataset, and optimizer defaults as in Appendix~\ref{appen:MLP-exp}, with the sole modification of training for $100$ epochs to expedite experimentation.

For SGD with weight decay, Figure~\ref{fig:hp-transfer-combined}(b,e) reports the experiment results.
Panel (b) fixes the noise std base constant at $\overline{\sigma_\Delta} = 0.75$ (near-optimal) and varies the upscaled-model learning-rate base constant $\overline{\gamma^\uparrow}$.
Panel (e) fixes the learning-rate base constant at $\overline{\gamma^\uparrow} = 0.1$ and varies the amount of added noise, controlled by the noise std base constant $\overline{\sigma_\Delta}$.

Analogous results for AdamW appear in Figure~\ref{fig:hp-transfer-combined}(c,f).
Panel (c) fixes $\overline{\sigma_\Delta} = 1$ and varies $\overline{\gamma^\uparrow}$.
Panel (f) fixes $\overline{\gamma^\uparrow} = 0.0005$ and varies $\overline{\sigma_\Delta}$.

In all cases, the optimal hyperparameters generally transfer across widths.

\else
The following MLP results are omitted from the main paper.
We use the same MLP architecture, dataset, and optimizer defaults as in Appendix~\ref{appen:MLP-exp}, with the sole modification of training for $100$ epochs to expedite experimentation.

For SGD with weight decay, Figure~\ref{fig:hp-transfer-mlp}(a,b) reports the experiment results.
Panel (a) fixes the noise std base constant at $\overline{\sigma_\Delta} = 0.75$ (near-optimal) and varies the upscaled-model learning-rate base constant $\overline{\gamma^\uparrow}$.
Panel (b) fixes the learning-rate base constant at $\overline{\gamma^\uparrow} = 0.1$ and varies the amount of added noise, controlled by the noise std base constant $\overline{\sigma_\Delta}$.

Analogous results for AdamW appear in Figure~\ref{fig:hp-transfer-mlp}(c,d).
Panel (c) fixes $\overline{\sigma_\Delta} = 1$ and varies $\overline{\gamma^\uparrow}$.
Panel (d) fixes $\overline{\gamma^\uparrow} = 0.0005$ and varies $\overline{\sigma_\Delta}$.

In all cases, the optimal hyperparameters generally transfer across widths.

\begin{figure}[h]
    \centering
    \begin{subfigure}[t]{0.22\linewidth}
        \centering
        \includegraphics[width=\linewidth]{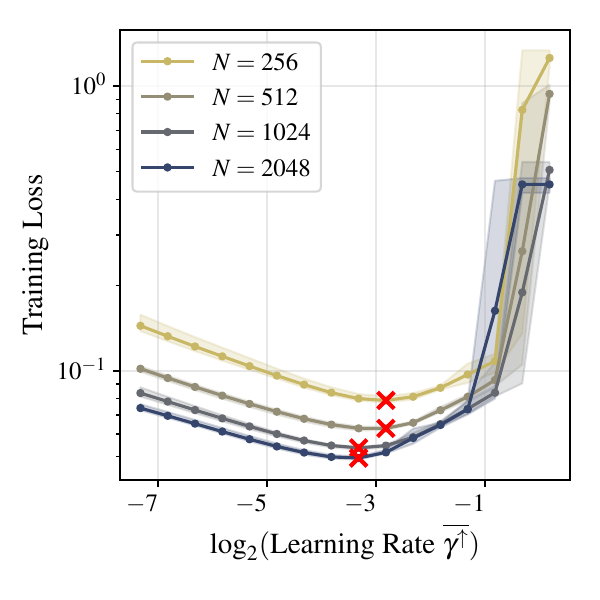}
        \caption{MLP under SGD: Learning-rate sweep with fixed additive noise.}
    \end{subfigure}
    \hspace{0.01\linewidth}
    \begin{subfigure}[t]{0.22\linewidth}
        \centering
        \includegraphics[width=\linewidth]{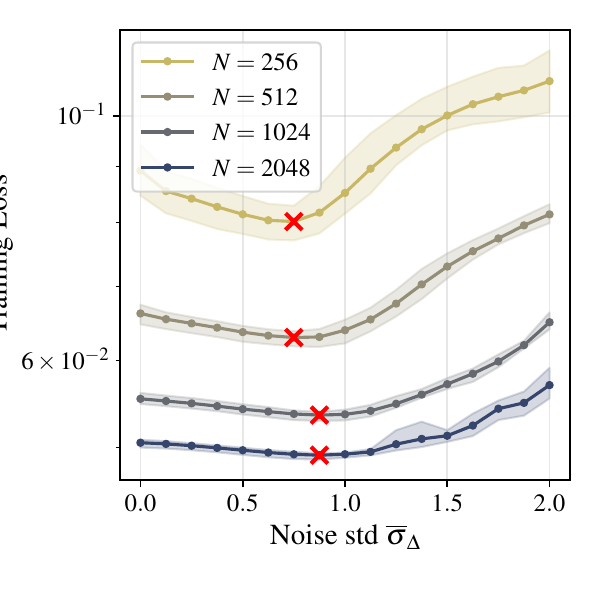}
        \caption{MLP under SGD: Noise sweep with fixed learning rate.}
    \end{subfigure}
    \hspace{0.01\linewidth}
    \begin{subfigure}[t]{0.22\linewidth}
        \centering
        \includegraphics[width=\linewidth]{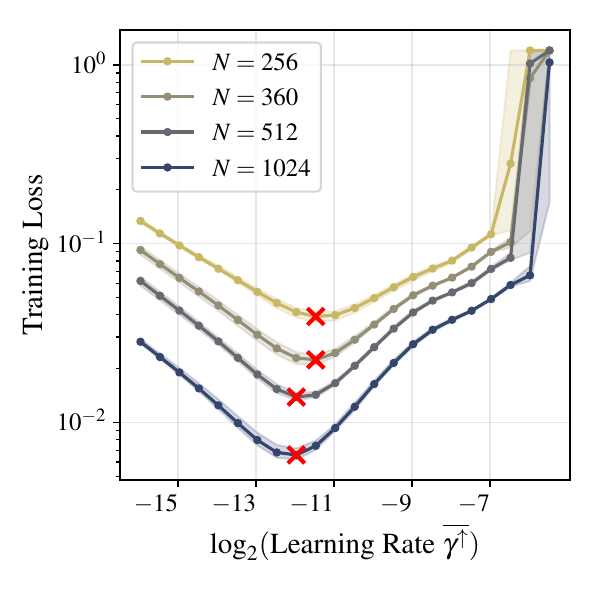}
        \caption{MLP under AdamW: Learning-rate sweep with fixed additive noise.}
    \end{subfigure}
    \hspace{0.01\linewidth}
    \begin{subfigure}[t]{0.22\linewidth}
        \centering
        \includegraphics[width=\linewidth]{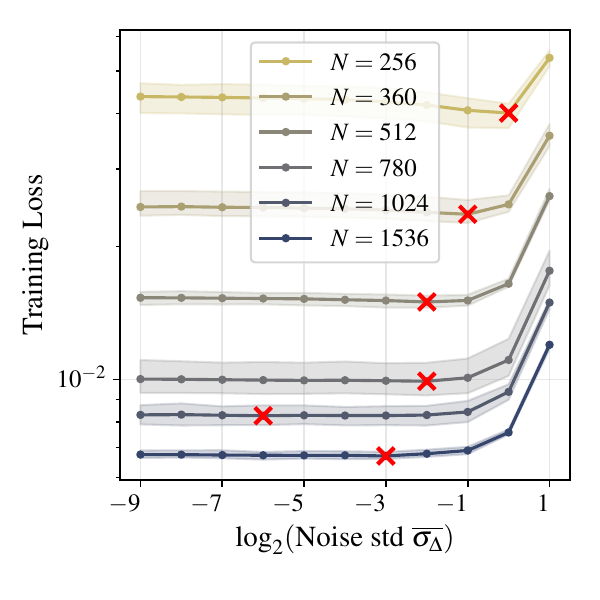}
        \caption{MLP under AdamW: Noise sweep with fixed learning rate. }
    \end{subfigure}
    \caption{Hyperparameter transfer for the upscaled MLP model.
    Panels (a,b) show results under SGD, and panels (c,d) show results under AdamW.
    Curves report the mean across five runs, with min--max ranges across seeds.
    In (d), more widths are evaluated than in (c) because of the slightly noisy behavior at $N=1024$.}
    \label{fig:hp-transfer-mlp}
\end{figure}
\fi

\paragraph{GPT-2.}
\if \isArxiv 1
The results for GPT-2 are reported in Figure~\ref{fig:hp-transfer-combined}(a,d).
\fi
We run hyperparameter transfer verification experiments using a smaller model with 8 layers and 8 attention heads. To further control the model size, we train a BPE tokenizer with an 8,192-word vocabulary on FineWeb-edu. As in~\ref{appen:GPT-exp}, we fix the number of heads and vary only the per-head dimension. We train base models with hidden sizes $n \in \{64,128,256,512, 1024\}$ (head dimensions $\{8,16,32,64, 128\}$), then upscale each by $k=2$ to $N \in \{128,256,512,1024,2048\}$ (head dimensions $\{16,32,64,128,256\}$). 
Across these settings, we sweep injected noise levels and learning rates. 
All base models are trained under $\mu$P with the same hyperparameters.

\paragraph{Discussion of the noise sensitivity profile.}
\if \isArxiv 1
The noise sweep panels (Figure~\ref{fig:hp-transfer-combined}(d,e,f)) reveal an asymmetric sensitivity profile around the optimal $\sigmadeltalbar$.
\else
The noise sweep panels (Figure~\ref{fig:hp-transfer-combined}(b) and Figure~\ref{fig:hp-transfer-mlp}(b,d)) reveal an asymmetric sensitivity profile around the optimal $\sigmadeltalbar$.
\fi
On the right side, choosing $\sigmadeltalbar$ slightly above the optimum leads to rapid performance degradation, indicating that excessive noise overwhelms the signal inherited from the base model.
On the left side, the curves are relatively flat below the optimum.
This flatness reflects the following theoretical property: as established in Section~\ref{sec:upscaling}, setting $\sigmadeltalbar = 0$ causes the upscaled model to evolve equivalently to the base model, as if no upscaling had occurred (dynamic equivalence).
Consequently, values of $\sigmadeltalbar$ well below the optimum effectively reduce to continued training of the base model with an adjusted learning rate, which for these tasks already achieves a terminal loss not far from the properly upscaled model.
The gap between this flat region and the optimum is precisely the benefit conferred by upscaling: breaking the symmetry sufficiently to escape the lower-dimensional parameter subspace and exploit the additional capacity of the wider model.
Upon closer inspection, the left portion of each curve does exhibit a smooth decrease toward the optimum, confirming that the choice of $\sigmadeltalbar$ matters even within the stable region.

\if\isArxiv 0
\section{Conclusions and Limitations}
\label{appen:limitation}

We present the first rigorous theoretical framework for hyperparameter transfer in the model upscaling setting.
Our contributions are threefold.
First, we extend the notion of function-preserving expansion (static equivalence) to \emph{dynamic equivalence}---proving that, without noise injection, the upscaled model’s entire training trajectory is equivalent to that of the base model---and establish both properties for general architectures and optimizers.
Second, and most critically, we establish \emph{hyperparameter transfer for upscaled systems}: by extending the Tensor Programs framework to accommodate upscaling, we show that the learning rate and injected noise magnitude can be tuned on a small, inexpensive proxy system and transferred directly to the target scale, substantially reducing the cost of hyperparameter tuning compared to prior upscaling methods such as Net2Net.
Third, our infinite-width limit analysis rigorously characterizes the training dynamics of upscaled models, opening the door to future theoretical investigations connecting model equivalence, optimization, and scaling.
Importantly, our framework applies universally across standard architectures and optimizers, in contrast to prior upscaling methods that are architecture-specific or purely empirical.
Experiments on MLPs, ResNets, and GPT-2 consistently validate the theory’s predictions: hyperparameters transfer robustly across widths, and upscaled models converge faster while achieving lower terminal training loss than models trained from scratch.

Several limitations merit discussion.
First, our framework inherits the assumptions underlying $\mu$P: while $\mu$P is rigorously proven to yield optimal training dynamics in the infinite-width limit~\cite{yang2020feature,yang2021tensor}, formal proofs of hyperparameter transfer exist only in simple settings~\cite{hayou2025proof}.
Our experiments confirm that transfer is effective in practice, but it may not be uniformly robust across all tasks and architectures.
Moreover, analogous to $\mu$Transfer~\cite{yang2022tensor}, our framework transfers only \emph{certain} hyperparameters---the learning rate and injected noise magnitude---and does not extend to others such as the growth factor $k$ or the base-model scale $n$, whose optimal values are likely task- and data-specific and may require complementary approaches such as scaling-law analyses~\cite{du2024stacking}.
Second, our scope is restricted to width upscaling; extending the framework to joint width-and-depth upscaling, as required by many practical deployments, is a promising direction for future work.
Third, effective transfer empirically requires the tuning width $n_0$ to be sufficiently large (often exceeding $100$ units) so that $\mu$P’s asymptotic behavior is a good approximation, while the cost savings grow with the target-to-tuning ratio $n/n_0$.
This makes the method most advantageous at large target widths---precisely the regime where hyperparameter tuning at scale is most expensive---but empirically validating these gains requires computational resources beyond our academic setting.
Finally, our analysis addresses training dynamics rather than generalization, optimization landscape, or implicit bias.
Performance may therefore degrade in settings prone to overfitting (as observed in the ResNet experiment on CIFAR-100), and understanding generalization under upscaling remains an open question.
\fi